\author{\IEEEauthorblockN{Mercedes Garcia-Salguero\IEEEauthorrefmark{1},
Jesus Briales\IEEEauthorrefmark{2} and Javier Gonzalez-Jimenez\IEEEauthorrefmark{3}} \\
\IEEEauthorblockA{Machine Perception and Intelligent Robotics (MAPIR) Group, System Engineering and Automation Department, \\ 
University of Malaga, Campus de Teatinos, 29071 Malaga, Spain \\
Email: \IEEEauthorrefmark{1}mercedesgarsal@uma.es,
\IEEEauthorrefmark{2}jesusbriales@uma.es,
\IEEEauthorrefmark{3}javiergonzalez@uma.es}}
\newcommand{\sphere}{\SSpace^2}
\newcommand{\Ess}{\matE} 
\newcommand{\eightpt}{\method{8pt}}
\providecommand{\cross}[1]{\ensuremath{[\boldsymbol{#1}]_{\times}}}
\providecommand{\symm}[1]{\ensuremath{\mathbb{S}^{#1}}}
\providecommand{\symmPlus}[1]{\ensuremath{\mathbb{S}_+ ^{#1}}}
\providecommand{\Reals}[1]{\ensuremath{\mathbb{R}^{#1}}}
\newcommand{\fHatE}{ f(\hat{\Ess})}
\newcommand{\dHatRlx}{ d_{\text{R}}(\vHatLambda)}
\newcommand{\fHatRlx}{ f_{\text{R}}(\vHatX)}
\newcommand{\dOptRlx}{ d^{\star}_{\text{R}}}
\newcommand{\fOpt}{ f ^{\star}}
\newcommand{\fOptRlx}{ f ^{\star}_{\text{R}}}
\newcommand{\Me}{\mathcal{M}_{\boldsymbol{E}}}
\newcommand{\SetEssentialMatrices}{\mathbb{E}} 
\newcommand{\SetRelaxedEssentialMatrices}{\mathbb{E}_\text{R}}
\newcommand{\constrSet}{\mathcal{C}}
\newcommand{\constrSetRelaxed}{\constrSet_{\text{R}}}
\newcommand{\constrIdx}{\text{2}}
\newcommand{\Jset}{\boldsymbol{J}_{\constrSet}}
\newcommand{\JsetIdx}{\boldsymbol{J}_{\constrSet - \constrIdx}}
\newcommand{\vHatLambda}{\hat{\bm{\lambda}}}
\newcommand{\vHatX}{\hat{\boldsymbol{x}}}
\newcommand{\hatE}{\hat{\Ess}}
\newcommand{\vStarX}{\boldsymbol{x}^{\star}}
\newcommand{\vStarLambda}{\bm{\lambda}^{\star}}
\newcommand{\fE}{f(\boldsymbol{E})}
\newcommand{\tauGap}{\tau_{\text{gap}}}
\newcommand{\tauMu}{\tau_{\mu}}
\newcommand{\obsi}{\boldsymbol{f}_i}
\newcommand{\obsip}{\boldsymbol{f}_i'}
\newcommand{\manopt}{\textsc{manopt}\xspace}
\newcommand{\Cfirst}{\mathfrak{C}_{23}}
\newcommand{\Csecond}{\mathfrak{C}_{45}}
\newcommand{\Cthird}{\mathfrak{C}_{356}}
\newcommand{\Call}{\mathfrak{C}_{\text{all}}}
\newcommand{\Essential}{\boldsymbol{E}}
\newcommand{\uts}{\emph{up-to-scale}\xspace}
\newtheorem{theorem}{Theorem}[section]
\newtheorem{corollary}{Corollary}[theorem]
\newcommand{\figureSize}{1.\linewidth}
\newtheorem{definition}{Definition}[section]
\renewcommand{\vec}[1]{\text{vec}(#1)}
\title{Certifiable Relative Pose Estimation}
\begin{document}
\maketitle

\begin{abstract}

In this paper
we present the first 
fast optimality certifier for
the non-minimal version of the Relative Pose problem
for calibrated cameras from epipolar constraints.
The proposed certifier is based on Lagrangian duality and relies on a novel closed-form expression for dual points.
We also leverage an efficient solver that performs local optimization on the manifold of the original problem's non-convex domain. 
The optimality of the solution is then checked via our novel fast certifier.
The extensive conducted experiments demonstrate that, despite its simplicity,
this certifiable solver performs excellently on synthetic data, repeatedly attaining the (certified \textit{a posteriori}) optimal solution and shows a satisfactory performance on real data. 

\smallskip
\begin{IEEEkeywords}
Relative Pose; Essential Matrix; Epipolar constraint; Convex programming; Certifiable algorithm; Linear Independence Constraint Qualification.
\end{IEEEkeywords}
\end{abstract}


\section{Introduction} %
In this work we consider the central calibrated relative pose problem in which, given a set of \textit{N} pair-wise feature correspondences between two images coming from two calibrated cameras, one seeks the relative pose (rotation $\rot$ and translation $\trans$, \uts) between both cameras (see Figure \eqref{fig:relPoseKneip}) that minimizes the epipolar error.

\begin{figure}[b]
    \centering
        \centering
        \includegraphics[width=0.8\linewidth]{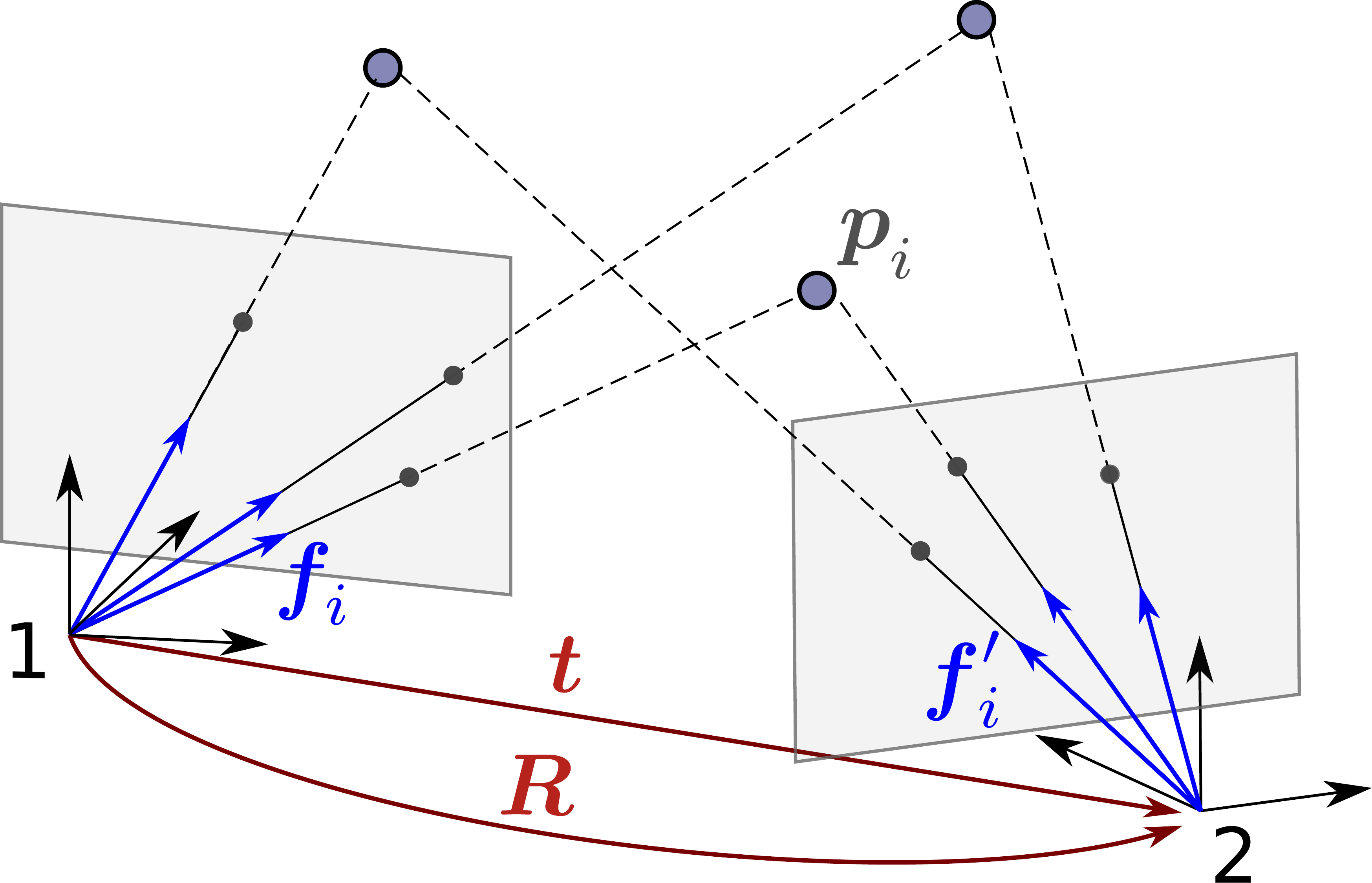}
        \caption{In the relative pose problem, we aim to estimate the relative relative rotation $\rot$ and the relative translation $\trans$ \uts between two calibrated cameras $1-2$ given a set of N correspondence pairs of unit bearing vectors $\{\obsip, \obsi\}_{i=1}^N$.}
    \label{fig:relPoseKneip}
\end{figure}

Estimating the relative pose between two calibrated views of a scene is specially relevant for visual odometry and also as a building block for more complex problems like Structure from Motion (SfM)~\cite{triggs1999bundle} or Simultaneous Localization and Mapping (SLAM)~\cite{mur2015orb, gomez2019pl}.

Whereas the gold standard for relative pose estimation is posing this as a 2-view Bundle Adjustment problem, this is a hard problem and it is common practice (see \eg~\cite{ozyecsil2017survey}) to bootstrap its initialization with a simpler formulation based on the epipolar error.

Despite this simplification, the problem to optimize is still non-convex and presents local minima~\cite{kneip2013direct}, which hinders the application of iterative approaches.
These suboptimal minima may lie arbitrarily far from the optimal solution, yet still explain the input data. Figure \eqref{fig:epipolarLinesExample} illustrates this situation, where the local minimal solution (green) leads to a relative pose $[\rot_{\text{loc}}, \trans_{\text{loc}}]$ far from the optimal solution (blue) $[\rot_{\text{opt}}, \trans_{\text{opt}}]$. 
Note that, 
in the presence of noise, the optimal solution may not longer be the ground truth pose.

 \begin{figure}[t]
    \centering
        \centering
        \includegraphics[width=\figureSize]{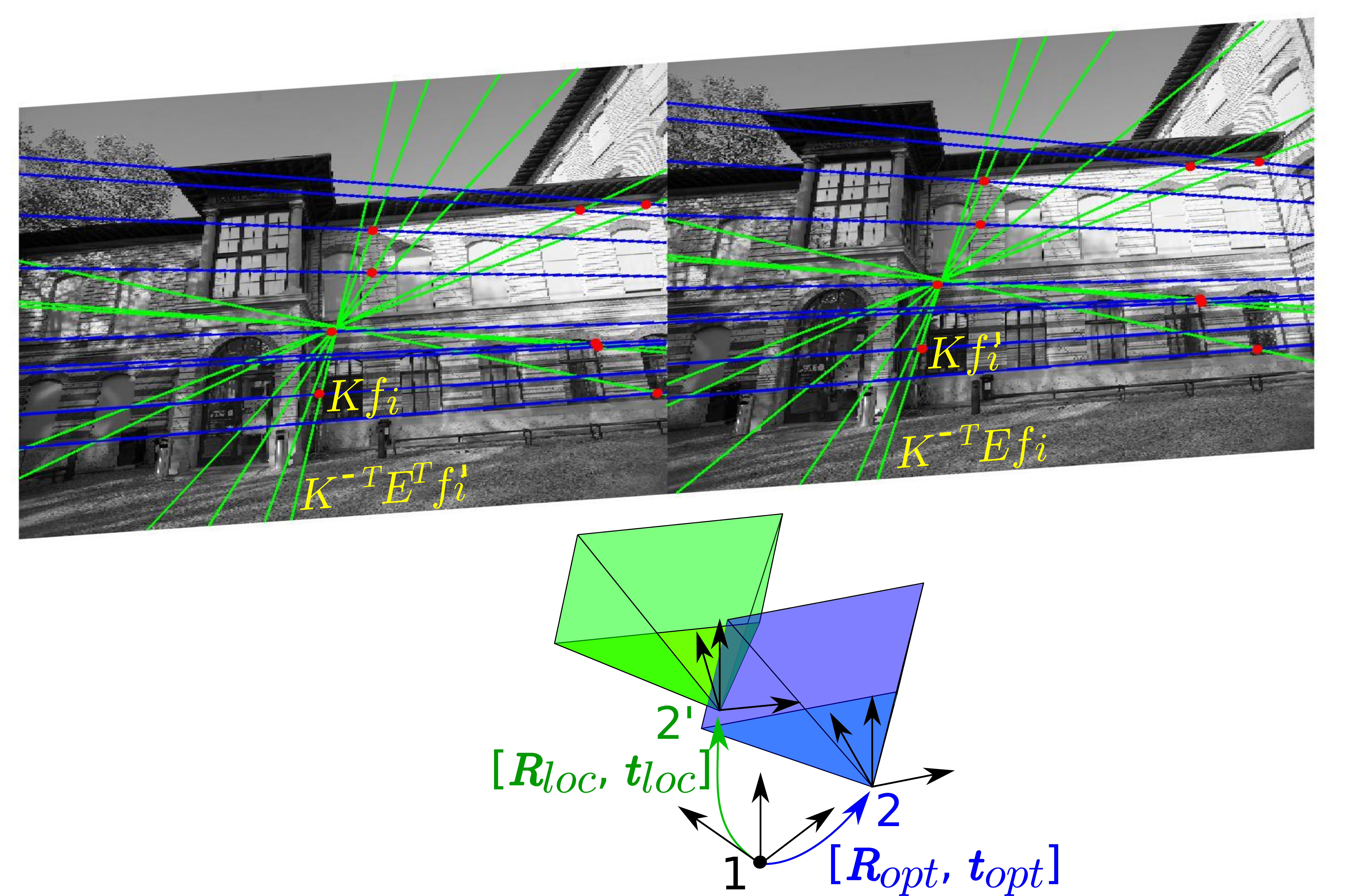}
    \caption{Suboptimal local minima (green) may lie far from the (globally) optimal solution (blue) and hinders subsequent algorithms, \textit{e.g.}, Bundle Adjustment, even if it agrees with the data $\{\obsi, \obsip \}_{i=1}^N$. For the sake of exactness, we depict the image counterparts of the data in pixels by applying the intrinsic camera parameters through $\boldsymbol{K}$, assuming both cameras have the same $\boldsymbol{K}$ and no lens distortion~\cite{hartley2003multiple}. 
   } 
    \label{fig:epipolarLinesExample}
\end{figure}
A suboptimal minima thus represents a wrong solution that, when passed to incremental methods can quickly cascade leading to failure. 
When passed to global methods, where the solution is averaged with many other estimations,
it provokes two negative effects. First, we are missing the opportunity to provide the method with more valid data, which, when healthy, improves the quality of the estimate. Second and most important, this generally far from correct solution will turn into anything from mild to gross outliers,
introducing biases and hindering the performance of global estimators in general. 
Thus, suboptimal local minima should be detected and avoided, yet the Relative Pose problem is still non convex and hard to solve globally.

In this context, a recent line of research has evinced that for some so-far-considered hard problems
(in Geometric Computer Vision but also in other fields \cite{bandeira2016note}),
though worst-case instances can remain intractable in terms of resolution or for the certification of optimality, 
real-world instances do not usually tend to these worst-cases. 
Interestingly, for many problem instances found in practice
it is possible to attain and even certify optimality.

Certifiable algorithms can be attained in multiple ways.
Perhaps one of the most straightforward realizations
consists of characterizing a \emph{tight} convex relaxation
for the given problem instance
and jointly solving for its primal and dual problems~\cite{boyd2004convex},
recovering at the same time a solution to the original problem
and a (dual) certificate of optimality.
This is the recent proposal of Briales \etal \cite{briales2018certifiably} or Zhao \cite{Zhao2019} for the Relative Pose problem, where they propose a (\emph{probably}) tight \textit{Semidefinite Problem} (SDP) relaxation that can be solved efficiently (in polynomial time).

While the approach above is simple and often provides a certifiable solution to the Relative Pose problem,
it is not the only nor the most efficient way to devise a certifiable solver \cite[Sec.~1]{bandeira2016note}.
One can devise a much faster certifiable approach 
by combining a fast solver for the original problem
(one that returns the optimal solution with high probability but no guarantees)
with a fast standalone certification method that produces an optimality (dual) certificate leveraging this solution (see \eg \cite{briales2016fast, carlone2015duality}), which finally brings us to the core contribution of the present work.

\textbf{Contributions}
In this line, we conceive a novel closed-form (linear) approach which allows to certify \textit{a posteriori} if the potentially optimal solution to a Relative Pose problem instance is indeed the optimal one.
With this certifier available,
we unblock the ability to build faster certifiable solvers
in the fashion proposed by Bandeira \cite{bandeira2016note}.
\ie by combining fast heuristic solvers with a fast optimality certifier.
To prove the value of this approach in the context of the Relative Pose problem,
we propose
a novel, simple and efficient iterative Riemannian Trust-Region solver that operates directly on the essential matrix manifold and tends to return the optimal solution when initialized, for example, 
with the classical 8-points (\eightpt) algorithm. The conducted experiments (in Section \eqref{sec:experiments}) show that, in practice, one can bootstrap the iterative method with the trivial identity matrix or a random essential matrix and still retrieve the optimal solution, mostly when considering more than 40 correspondences.
Combining both, we get a novel \emph{certifiable} approach for solving the Relative Pose problem. This pipeline represents our main practical contribution and we refer the reader who is only interested in its application to Section \eqref{sec:proposedPipeline} for a concise explanation.
Moreover, given the simplicity of its component blocks,
we see great potential on this kind of pipeline
to be streamlined,
in order to achieve excelling computational times, so that it becomes the new go-to state-of-the-art solver for the community.

The main technical contributions contained in the paper that were required to achieve the above are:

\begin{itemize}
  \item
  Characterize a family of relaxed quadratic formulations
  for the Relative Pose problem,
  whose Karush–Kuhn–Tucker (KKT) conditions fulfill
  the Linear Independence Constraint Qualification (LICQ) (in Section \eqref{sec:problemformulation}).
  \item
  Based on the above, design a fast approach
  to compute the potential dual candidate solution in closed-form,
  given the (potentially) optimal solution to the original problem (in Section \eqref{sec:lagrangiandual}).
  \item
  Define how to perform optimality certification,
  given the candidate dual solution from the approach above (in Algorithm \eqref{alg:naiveverification}).  
  \item
  Develop the required calculus 
  to implement the new iterative solver taking advantage of the optimization prototyping framework \manopt~\cite{manopt} (in Section \eqref{sec:proposedPipeline}). 
\end{itemize}

Extensive experiments with both synthetic and real data, covering a broad set of problem regimes, support the claims of this paper and show that our proposed pipeline performs excellently on synthetic data, consistently reaching the optimal solution with few iterations of the iterative solver when initialized with the \eightpt algorithm and certifying this optimality, for all but a few exceptional (0.53\%) cases among all tested problem instances. The preliminary results on real data show a satisfactory performance, while still leaving margin for future improvement. 
Note that, although this empirically supports that the strong duality condition usually holds and that the proposed relaxed formulation for the Relative Pose problem is indeed tight, a formal demonstration is not available (yet).

Finally, please notice that, although the proposed pipeline estimates the essential matrix, the relative pose (rotation and translation) can be recovered from it by classic Computer Vision algorithms~\cite{hartley2003multiple}.

\section{Related Work} \label{sec:related-work}

\subsection{Minimal Solvers}
The essential matrix has five degrees of freedom (three from 3D rotation, three from 3D translation and one less from the scale ambiguity) and therefore, only five correspondences (except for degenerate cases~\cite{hartley2003multiple}) are required for its estimation. This is the so-called minimal problem and since it provides us with an efficient hypothesis generator, it can be embedded into RANSAC paradigms to gain robustness against wrong correspondences, \ie outliers~\cite{kneip2013direct, botterill2011refining}. In this context, different works~\cite{stewenius2006recent, nister2004efficient} have reported efficient algorithms to solve this minimal problem, although they involve nontrivial (tenth degree) polynomial systems which are commonly solved by methods based on polynomial ideal theory and Gröbner basis, which are not always numerically robust~\cite{kukelova2007two}. Alternative approaches have tried to overcome this instability, such as \cite{kukelova2008polynomial}, where it was proposed an eigenvalue-based method, more stable than state-of-art approaches~\cite{stewenius2006recent, nister2004efficient} for the 5 and 6 points algorithms.

\subsection{8-point Algorithm}
The seminal work of Longuet-Higgins in \cite{longuet1981computer} showed for the first time that the relative pose between two calibrated views is encoded by the essential matrix and proposed the (linear) 8-point (\eightpt) algorithm, which led to many linear and nonlinear algorithms, among others
\cite{hartley1997defense, helmke2007essential, ma2001optimization}. Despite being designed for the fundamental matrix estimation, the \eightpt algorithm can be adapted to the essential matrix \ie, for calibrated cameras. Special attention must be given to the celebrated \emph{normalized 8-point} algorithm proposed by Hartley in \cite{hartley1997defense}. 
However, in both cases the solution is not guaranteed to be an essential matrix~\cite{hartley2003multiple}, but an approximation. Nevertheless, due to its simplicity, the \eightpt algorithm can be considered as the state-of-the-art initialization for further refinements.

\subsection{Iterative Optimization on the Essential Matrix Manifold}
Minimal solvers or the 8-point algorithm typically provide suboptimal solutions for the non-minimal N-point problem and therefore it is a common practice to refine these initial estimates by local, iterative methods~\cite{botterill2011refining}. 
Contrary to optimization problems on flat (Euclidean) spaces, these local optimization methods must respect the intrinsic constraints of the search space. 
In this context,
the essential matrix manifold has been characterized via different, yet (almost) equivalent formulations.
As it was shown in \cite{helmke2007essential}, these parameterizations may lead to different performances and convergence rates for non-linear optimization methods.
In \cite{lui2013iterative}, an iterative method built upon the 5 points estimation, which directly solves for the relative rotation, was proposed, achieving frame-rate speed.
In \cite{ma2001optimization} it was proposed the refinement of the initial estimation from the \eightpt algorithm on the manifold of the essential matrices, although the approach only converges in a small neighborhood of the true solution for their chosen manifold parameterization. 
Helmke \etal~\cite{helmke2007essential} improved the convergence properties of the iterative solver by proposing a different parameterization of said manifold.  
Recently, Tron and Daniilidis~\cite{tron2017space} present a characterization of the essential matrix manifold as a quotient Riemannian manifold which takes into account the symmetry between the two views and the peculiarities of the epipolar constraints. Interestingly, the reported optimization instances were able to converge in five iterations.

\subsection{Globally Optimal Solvers}
Despite its attractive as fast solvers, the above-mentioned proposals do not guarantee nor certify if the retrieved solution is optimal. In fact, finding said guaranteed optimal solutions for non-convex problems, such as the Relative Pose problem, is in general a hard task.
In \cite{yang2014optimal}, the authors extended the approach in \cite{tron2017space} to incorporate the presence of outliers as an inlier-set maximization problem, which was solved in practice via Branch-and-Bound global optimization. 
In \cite{hartley2007global}, it was first proposed the estimation of the essential matrix under a $L_{\infty}$ cost function by Branch-and-Bound.
In \cite{kneip2013direct}, an eigenvalue formulation equivalent to the algebraic error was proposed and solved in practice by an efficient Levenberg-Marquardt scheme and a globally optimal Branch-and-Bound.
Nonetheless, Branch-and-Bound presents slow performance and exponential time in worst-case scenarios.

A different approach which also certifies the optimality of the solution \textit{a posteriori}, relies on the re-formulation of the original problem as a Quadratically Constrained Quadratic Program (QCQP).
QCQP problems are still in general NP-hard to solve. However, one can relax this QCQP into a Semidefinite Relaxation Program (SDP) by Shor's relaxation \cite{boyd2004convex, ding2007efficient} and solve this SDP by off-the-shelf tools in polynomial time.
If the convex relaxation happens to be tight, one can recover the solution to the original problem with an optimality certificate.
This was the approach followed 
in \cite{briales2018certifiably}, where it was shown for the first time
that the non-minimal (epipolar) Relative Pose problem for calibrated cameras
could be formulated as a quadratic (QCQP) problem
whose Shor's relaxation 
resulted in an empirically tight (SDP) convex relaxation.
Beyond its value as a proof-of-concept convex approach,
the QCQP formulation chosen by the authors there resulted in
a quite large SDP problem,
which led to computation times of 1 second with a \textsc{matlab} implementation
using SDPT3 \cite{toh1999sdpt3} as solver. 
A subsequent contribution by Zhao \cite{Zhao2019} follows a similar approach
and proposes an alternative (still equivalent) QCQP formulation,
featuring a much smaller number of variables and constraints
(with the essential matrix at its core).
Applying Shor's relaxation to this formulation results in
a much smaller, although not always tight, SDP relaxation
with 78 variables and 7 constraints.
Thanks to the significantly smaller size of the SDP problem, a C++ based implementation 
and by fine-tuning an off-the-shelf solver like SDPA \cite{yamashita2003implementation}
they attain an efficient solver with times around 6ms.
This is fast by SDP solver standards,
but not as fast as desirable by real-time Computer Vision standards~\cite{mur2015orb, song2013parallel}.
Although tractable, solving these convex problems from scratch may not be the most efficient way to obtain a solution. 
As an alternative approach one may found the so-called Fast Certifiable Algorithms
recently characterized and motivated in \cite{bandeira2016note, yang2020teaser}. These algorithms typically leverage the existence of an optimality certifier
which, given the optimal solution obtained by any means,
may be able to compute a (dual) certificate of optimality from it.
A straightforward approach to get such dual certificate relies on the resolution of the \textit{dual problem}~\cite{boyd2004convex} from scratch, whose optimal cost value always provides a lower bound on the optimal objective for the original problem. In many real-world problem instances this bound is \emph{tight}, meaning both cost values are the same up to some accuracy and one can certify optimality from it. However, this naive approach would still be as slow as directly solving the problem via its convex relaxation.

On the other hand, in the context of Pose Graph Optimization it has been shown that the (potentially optimal) candidate solution to the original problem can be leveraged to obtain a candidate dual certificate in closed-form, providing a much faster way to solve the dual problem ~\cite{carlone2015duality, briales2016fast, carlone2015lagrangian}.
This enables a fast optimality verification approach with which we can augment fast iterative solvers with no guarantees into Fast Certifiable Algorithms~\cite{briales2017cartan,rosen2019se} while maintaining their efficiency. 
For the Relative Pose problem, though, such standalone fast optimality certifier has not been proposed yet, as none of the SDP relaxations previously proposed~\cite{briales2018certifiably,Zhao2019} allow for computing a dual candidate in closed-form as there exists for the Pose Graph Optimization case.


\section{Notation}
In order to make clearer the mathematical formulation in the paper, we first introduce the notation used hereafter. Bold, upper-case letters denote matrices, \eg $\boldsymbol{E, C}$; bold, lower-case denotes (column) vector, \eg $\trans, \boldsymbol{x}$; and normal font letters denote scalar, \eg $a, b$.
We reserve $\lambda$ for the Lagrange multipliers (Section \eqref{sec:lagrangiandual}) and $\mu$ for eigenvalues. Additionally, we will denote with $\mathbb{R}^{n \times m}$ the set of $n \times m$ real-valued matrices, $\symm{n} \subset \mathbb{R}^{n \times n}$ the set of symmetric matrices of dimension $n \times n$ and $\symmPlus{n}$ the cone of positive semidefinite (PSD) matrices of dimension $n \times n$. A PSD matrix will be also denoted by $\succeq$ , \ie, $\boldsymbol{A} \succeq 0 \Leftrightarrow \boldsymbol{A} \in \symmPlus{n}$. We denote by $\kron$ the Kronecker product and by $\boldsymbol{I}_n$ the (square) identity matrix of dimension $n$.
The operator $\vec{\matB}$ vectorizes the given matrix $\boldsymbol{B}$ column-wise. We denote by $\cross{t}$ the matrix form for the cross-product with a 3D vector $\trans = [t_1, t_2, t_3]^T$, \ie, $\trans \times (\bullet) = \cross{t} (\bullet)$ with
\begin{equation}
    \cross{t} = 
    \begin{bmatrix}
    0 & -t_3 & t_2 \\
    t_3 & 0 & -t_1 \\
    - t_2 & t_1 & 0 
    \end{bmatrix}.
\end{equation}
Last, we employ the subindex $R$ across the board to indicate a relaxation of the element \wrt the element without subindex. For example, we denote by $\SetRelaxedEssentialMatrices$ the set that is a relaxation of $\SetEssentialMatrices$ and therefore, a superset of the latter, \ie $\SetRelaxedEssentialMatrices \supset \SetEssentialMatrices$.

\section{Relative Pose Problem Formulation} \label{sec:problemformulation}

We consider the central calibrated Relative Pose problem in which one seeks the relative rotation $\rot$ and translation $\trans$ between two cameras, given a set of $N$ pair-wise feature correspondences between the two images coming from these distinctive viewpoints. In this work, the pair-wise correspondences are defined as pairs of (noisy) unit bearing vectors $(\obsi, \obsip)$ which should point from the corresponding camera center towards the same 3D world point.
A traditional way to face this problem is by introducing the essential matrix $\Ess$~\cite{longuet1981computer, hartley2003multiple}, a $3 \times 3$ matrix which encapsulates the geometric information about the relative pose between two calibrated views.
The essential matrix relates each pair of corresponding points through the \textit{epipolar constraint} $\obsi^T \Ess \obsip = 0$, provided observations are noiseless. With noisy data, however, the equality does not hold and $\obsi^T \Ess \obsip  = \epsilon_i$ defines what is called the \emph{algebraic error}. 

In the literature one can find previous works which exploit this relation and seek the essential matrix $\boldsymbol{E}$ that minimizes the squared algebraic error $\epsilon ^2$ and its variants~\cite{ma2001optimization, Zhao2019, helmke2007essential}. 
We will follow this approach and address the Relative Pose problem as an optimization problem. The cost function can be written as a quadratic form of the elements in $\boldsymbol{E}$ by defining the positive semi-definite matrix 
$\symmPlus{9} \ni \boldsymbol{C} = \sum_{i=1} ^N \boldsymbol{C}_i$, with $\boldsymbol{C}_i =  (\obsip\kron \boldsymbol{f}_i)(\obsip \kron \obsi)^T \in \symmPlus{9}$. Formally, the Relative Pose problem reads:

\begin{equation}
    \tag{O} \label{eq:originalproblem}
    \fOpt = \min_{\Ess \in \SetEssentialMatrices} \underbrace{\sum_{i = 1} ^N (\obsi^T \Ess \obsip ) ^2}_{\fE} = 
    \min_{\Ess \in \SetEssentialMatrices} \vec{\matE}^T \boldsymbol{C} \vec{\matE}.
\end{equation}
See the \suppl 
for a formal proof of this equivalence.

\subsection{The Set of Essential Matrices}
$\SetEssentialMatrices$ above stands for the set of (normalized) essential matrices, typically defined as
\begin{equation}
\label{eq:Me:[t]xR}
    \SetEssentialMatrices \doteq \{ \Ess \in \Reals{3 \times 3} \;|\; \Ess = \cross{t} \rot , \rot\in \rotSpace,\ \trans \in \sphere \}.
\end{equation}
Note that in \eqref{eq:Me:[t]xR} the translation is identified with points in the 2-sphere $\sphere \doteq \{ \trans \in \Reals{3} | \trans^T \trans = 1 \}$ since the scale cannot be recovered for central cameras~\cite{hartley2003multiple}. The rotation is represented by a $3 \times 3$ orthogonal matrix with positive determinant $\rot \in \rotSpace$ and $\rotSpace \doteq \{ \rot \in \Reals{3 \times 3} | \rot^T \rot = \boldsymbol{I}_3,\ det(\rot) = +1$ \}.

Other equivalent parameterizations are possible for this set~\cite{tron2017space,helmke2007essential}. E.g. Faugeras and Maybank~\cite{faugeras1990motion} proposed:
\begin{equation}
    \SetEssentialMatrices \doteq \{ \Ess \in \Reals{3 \times 3} \;|\; \Ess \Ess ^T = \cross{t}\cross{t}^T , \trans^T \trans = 1\}
    . \label{eq:Me:EEt}
\end{equation}
This parameterization, recently leveraged by Zhao in~\cite{Zhao2019}, features a lower number of variables (12) and constraints (7), yet it provides excellent results in the context of building SDP relaxations for the relative pose problem~\cite{Zhao2019}, resulting in a smaller problem than relaxations based on previous formulations \eqref{eq:Me:[t]xR}~\cite{briales2018certifiably}.

\subsection{Relaxed Formulation of the Relative Pose Problem}
Despite its advantages, the parameterization by Faugeras and Maybank \eqref{eq:Me:EEt} above
still does \emph{not} allow for the development of a \emph{fast optimality certifier} for the Relative Pose problem
in the fashion of that proposed \eg for Pose Graph Optimization in \cite{briales2016fast,carlone2015lagrangian}.
To attain this kind of certifiers,
it will be necessary to leverage a \emph{relaxed} version $\SetRelaxedEssentialMatrices$ of the essential matrix set $\SetEssentialMatrices$: 

\begin{equation}
    \SetEssentialMatrices \subset \SetRelaxedEssentialMatrices \doteq \{ \Ess \in \Reals{3 \times 3} \;|\; h_i(\Ess,\trans)=0, \forall h_i \in \constrSetRelaxed;\ \trans \in \Reals{3}\},
    \label{eq:Me:min}
\end{equation}
with $\constrSetRelaxed$ the \emph{relaxed constraint set} defined as
\begin{equation}
\constrSetRelaxed \equiv \begin{cases}
   h_1 \equiv \boldsymbol{t}^T \boldsymbol{t} - 1 = 0\\
   h_2 \equiv \boldsymbol{e}_1^T \boldsymbol{e}_1 - (t_2^2 + t_3^2) = 0\\
   h_3 \equiv \boldsymbol{e}_2^T \boldsymbol{e}_2 - (t_1^2 + t_3^2) = 0\\
   h_4 \equiv \boldsymbol{e}_3^T \boldsymbol{e}_3 - (t_1^2 + t_2^2) = 0\\
   h_5 \equiv \boldsymbol{e}_1^T \boldsymbol{e}_3 + t_1t_3 = 0\\
   h_6 \equiv \boldsymbol{e}_2^T \boldsymbol{e}_3 + t_2t_3 = 0
\end{cases},
\end{equation}
where we have denoted the rows of $\Ess$ by $\boldsymbol{e}_i \in \Reals{3}, \forall i \in \{1, 2, 3\}$.

These are almost the same constraints used by Zhao in~\cite{Zhao2019}, but we dropped the constraint $\boldsymbol{e}_1^T \boldsymbol{e}_2 + t_1t_2 = 0$.
Even though this constraint set differs from that by Faugeras and Maybank by just one constraint (6 versus 7),
it turns out this difference is instrumental to eventually enable our \emph{fast optimality certifier}, 
as motivated later in Section~\ref{sec:closed-form-LICQ}.
A formal proof of how $\SetRelaxedEssentialMatrices$~in~\eqref{eq:Me:min} defines a strict superset of $\SetEssentialMatrices$ is provided in the \suppl. 

With this relaxed set at hand, we define a \emph{relaxed} version \eqref{eq:relaxedproblem} of the original Relative Pose problem \eqref{eq:originalproblem}:
\begin{equation}
    \tag{R} \label{eq:relaxedproblem}
    \fOptRlx = 
    \min_{\Ess \in \SetRelaxedEssentialMatrices} \vec{\matE}^T \boldsymbol{C} \vec{\matE}.
\end{equation}
Problem \eqref{eq:relaxedproblem} is a relaxation of the original Problem \eqref{eq:originalproblem} and therefore the inequality $\fOptRlx \leq \fOpt$ holds, with equality only if the solution to \eqref{eq:relaxedproblem} is also an essential matrix, and hence feasible for \eqref{eq:originalproblem}.

Interestingly enough though,
we observed that equality holds ($\fOptRlx = \fOpt$) in many problem instances in practice,
meaning that the relaxed problem \eqref{eq:relaxedproblem} is very often a tight relaxation of the original problem \eqref{eq:originalproblem}.
We have no theoretical proof as to why the behavior above holds so often,
and our support to this claim is fundamentally empirical (given by extensive experiments in Section~\eqref{sec:experiments}).

There exists other examples in the literature of problem relaxations that remain tight for common instances,
such as the relaxation of $\rotSpace$ onto $\ortSpace$ in the context of Pose Graph Optimization \cite{briales2016fast,carlone2015lagrangian}.
Yet, it is remarkable that whereas $\ortSpace~\supset~\rotSpace$ has two disjoint components,
$\SetRelaxedEssentialMatrices~\supset~\SetEssentialMatrices$ here features a single connected component,
which makes the observed behavior less expectable.

\subsection{Relaxed QCQP Formulation}
We can now re-formulate our relaxed optimization problem~\eqref{eq:relaxedproblem} as a canonical instance of QCQP by writing explicitly the constraints in $\constrSetRelaxed$.
%
Let us define for convenience the 9D vector $\boldsymbol{e} = \vec{\matE} \in \Reals{9}$ and the 12-D vector $\boldsymbol{x} = [\boldsymbol{e}^T, \trans^T ] ^T$. 
The relaxed canonical QCQP formulation employed in this work for the Relative Pose problem (also referred to as the \emph{primal problem}) is
\begin{align}
    \fOptRlx = & \min_{\boldsymbol{x} \in \Reals{12}} \boldsymbol{x} ^T  \boldsymbol{Qx}  \nonumber \\
    & \text{subject to}
    \begin{aligned}
        & \boldsymbol{x}^T \boldsymbol{A}_1 \boldsymbol{x} = 1 \\
        & \boldsymbol{x}^T \boldsymbol{A}_i \boldsymbol{x} = 0, \quad i = 2, ..., 6
    \end{aligned}
    \tag{P-R}\label{eq:primalproblem}
\end{align}
where $\{\boldsymbol{A}_i\}_{i=1} ^6$ are the $12 \times 12$-symmetric corresponding matrix forms of the quadratic constraints, so that $h_i(\Ess,\trans) \equiv \boldsymbol{x}^T \boldsymbol{A}_i\boldsymbol{x} - c_i = 0, c_i \in \Reals{}$, and $\boldsymbol{Q}$ is the $12 \times 12$-symmetric data matrix of compatible dimension with $\boldsymbol{x}$, \ie
\begin{equation}
    \boldsymbol{Q} = 
    \begin{bmatrix}
    \boldsymbol{C}  & \boldsymbol{0}_{9\times 3} \\
    \boldsymbol{0}_{3 \times 9} & \boldsymbol{0}_{3 \times 3} 
    \end{bmatrix} 
    \in \symmPlus{12}.
\end{equation}

Problem \eqref{eq:primalproblem} is exactly equivalent to the relaxed Problem \eqref{eq:relaxedproblem}.
Nonetheless, Problem \eqref{eq:primalproblem} is still a Quadratically Constrained Quadratic Program (QCQP), in general NP-hard to solve. However, it allows us to derive an optimality certifier by exploiting the so-called \textit{Lagrangian dual problem}, which we present next.

\begin{figure*}[h]
    \centering
    \includegraphics[width=\figureSize]{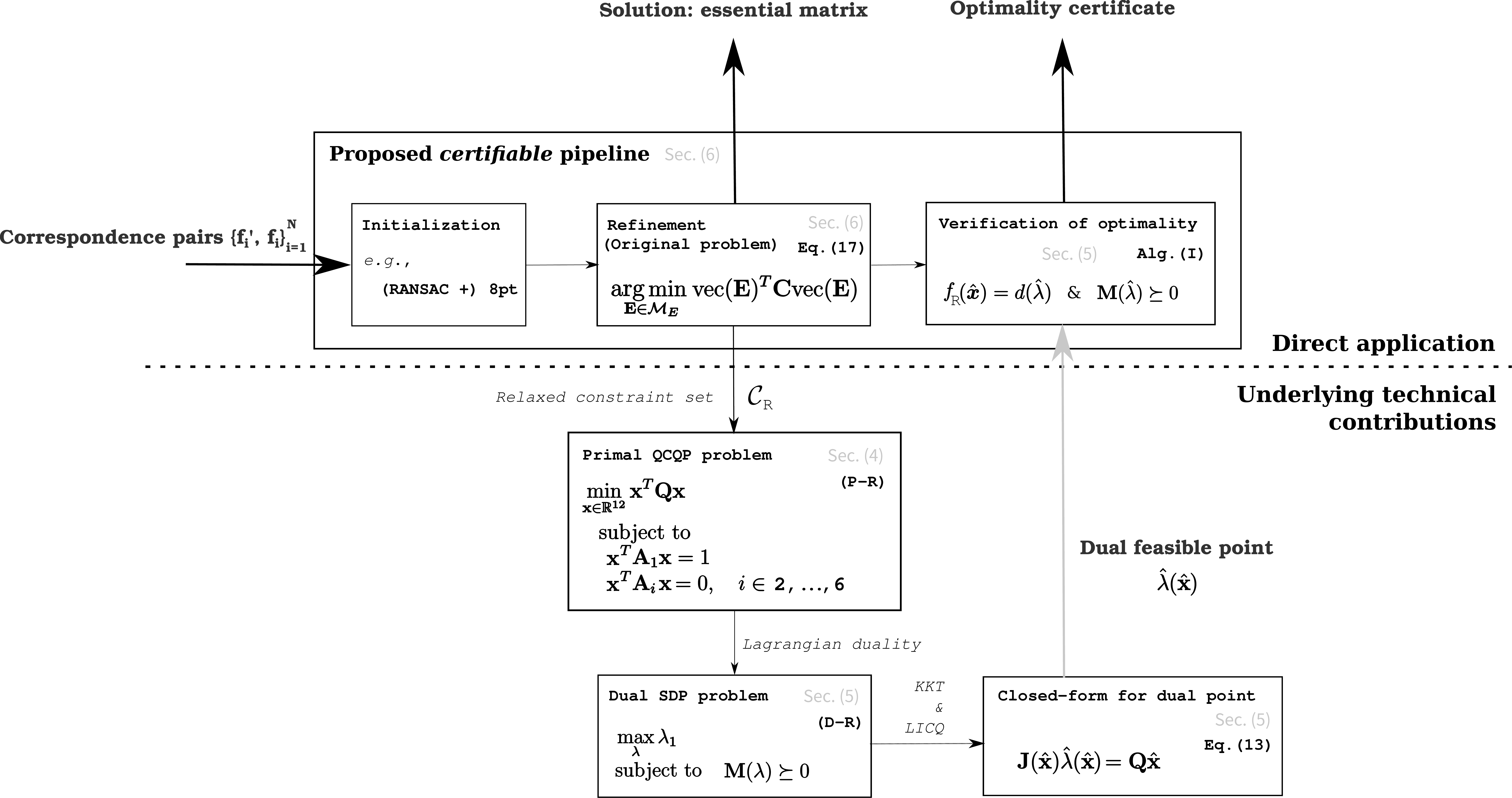}
    \caption{Proposed \textit{certifiable} pipeline: its user-supplied input \textit{Correspondence pairs} $\{\obsip, \obsi\}_{i=1}^N$; and two outputs (\textit{essential matrix} and \textit{optimality certificate}). We show the underlying formulations (primal and dual problems), along with the novel closed-form expression for potential dual points $\vHatLambda$ given a potential optimal primal point $\vHatX$. This dual point is \emph{not} provided by the user.}
    \label{fig:formulation-pipeline}
\end{figure*}

\section{Fast Optimality Certifier} \label{sec:lagrangiandual}
Our interest in the dual problem is twofold. First, the dual problem presents a relaxation of the primal program \eqref{eq:primalproblem} and hence provides a lower bound for the optimal objective of the latter, principle known as \textit{weak duality}~\cite{boyd2004convex}. In many situations, as it is shown in \SEC{sec:syntheticexperiments}, this relaxation is exact, meaning that the optimal objective of the dual ($\dOptRlx$) and primal ($\fOptRlx$) problems are the same up to some accuracy. When this occurs, we say there is \textit{strong duality} and that the \textit{duality gap} $\fOptRlx - \dOptRlx$ is zero. Second, when strong duality holds, we can recover the primal solution from the dual and vice versa (assuming some conditions hold), without actually solving the primal (or dual) problem. 

\begin{definition}[Dual problem of the Primal program \eqref{eq:primalproblem}]
\label{th:dualproblem}
    The Dual problem of the program in \eqref{eq:primalproblem} is the following constrained SDP:
    \begin{align}
    \dOptRlx &=  \max_{\bm{\lambda} } \lambda_1  \tag{D-R} \label{eq:dualproblem} \\
    & \text{subject to  }  \boldsymbol{M}(\bm{\lambda})\succeq 0  \nonumber
\end{align}
where $ \boldsymbol{M}(\bm{\lambda}) \doteq \boldsymbol{Q} - \sum_{i=1} ^6 \lambda_i \boldsymbol{A}_i$ is the so-called \textit{Hessian of the Lagrangian} and $\bm{\lambda} = \{\lambda_i\}_{i=1} ^6$ are the \textit{Lagrange multipliers}. 
The derivation of this problem is given in the \suppl. 
\end{definition}

A classic duality principle relates the objectives attained by \eqref{eq:primalproblem} and \eqref{eq:dualproblem} as the chain of inequalities~\cite{boyd2004convex}
\begin{equation}
   \dHatRlx \leq \dOptRlx \leq \fOptRlx \leq \fHatRlx, \label{eq:chain-ineq-relaxation}
\end{equation}
where we employ $\vHatLambda, (\text{resp. } \vHatX)$ to denote dual (resp. primal) feasible points, \textit{i.e.}, points which fulfill the constraints required by the dual \eqref{eq:dualproblem} (resp. primal \eqref{eq:primalproblem}) problem. The first and third inequalities hold by definition of optimality, while the second stands for the \emph{weak duality} principle.
Further, 
for any essential matrix $\hat{\Ess} \in \SetEssentialMatrices$ the following chain of inequalities always holds:
\begin{equation}
   \dHatRlx \leq \dOptRlx \leq \fOptRlx \leq \fOpt \leq \fHatE, \label{eq:chain-ineq-original}
\end{equation}
where the first two inequalities come from \eqref{eq:chain-ineq-relaxation}, the relation $\fOptRlx \leq \fOpt$ stands due to the fact that \eqref{eq:primalproblem} is a relaxation of \eqref{eq:originalproblem} and the inequality $\fOpt \leq \fHatE$ holds by definition of optimality.

Therefore, the dual problem allows us to verify if a given primal feasible point $\vHatX$ is indeed optimal. Although the dual problem \eqref{eq:dualproblem} is a SDP and can be solved by off-the-shelf solvers (\eg, SeDuMi~\cite{sturm1999using} or SDPT3~\cite{toh1999sdpt3}) in polynomial time, inspired by \cite{briales2016fast,carlone2015lagrangian} we propose here a faster optimality verification based on a closed-form expression for dual candidates, thus avoiding the resolution of the SDP from scratch.

\subsection{Closed-form Expression for Dual Feasible Points} \label{sec:closed-form-LICQ}
Following, we show how to compute dual feasible points (candidates) in closed-form.
Assuming strong duality holds, we know from duality theory~\cite{boyd2004convex} that a primal optimal point $\vStarX$ is a minimizer of the dual problem \eqref{eq:dualproblem}, that is, a minimizer of the \textit{Lagrangian} 
evaluated at the dual optimal point $\vStarLambda$. Since $\boldsymbol{M}(\bm{\lambda})$ is positive semidefinite for any feasible dual point $\bm{\lambda}$, by definition $\boldsymbol{x}^T\boldsymbol{M}(\bm{\lambda})\boldsymbol{x} \geq 0$ for any 12D vector $\boldsymbol{x}$ and its minimum value is achieved at 0. Next, we can re-formulate this requirement as:

\begin{equation}
    \boldsymbol{x}^{\star T} \boldsymbol{M}(\vStarLambda) \vStarX = 0 \Leftrightarrow \boldsymbol{M}(\vStarLambda) \vStarX = \boldsymbol{0}_{12 \times 1}, \label{eq:hessianlagrangian}
\end{equation}
that is, $\vStarX$ lies in the nullspace of $\boldsymbol{M}(\vStarLambda)$.  This is known as the \textit{complementary slackness condition}~\cite{boyd2004convex}, which provides us with a set of linear constraints relating the optimal values for the primal and dual variables (always under the assumption of strong duality).

With this in mind and recalling the structure of $\boldsymbol{M}(\bm{\lambda})$ in  \eqref{th:dualproblem}, we re-write \eqref{eq:hessianlagrangian} as 
\begin{align}
     \boldsymbol{0}_{12 \times 1} &= \boldsymbol{M}(\vStarLambda) \vStarX   =  (\boldsymbol{Q} - \sum_{i=1} ^6 \lambda_i^{\star} \boldsymbol{A}_i)  \vStarX  \Leftrightarrow \\ 
    \Leftrightarrow \boldsymbol{Q} \vStarX &=  \sum_{i=1} ^6 \lambda_i^{\star} \boldsymbol{A}_i  \vStarX. \label{eq:lagrangian}
\end{align}

We stack the 12D vectors $\{\boldsymbol{A}_i \vStarX \}_{i=1}^6$ column-wise in the $12 \times 6$ matrix 
$\boldsymbol{J}(\vStarX)$ 
and the Lagrange multipliers as the 6D vector $\vStarLambda$, obtaining the following linear system \wrt $\vStarLambda$:
\begin{equation}
    \underbracket{\boldsymbol{J} (\boldsymbol{x^{\star}}) }_{12 \times 6} \underbracket{\vStarLambda}_{6 \times 1} = \underbracket{\boldsymbol{Q} \vStarX }_{12 \times 1}. \label{eq:systeminlambda}
\end{equation}

\eqref{eq:systeminlambda} enables us to compute a candidate dual solution $\vHatLambda$ given a potential primal solution $\vHatX$.
With this, we can certify if the given (feasible) solution $\vHatX$ is indeed optimal through the following Theorem:

\begin{theorem}[Verification of Optimality]
    \label{theorem:verification}
    Given a putative primal solution $\vHatX$ for Problem \eqref{eq:primalproblem}, if there exists a \emph{unique} solution $\vHatLambda$ to the linear system
    \begin{equation}
        \boldsymbol{J} (\vHatX) \vHatLambda = \boldsymbol{Q} \vHatX ,
        \label{eq:systeminlambdahat}
    \end{equation}
    and $\boldsymbol{M}\bm{(\hat{\lambda})} \succeq 0$, then we have strong duality and the putative solution $\vHatX$ is indeed optimal. 
\end{theorem}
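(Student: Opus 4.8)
The plan is to certify optimality by collapsing the duality gap in the chain \eqref{eq:chain-ineq-relaxation}, $\dHatRlx \leq \dOptRlx \leq \fOptRlx \leq \fHatRlx$. The two outer quantities sandwich everything in between, so it suffices to exhibit a single dual feasible point whose dual objective coincides with the primal cost evaluated at $\vHatX$. The hypotheses are engineered to deliver exactly this: the condition $\boldsymbol{M}(\vHatLambda) \succeq 0$ supplies dual feasibility, and the linear system \eqref{eq:systeminlambdahat} is nothing but the stationarity/complementary-slackness relation $\boldsymbol{M}(\vHatLambda)\vHatX = \boldsymbol{0}$ (cf.\ \eqref{eq:hessianlagrangian}) imposed on the candidate pair, so $\vHatLambda$ is automatically the natural multiplier associated with $\vHatX$.

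First I would verify dual feasibility. By the constraint in the dual \eqref{eq:dualproblem}, a point is feasible exactly when the Hessian of the Lagrangian is positive semidefinite; since $\boldsymbol{M}(\vHatLambda) \succeq 0$ is assumed, $\vHatLambda$ is dual feasible and its dual objective is $\dHatRlx = \hat{\lambda}_1$.

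Next I would express this value through the primal cost. Unpacking the matrix--vector product, the linear system \eqref{eq:systeminlambdahat} reads $\boldsymbol{Q}\vHatX = \sum_{i=1}^{6} \hat{\lambda}_i \boldsymbol{A}_i \vHatX$; left-multiplying by $\vHatX^T$ gives $\fHatRlx = \vHatX^T \boldsymbol{Q} \vHatX = \sum_{i=1}^{6} \hat{\lambda}_i\, \vHatX^T \boldsymbol{A}_i \vHatX$. Since $\vHatX$ is primal feasible for \eqref{eq:primalproblem}, every quadratic form collapses to its prescribed value, $\vHatX^T \boldsymbol{A}_1 \vHatX = 1$ and $\vHatX^T \boldsymbol{A}_i \vHatX = 0$ for $i = 2, \dots, 6$, leaving $\fHatRlx = \hat{\lambda}_1 = \dHatRlx$.

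Finally I would close the argument. Feeding $\dHatRlx = \fHatRlx$ back into \eqref{eq:chain-ineq-relaxation} pins every term between them to a single value, hence $\dOptRlx = \fOptRlx$ (strong duality) and $\fOptRlx = \fHatRlx$, \ie $\vHatX$ attains the optimum. The one point I would treat with care is the \emph{uniqueness} hypothesis: for the optimality conclusion above, the mere existence of a dual feasible solution of \eqref{eq:systeminlambdahat} already suffices. Uniqueness is instead what renders the certifier well-posed, guaranteeing that solving \eqref{eq:systeminlambdahat} returns a single, unambiguous candidate $\vHatLambda$ to be tested for positive semidefiniteness. I expect the real work to lie in tying this algebraic uniqueness to the full-column-rank of $\boldsymbol{J}(\vHatX)$, and hence to the LICQ property announced in Section~\ref{sec:closed-form-LICQ}, rather than in the duality-gap collapse, which is routine.
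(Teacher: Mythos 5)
Your proof is correct and follows essentially the same route as the paper's: both arguments left-multiply the linear system \eqref{eq:systeminlambdahat} by $\vHatX^T$ (equivalently, use $\boldsymbol{M}(\vHatLambda)\vHatX = \boldsymbol{0}$), invoke the primal feasibility constraints to collapse $\sum_i \hat{\lambda}_i \vHatX^T \boldsymbol{A}_i \vHatX$ to $\hat{\lambda}_1$, and conclude via the chain \eqref{eq:chain-ineq-relaxation}. Your closing observation is also accurate: the paper's own proof never uses uniqueness of $\vHatLambda$, which indeed serves only to make the certification procedure well-posed (via LICQ), not to establish optimality itself.
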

\begin{proof}
Assume $\vHatLambda$ is a solution of \eqref{eq:systeminlambdahat} and dual feasible. Then, by \eqref{eq:hessianlagrangian}: $\boldsymbol{M}(\vHatLambda) \vHatX = \boldsymbol{0}_{12 \times 1} \Leftrightarrow \vHatX^T \boldsymbol{M}(\vHatLambda) \vHatX = 0$. Given the definition of $\boldsymbol{M}(\vHatLambda)$ in \eqref{th:dualproblem} and the quadratic constraints in \eqref{eq:primalproblem},
\begin{align}
     \vHatX^T \boldsymbol{Q} \vHatX & = \vHatX^T \sum_{i=1} ^6 \Big(\hat{\lambda}_i \boldsymbol{A}_i \Big) \vHatX = \nonumber \\
     &= \hat{\lambda}_1 \Big ( \vHatX^T   \boldsymbol{A}_1 \vHatX \Big) = \hat{\lambda}_1 \Leftrightarrow
   \fHatRlx =\dHatRlx,
\end{align}
which implies that the chain on inequalities given in \eqref{eq:chain-ineq-relaxation} is tight: $d(\vHatLambda) = \dOptRlx = \fOptRlx = \fHatRlx$, that is, the primal candidate $\vHatX$ achieves the optimal objective $\fHatRlx = \fOptRlx$, therefore it is the optimal solution and we have strong duality.
\end{proof}

From Theorem \eqref{theorem:verification}, the next statement follows:
\begin{corollary} \label{cor:verification-essential}
Given a potentially optimal solution $\hatE$ for problem \eqref{eq:originalproblem} and its equivalent form $\vHatX = [\vec{\hatE}^T, \hat{\trans}^T]^T$ where $\hat{\trans}$ is the associated translation vector, if there exists a \emph{unique} solution $\vHatLambda$ to the linear system in \eqref{eq:systeminlambdahat} and it is dual feasible (\ie $\boldsymbol{M}\bm{(\hat{\lambda})} \succeq 0$), then we can state that: (1) strong duality holds between problems \eqref{eq:primalproblem} and \eqref{eq:dualproblem}; (2) the relaxation carried out in \eqref{eq:primalproblem} is \emph{tight}; and (3) the potentially optimal solution  $\hatE $ is optimal for both \eqref{eq:primalproblem} and \eqref{eq:originalproblem}.
\end{corollary}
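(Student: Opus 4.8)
The plan is to obtain Corollary~\eqref{cor:verification-essential} as an almost immediate consequence of Theorem~\eqref{theorem:verification}, adding only two bookkeeping steps that exploit the fact that the certified point comes from a genuine essential matrix rather than from an arbitrary primal-feasible vector. The one substantive preliminary is to check that $\vHatX = [\vec{\hatE}^T, \hat{\trans}^T]^T$ is primal feasible for \eqref{eq:primalproblem}. Since $\hatE \in \SetEssentialMatrices$ and $\SetEssentialMatrices \subset \SetRelaxedEssentialMatrices$ (the inclusion established in the \supplementary), there is an associated unit translation $\hat{\trans}$ for which the pair $(\hatE, \hat{\trans})$ satisfies every $h_i \in \constrSetRelaxed$; equivalently $\vHatX^T \boldsymbol{A}_1 \vHatX = 1$ and $\vHatX^T \boldsymbol{A}_i \vHatX = 0$ for $i = 2,\dots,6$. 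Hence $\vHatX$ is feasible for \eqref{eq:primalproblem}.

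With feasibility secured, the hypotheses of Theorem~\eqref{theorem:verification} hold verbatim: a unique $\vHatLambda$ solves \eqref{eq:systeminlambdahat} and $\boldsymbol{M}(\vHatLambda) \succeq 0$. Applying the theorem yields strong duality between \eqref{eq:primalproblem} and \eqref{eq:dualproblem} together with $\fHatRlx = \fOptRlx = \dOptRlx$, i.e. $\vHatX$ is optimal for \eqref{eq:primalproblem}. This settles claim~(1) and the \eqref{eq:primalproblem}-optimality part of claim~(3).

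I would then upgrade the relaxation inequality to an equality using the fact that the certified optimizer is itself an essential matrix. Recall from the discussion following \eqref{eq:relaxedproblem} that $\fOptRlx \leq \fOpt$ always holds, with equality precisely when a minimizer of the relaxed problem is feasible for the original one. Here the certified minimizer $\vHatX$ encodes $\hatE \in \SetEssentialMatrices$, which is feasible for \eqref{eq:originalproblem}, so $\fOptRlx = \fOpt$, giving the tightness claim~(2). Because $\boldsymbol{Q}$ carries $\boldsymbol{C}$ in its leading block and zeros elsewhere, the two objectives agree on $\vHatX$, namely $\vHatX^T \boldsymbol{Q} \vHatX = \vec{\hatE}^T \boldsymbol{C} \vec{\hatE}$, so $\fHatRlx = \fHatE$. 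Feeding these equalities into the chain \eqref{eq:chain-ineq-original} collapses it to $\dHatRlx = \dOptRlx = \fOptRlx = \fOpt = \fHatE$, which shows that $\hatE$ attains $\fOpt$ and is therefore optimal for \eqref{eq:originalproblem} as well, completing claim~(3).

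The only step demanding genuine care is the feasibility check, specifically the identification of the associated translation $\hat{\trans}$: an essential matrix fixes its epipole only up to sign and scale, so one must invoke the unit-norm normalization $h_1$ and verify that the residual sign ambiguity is harmless for $h_2,\dots,h_6$ (these constraints either pair $\hat{\trans}$ with itself quadratically or pair it consistently with the rows of $\hatE$). This is exactly the content of the inclusion $\SetEssentialMatrices \subset \SetRelaxedEssentialMatrices$ proved in the \supplementary, so it can simply be cited; everything else reduces to substituting the established equalities into the inequality chains, with no further computation.
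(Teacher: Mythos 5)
Your proposal is correct and follows essentially the same route as the paper's own proof: establish that $\vHatX$ is primal feasible for \eqref{eq:primalproblem} via the inclusion $\SetEssentialMatrices \subset \SetRelaxedEssentialMatrices$, invoke Theorem~\eqref{theorem:verification} to get strong duality, and then use the agreement of the two objectives on essential matrices ($\fHatE = \fHatRlx$) to collapse the chain \eqref{eq:chain-ineq-original} and obtain tightness and optimality for both problems. The extra care you devote to the feasibility check (the sign/scale ambiguity of the associated translation) is a sensible refinement the paper leaves implicit in its citation of the supplementary inclusion proof, but it does not change the structure of the argument.
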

\begin{proof}
Since $\hatE$ is feasible for \eqref{eq:originalproblem}, it is also a primal feasible point for \eqref{eq:primalproblem} since the feasible set of \eqref{eq:primalproblem} is a relaxation of the set in \eqref{eq:originalproblem}. Therefore, we can apply Theorem \eqref{theorem:verification} to the feasible point $\vHatX$ considering it as a potentially optimal solution for \eqref{eq:primalproblem}. If there exists a unique dual feasible point $\vHatLambda$, by Theorem \eqref{theorem:verification} the chain of inequalities in \eqref{eq:chain-ineq-relaxation} is tight, which implies that strong duality holds (statement (1) of the corollary). Further, since the objective functions of \eqref{eq:primalproblem} and \eqref{eq:originalproblem} are equivalent $\forall \Ess \in \SetEssentialMatrices$, the attained cost values agree $\fHatE = \fHatRlx$ and the chain of inequalities in \eqref{eq:chain-ineq-original} becomes also tight: $   \dHatRlx = \dOptRlx = \fOptRlx = \fOpt = \fHatRlx = \fHatE$, which implies that the relaxation is tight $\fOptRlx = \fOpt$ (proving the statement (2)) and that the same solution is optimal for both problems $\fOptRlx = \fOpt = \fHatE = \fHatRlx$ (statement (3) in the corollary).
\end{proof}

Before we continue, we want to point out that Theorem \eqref{theorem:verification} and Corollary \eqref{cor:verification-essential} can only either certify the given primal solution is indeed optimal, or it is inconclusive about its optimality. In the latter case it might be that the solution is suboptimal or that the chosen dual problem is not tight.

That being said, notice that Theorem \eqref{theorem:verification} requires the existence of a \emph{unique} solution to the system \eqref{eq:systeminlambdahat}, that is, it requires the 
existence and uniqueness of the Lagrange multipliers. However, this is not necessary to obtain strong duality, and there exist other conditions under which the problem has it, while still being the first-order optimality conditions (KKT) satisfied by the pair primal/dual optimal points~\cite[Sec.~5.5]{boyd2004convex}. Nevertheless, this uniqueness is the cornerstone of our optimality certifier. Luckily, both the 
existence and the uniqueness of the Lagrange multipliers are assured by the \textit{regularity condition} or \textit{constraint qualification} (CQ) known as \textit{Linear Independence Constraint Qualification} (LICQ) \cite[Sec.~12.2]{nocedal2006numerical}, \cite{wachsmuth2013licq}. 
The following theorem adapts the LICQ to our primal non-convex problem \eqref{eq:primalproblem}.

\begin{theorem}[LICQ for the primal problem \eqref{eq:primalproblem}]
    We say that the Linear Independence Constraint Qualification (LICQ) holds at a primal feasible point $\vHatX \in \Reals{12}$ for \eqref{eq:primalproblem} with the set of 6 (differentiable) equality constraints $\{ c_i - \vHatX^T \boldsymbol{A}_i \vHatX = 0 \}_{i=1}^6$ if
    \begin{equation}
        rank\Big(\nabla(1 -\vHatX^T \boldsymbol{A}_1 \vHatX), \dots, \nabla(-\vHatX^T \boldsymbol{A}_6 \vHatX) \Big) = 6,
    \end{equation}
    where $\nabla(f(\boldsymbol{x}))$ denotes the gradient of the function $f(\boldsymbol{x})$ \wrt $\boldsymbol{x}$.
\end{theorem}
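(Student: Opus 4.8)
The plan is to treat this statement as the specialization of the textbook LICQ regularity condition to the quadratically constrained program \eqref{eq:primalproblem}, and to verify that this specialization yields exactly the stated rank test. Recall that, in general, LICQ is said to hold at a feasible point when the gradients of the \emph{active} constraints are linearly independent. Since \eqref{eq:primalproblem} carries only equality constraints, all six of them are active at every feasible $\vHatX$, so the condition to be checked is simply the linear independence of the six constraint gradients evaluated at $\vHatX$.

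First I would compute these gradients explicitly. Each constraint has the form $g_i(\boldsymbol{x}) = c_i - \boldsymbol{x}^T \boldsymbol{A}_i \boldsymbol{x}$ with $\boldsymbol{A}_i$ symmetric, so $\nabla g_i(\boldsymbol{x}) = -2\boldsymbol{A}_i\boldsymbol{x}$ and, evaluated at the feasible point, $\nabla g_i(\vHatX) = -2\boldsymbol{A}_i\vHatX$. Stacking the six gradients as the columns of a $12\times 6$ matrix gives $[-2\boldsymbol{A}_1\vHatX,\dots,-2\boldsymbol{A}_6\vHatX] = -2\,\boldsymbol{J}(\vHatX)$, precisely (up to the harmless scalar $-2$) the matrix $\boldsymbol{J}(\vHatX)$ introduced in \eqref{eq:systeminlambda}. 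Since six vectors in $\Reals{12}$ are linearly independent if and only if the matrix having them as columns has full column rank, and multiplication by a nonzero scalar does not change the rank, LICQ holds at $\vHatX$ if and only if $\text{rank}(\boldsymbol{J}(\vHatX)) = 6$, which is exactly the stated condition. This also closes the loop with Theorem \ref{theorem:verification}: full column rank of $\boldsymbol{J}(\vHatX)$ is precisely what guarantees that the linear system \eqref{eq:systeminlambdahat}, whenever consistent, admits a \emph{unique} solution $\vHatLambda$, the uniqueness hypothesis on which the certifier rests.

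The step I expect to be the main (conceptual) obstacle is not in this derivation, which is essentially routine once the gradients are written down, but rather in the neighbouring claim that this rank condition can actually be \emph{satisfied} at the feasible points of interest. The rank test is a statement about the particular six matrices $\{\boldsymbol{A}_i\}$, and it is here that the choice of the relaxed constraint set $\constrSetRelaxed$ becomes decisive: retaining the seventh Faugeras--Maybank constraint would, at essential-matrix points, introduce a dependency among the gradients and collapse the rank below six, which is exactly why that constraint was dropped in Section~\ref{sec:closed-form-LICQ}. Verifying that the remaining six gradients are generically independent is therefore the substantive work, and I would carry it out by exhibiting the explicit columns $\boldsymbol{A}_i\vHatX$ in terms of the entries of $\Ess$ and $\trans$ and showing that the resulting $12\times 6$ matrix has a nonvanishing $6\times 6$ minor on the feasible set.
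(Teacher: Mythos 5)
Your proposal is correct and follows essentially the same route as the paper: the statement is treated as the routine specialization of the textbook LICQ condition to \eqref{eq:primalproblem}, where all six equality constraints are active, each gradient equals $-2\boldsymbol{A}_i\vHatX$, and stacking them yields $-2\boldsymbol{J}(\vHatX)$ as in \eqref{eq:jacobian-matrix-constraints}, so that linear independence of the gradients is exactly the stated rank-6 condition. You also correctly locate the substantive work—verifying that this rank condition actually holds at feasible points once the seventh Faugeras--Maybank constraint is dropped—outside the statement itself; the paper does that verification in the supplementary material (via a nullspace and case analysis rather than exhibiting a nonvanishing $6\times 6$ minor), but this does not affect the proof of the statement as posed.
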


Therefore, LICQ assures that the Lagrange multipliers are unique if and only if the gradients of the \textit{active set of constraints} (all the equality constraints in \eqref{eq:primalproblem} for our problem) are linearly independent or equivalently, the Jacobian $-2\boldsymbol{J}(\vHatX)$ of these constraints evaluated at the feasible point $\vHatX$,
\begin{align}
    \Reals{12 \times 6} &\ni [\nabla(1 -\vHatX^T \boldsymbol{A}_1 \vHatX), \dots, \nabla(-\vHatX^T \boldsymbol{A}_6 \vHatX)] = \nonumber \\
    &= -2[ \boldsymbol{A}_1 \vHatX, \dots, \boldsymbol{A}_6 \vHatX] = -2 \boldsymbol{J}(\vHatX), \label{eq:jacobian-matrix-constraints}
\end{align}
is full (column) rank.

As introduced before, the analysis of the dual problem \eqref{eq:dualproblem} and concretely, the linear system in \eqref{eq:systeminlambdahat}, 
allowed us to detect the constraint in the original set employed in~\cite{Zhao2019} that blocked the development of our \emph{fast optimality certifier}.
While we include the full analysis in the \suppl, %
we briefly sketch the main conclusions here. 
The set in~\cite{Zhao2019} generates a $12 \times 7$ 
Jacobian matrix with rank 6 for all feasible primal points, 
yielding to a pencil of solutions to the linear system in \eqref{eq:systeminlambdahat}; 
that is, the solution is not unique and neither LICQ nor Theorem \eqref{theorem:verification} hold. 
This rank deficiency is corrected by eliminating 
\emph{one} of the constraints associated with the expression $\Ess \Ess^T$ in \eqref{eq:Me:EEt}, 
leading to the Jacobian matrix in \eqref{eq:jacobian-matrix-constraints} 
which is a scaled version of the coefficient matrix $\boldsymbol{J}(\boldsymbol{x}) \in \Reals{12 \times 6}$ in \eqref{eq:systeminlambdahat}. 
The system becomes fully determined and 
over-constrained in all scenarios (proof is given in the \suppl) 
and 
either 1 or 0 solutions exist. 
In practice, and mainly due to numerical errors, 
the exact solution  may not exist, 
\ie the vector $\matQ\vHatX$ 
does not lie on the range space of $\boldsymbol{J}(\vHatX)$. 
In these cases, one can always compute the ``closest'' solution in the least-squares sense.

Note that 
if we drop more constraints, 
we can still derive a closed-form expression 
for dual candidates 
and a fast certification algorithm. 
However, 
for each constraint that we discard  
we are creating an even looser relaxation 
of the original problem~\eqref{eq:originalproblem}.   
At some relaxation, 
the dual problem may not provide with a 
tight lower bound on the original problem, 
and hence, 
the certifier associated to that relaxation will not 
detect the optimal solution. 
Formally, 
let us consider a relaxed set of the essential matrix $\mathbb{E}_i$, 
which differs from the original one $\SetEssentialMatrices$ 
in $i$ constraints 
and, 
with the same notation, 
an even looser relaxed set $\mathbb{E}_j$, 
with $5 > j > i \geq 1$. 
Let us denote by $f_i ^{\star}, f_j^{\star}$ the 
optimal costs of 
the associated minimization problems 
(akin to \eqref{eq:relaxedproblem}). 
Since $\mathbb{E}_j$ is a superset of $\mathbb{E}_i$, 
by the same reasons our relaxed set 
$\SetRelaxedEssentialMatrices$ 
was a superset of $\SetEssentialMatrices$, 
the problem with $\mathbb{E}_j$ as feasible set 
is a relaxation of the problem with $\mathbb{E}_i$, 
and we have that $d_j^{\star} \leq f_j^{\star} \leq f_i^{\star} \leq \fOpt$ always. 
We can then apply Corollary~\eqref{cor:verification-essential} 
to the relaxation with $\mathbb{E}_j$. 
See that  
if the relaxed problem has strong duality 
for an essential matrix $\Essential$, 
then all the tighter relaxations 
(more constraints, \ie sets $\mathbb{E}_i$) 
will also have strong duality 
for this same problem
\footnote{See that the constraints that were not present in $\mathbb{E}_j$ 
can be seen as redundant constraints in $\mathbb{E}_i$ 
for this problem 
since  
the certified optimal solution $\Essential$ 
(an essential matrix) 
for the problem with $\mathbb{E}_j$ 
trivially satisfied the rest of the constraints.  
It has been shown in the literature, 
see \eg~\cite{briales2018certifiably, yang2020teaser} 
that redundant constraints \textit{tighten} the dual problem. 
Since the relaxation ($\mathbb{E}_j$) 
is already tight,
it follows that the ``redundant`` relaxation
with $\mathbb{E}_i$ 
is also tight.}; 
the contrary is not true. 
The condition is then sufficient but not necessary. 
This can be extended to the performance of 
the associated certifier. 
In practice, 
when we discard some constraints, 
we are only changing the form of the closed-form expression 
for the candidates to dual points 
(equation~\eqref{eq:systeminlambdahat}), 
and the associated Hessian of the Lagrangian.  
While the computation cost for this certifier could 
be potentially lower, 
we do not know a priori 
which relaxation performs better  
in terms of certification of essential matrices. 
A study of the performance of each relaxation 
is out of the scope of this paper. 
In summary, 
while looser relaxations can still 
be leveraged to develop similar certifiers, 
in terms of number of detected optimal solutions  
they will generally perform worse than tighter relaxations; 
the tightest relaxation 
for the parameterization 
of the essential matrix set 
employed in this work 
that still assures that a closed-form expression exists  
with unique solution
is obtained by dropping only \textit{one} constraint.  

\medskip
\emph{In practice}, one can certify the optimality of a given primal feasible point $\vHatX$ for \eqref{eq:primalproblem} and, by Corollary \eqref{cor:verification-essential}, of a given feasible solution $\hatE$ for \eqref{eq:originalproblem} by following in both cases Algorithm \eqref{alg:naiveverification}.
To universalize the Algorithm, let us denote by $\vHatX = [ \vec{\hatE}^T, \hat{\trans}^T]^T$ the feasible solution for \eqref{eq:primalproblem} or for \eqref{eq:originalproblem}.
Further, for any $\hatE \in \SetEssentialMatrices$, the attained objective value in \eqref{eq:originalproblem} $\big( \fHatE \big) $ agrees with the objective value in \eqref{eq:primalproblem} $\big( \fHatRlx \big)$ since $\SetEssentialMatrices \subset \SetRelaxedEssentialMatrices$; hence we employ $\fHatRlx$ to denote the corresponding objective value in both cases without confusion.
Recall that our certification has two possible outcomes, either \textsc{Positive} (the solution is optimal) or \textsc{unknown} (the certification is inconclusive).
From a practical point of view, we write the condition $\boldsymbol{M}(\vHatLambda) \succeq 0$ as its smallest eigenvalue $\mu_M$ being greater than a negative threshold $\tauMu$ and assure strong duality by applying a (positive) threshold $\tauGap$ to the absolute value of the dual gap $|\fHatRlx - \dHatRlx|$, which allow us to accommodate numerical errors. In practice, we fix the tolerances to $\tauMu = -0.02$ and $\tauGap = 10^{-14}$. 
If either the minimum eigenvalue is negative and/or the dual gap is greater than zero (considering the tolerance), the verification procedure is inconclusive. This could occur if the solution is not optimal or if strong duality does not hold for this particular problem instance and/or the chosen relaxation.

 \begin{algorithm}
    \caption{Verification of Optimality} \label{alg:naiveverification}
    \hspace*{\algorithmicindent} {\textbf{Input}: Compact data matrix $\boldsymbol{C}$; putative primal solution $\vHatX$}\\
    \hspace*{\algorithmicindent} {\textbf{Output}: Optimality certificate \textsc{isOpt} $\in \{\text{True}, \text{unknown}\}$} \\
    
    Compute $\fHatRlx$ from \eqref{eq:primalproblem}\;
    
        Compute $\vHatLambda$ by solving the linear system in \eqref{eq:systeminlambdahat} and set $\dHatRlx =  \hat{\lambda}_1 $\;

        Compute min. eigenvalue $\mu_{M}$ of $\boldsymbol{M}(\vHatLambda)$\;
            \eIf {$\mu_M \geq \tauMu$ \text{ and } $|\fHatRlx  -\dHatRlx| \leq \tauGap $}
           {
                \textsc{isOpt} = True\;
            }
            {  \tcp{Dual candidate is not feasible} \textsc{isOpt} = unknown\; }
       
    \end{algorithm}

\section{Proposed Fast Certifiable Pipeline} \label{sec:proposedPipeline}
Rather than solving the original problem via its convex SDP relaxation, in this work we propose to solve the relative pose problem through an iterative method that respects the intrinsic nature of the essential matrix set, but comes with no optimality guarantees, and to certify \textit{a-posteriori} the optimality of the solution leveraging our fast optimality certifier. Current iterative methods work well and usually converge to the global optima despite initialization, in addition to be faster than the methods employed in convex programming, \textit{e.g.}, Interior Point Methods (IPM).
Following, we enumerate and briefly explain the three major stages in which the proposed pipeline is separated (see also Figure~\eqref{fig:formulation-pipeline} for a graphic representation):
\begin{enumerate}
    \item \textbf{Initialization}: One starts by generating an initial guess with any standard algorithms, \textit{e.g.}, (RANSAC + ) \eightpt algorithm~\cite{hartley2003multiple}, or simply providing the trivial identity matrix or a random essential matrix. 
    \item \textbf{Refinement (Optimization on Manifold)}:
    We seek the solution to the original primal problem \eqref{eq:originalproblem} by refining the initial guess with a local iterative method that operates within the essential matrix manifold $\Me$ (always fulfilling constraints):
    \begin{align}
     \label{eq:optim-on-manifold}
     \hat{\Ess} = \argmin_{\Ess \in \Me}  \vec{\matE}^T \boldsymbol{C} \vec{\matE}.
\end{align}
    \item \textbf{Verification of optimality}: The candidate solution $\hat{\Ess}$ returned by the iterative method can be verified as globally optimal with the proposed Algorithm \eqref{alg:naiveverification} if the underlying dual problem \eqref{eq:dualproblem} is tight. 
\end{enumerate}
Note that the above-mentioned pipeline estimates the essential matrix, which encodes the relative pose. Both the rotation and translation \uts can be extracted from it by any classic computer vision algorithm~\cite{hartley2003multiple}.

\subsection{Implementation Details about the Optimization on Manifold Stage}
Riemannian optimization toolboxes, such as \manopt, decouple the optimization problem into manifold (domain), solvers and problem description, making it quite straightforward to implement an iterative solver for \eqref{eq:optim-on-manifold} as proposed above:

\smallskip
\textbf{Solver}
We choose here an iterative \emph{truncated-Newton Riemannian trust-region} (RTR) solver~\cite{absil2009optimization}.
RTR has shown before~\cite{briales2017cartan, rosen2019se} a very good trade-off between a large basin of convergence and superlinear convergence speed.

\smallskip
\textbf{Manifold}
As mentioned earlier in Section \eqref{sec:related-work}, many characterizations have been proposed in the literature for the essential matrix manifold. Here we pick the state-of-the-art proposal from Tron~\cite{tron2017space}.
This characterization and its associated operators are already provided by \manopt as \textsc{essentialfactory}.

\smallskip
\textbf{Problem}
Hence we only need to specify the cost function and its Euclidean derivatives (gradient $ \nabla \fE $ and Hessian-vector product $\nabla^2 \fE[\boldsymbol{U}]$) which only depend on $\boldsymbol{E}$:
\begin{align}
   \fE &=\vec{\matE}^T \boldsymbol{C}\vec{\matE} \\
   \quad \nabla \fE &= 2\boldsymbol{C}\vec{\matE} \\
   \quad \nabla^2 \fE[\matU] & = 2\boldsymbol{C}\vec{\matU}.
\end{align}
We provide these developments in the \suppl 
for completeness.


\begin{figure*}[h]
    \begin{subfigure}[t]{0.32\textwidth}
        \centering
        \includegraphics[width=\figureSize]{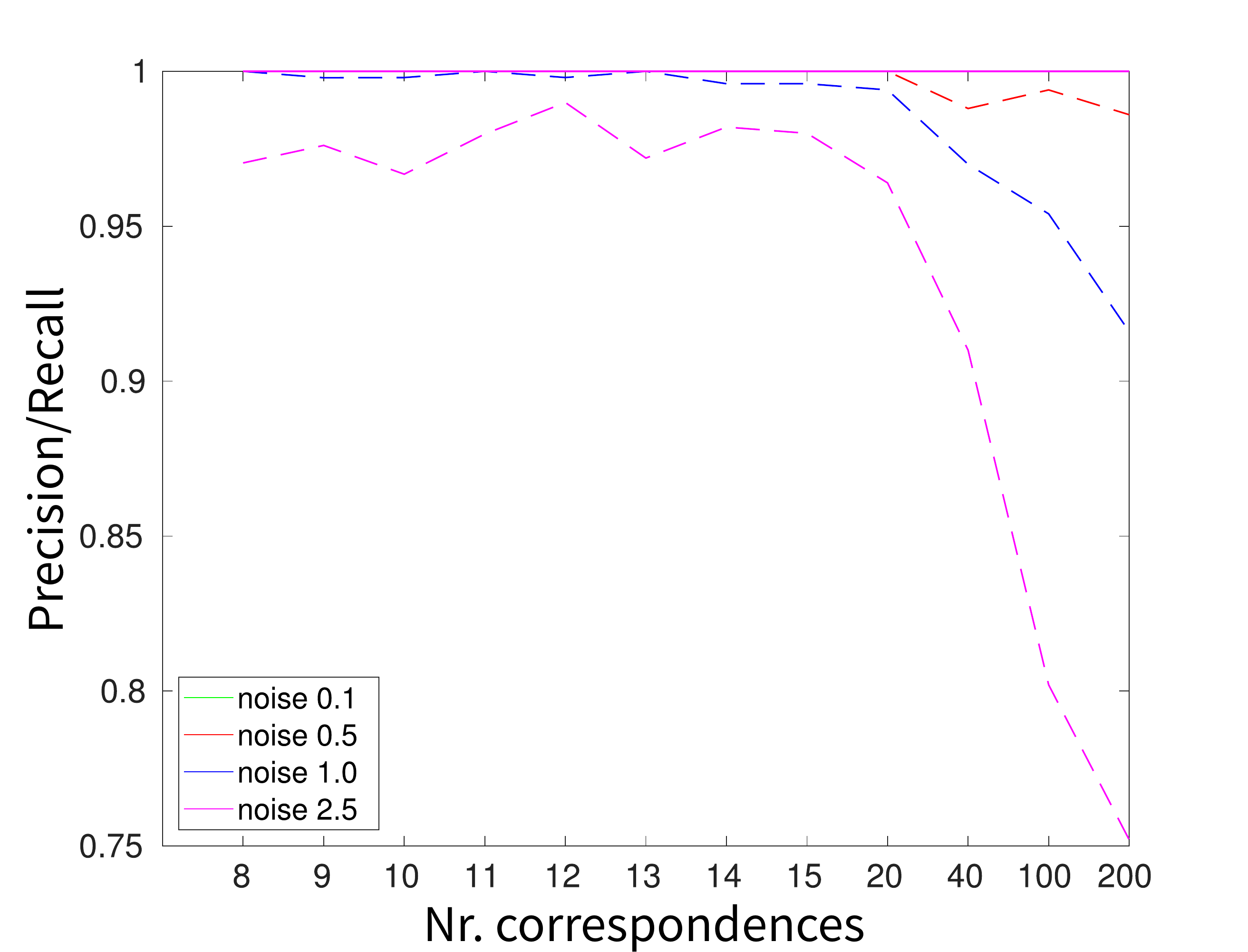}
    \caption{
    }
    \label{fig:precisionrecall}
    \end{subfigure}
    \begin{subfigure}[t]{0.32\textwidth}
        \centering
        \includegraphics[width=\figureSize]{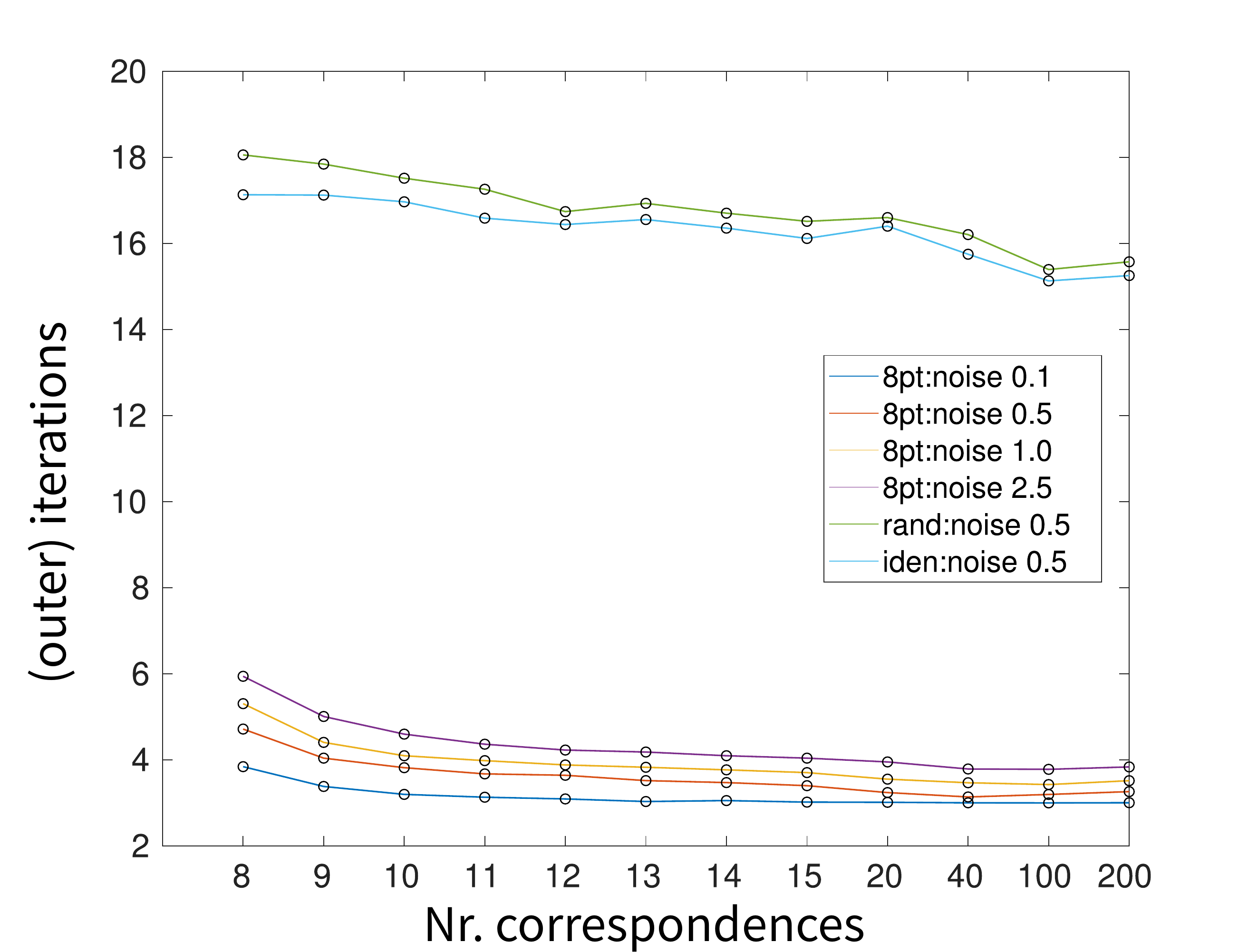}
    \caption{
    }
    \label{fig:nItersCases}
    \end{subfigure}
    \begin{subfigure}[t]{0.32\textwidth}
        \centering
        \includegraphics[width=\figureSize]{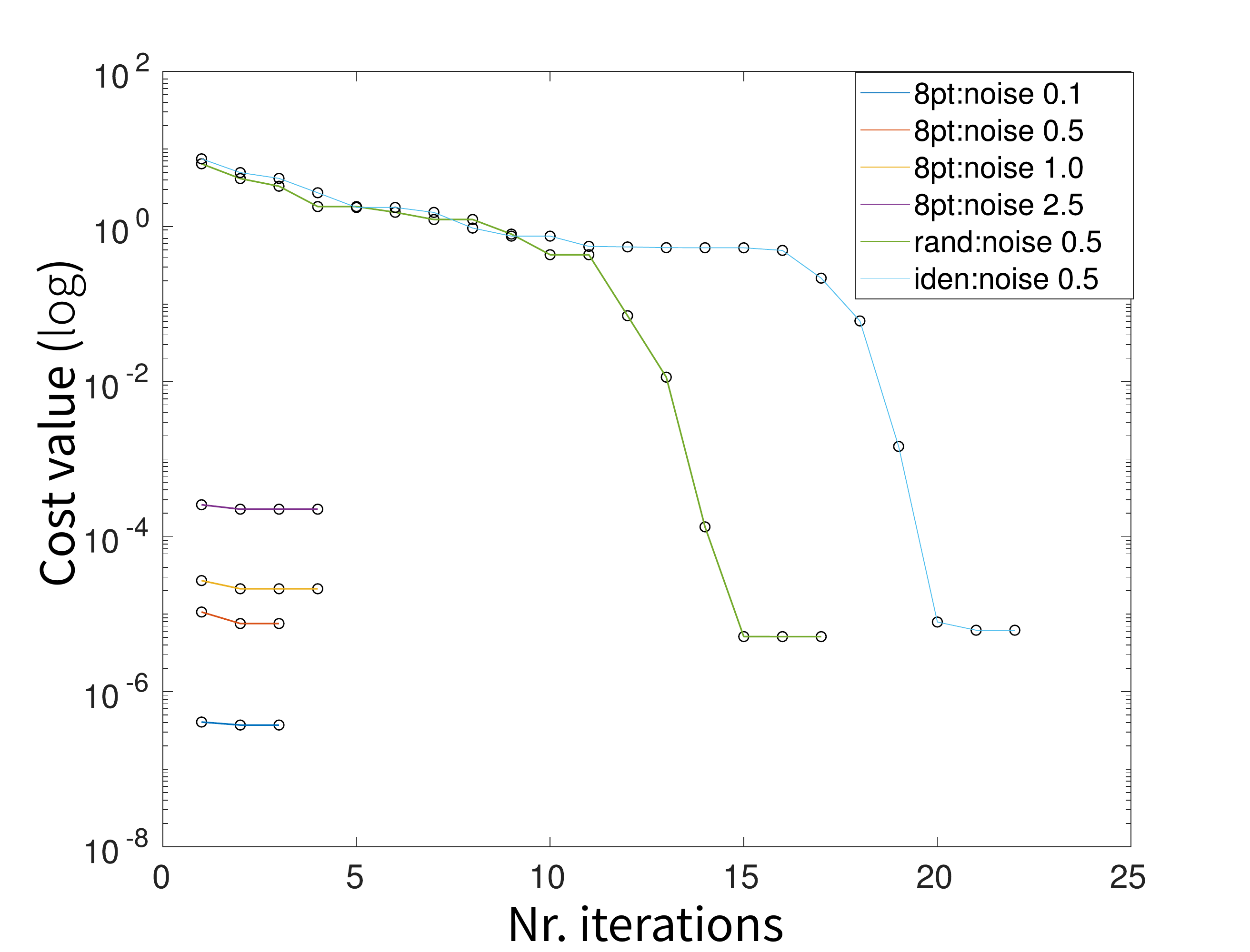}
    \caption{}
    \label{fig:evCostFcnIters}
    \end{subfigure}
    \caption{(a) Precision (solid line) and recall (dashed line) metrics for the four levels of noise considered as a function of the number of points. (b) Averaged number of (outer) iterations for different initialization (identity matrix \textsc{iden}, random matrix \textsc{rand} and \eightpt algorithm \textsc{8pt}) and levels of noise (0.1, 0.5, 1.0, 2.5 \pixels). (c) Evolution of the cost value as a function of the number of iterations; an example of each instance is shown. Note the logarithmic scale in the last figure and the non-linear scale for the \textsc{X} axis.}
\end{figure*}

\section{Experimental Validation} \label{sec:experiments}
In this Section we validate through extensive experimentation with synthetic and real data the utility of the presented verification technique (Algorithm \eqref{alg:naiveverification}) and the performance of the proposed \emph{certifiable} pipeline (Section \eqref{sec:proposedPipeline}). 

\subsection{Experimental Validation with Synthetic Data}

\label{sec:syntheticexperiments}
Similarly to \cite{briales2018certifiably}, we generate random problems as follows: We place the first camera frame at the origin 
(identity orientation and zero translation) 
and generate a set of random 3D points 
within a truncated pyramid (frustum) zone 
with depth ranging from one to eight meters 
(approx. from $75$ to $615$ focal units)  
measured from the first camera frame and 
inside its Field of View (FoV). 
Then, we generate a random pose for the second camera 
whose translation parallax is constrained within a spherical shell 
with minimum radio $||\trans||_{\text{min}}$, 
maximum radio $||\trans||_{\text{max}}$ and 
centered at the origin. 
We also enforce that all the 3D points lie within the second camera's FoV. 
This configuration is closer to those found in real scenarios. 
Next, we create the correspondences as unit bearing vectors and 
add noise by assuming a spherical camera, 
computing the tangential plane at each bearing vector and 
introducing a random error sampled from the standard uniform distribution, considering a focal length of 800 pixels for both cameras 
and images of approx. $1900$ \pixels. 
The principal point is placed at the center of the image plane 
for both cameras.

In the following, we will fix the FoV to 100 degrees and the translation parallax $||\boldsymbol{t}||_2 \in [0.5, 2.0] \;(\text{meters})$. 
We generate four sets of experiments, each of them with a different level of noise $\sigma_{\text{noise}} \in \{0.1, 0.5, 1.0, 2.5\}\;\pixels$. 
Further, for each level of noise, we consider instances of the Relative Pose problem with different number of correspondence pairs in $N \in \{8, 9, 10, 11, 12, 13, 14, 15, 20, 40, 100, 200 \}$. 
We generate 500 random problem instances for each combination of noise/number of correspondences.

\smallskip
\textbf{Effectiveness of the Verification Algorithm}. 
First, we show the effectiveness of the verification technique in Theorem \eqref{theorem:verification} 
under the setup provided by the above-mentioned four sets of experiments, 
which consider different combinations of 
noise/number of correspondences for the Relative Pose problem. 
For each instance, we compute the optimal solution $\vStarX$ by solving\footnote{Since the original code is not publicly available, 
we execute our own implementation: The SDP was modelled in 
\textsc{Matlab} with \textsc{cvx}~\cite{gb08} and 
solved with \textsc{SDPT3}~\cite{toh1999sdpt3}.} 
the SDP relaxation of \eqref{eq:originalproblem} as it was executed in \cite{Zhao2019} and 
consider the returned solution as optimal if the rank guarantees the tightness of the relaxation.
If the relaxation is not tight,
we treat the solution as suboptimal $\vHatX$ and 
project it onto the essential matrix set to obtain a primal feasible point 
(we refer the reader to the original work for further details). 
We detected in 343 out of 24000 occasions that the relaxation was not tight. 
Further, for each instance of the problem we also compute the essential matrix by the $\eightpt$ algorithm and treat it as suboptimal.
Next, we apply the verification technique in Theorem \eqref{theorem:verification} for each instance and compute the well-known classification metrics \textit{precision} and \textit{recall}. Let us denote the globally optimal $\vStarX$ classified as such by \texttt{TP} (True Positive), the suboptimal point $\vHatX$ classified as optimal by \texttt{FP} (False Positive) and the true optimal that couldn't be verified (the verification was inconclusive) by \texttt{FNP} (False Non-Positive)\footnote{
One may note that the classic classification metrics employ False Negative instead of False Non-Positive. However, our algorithm does not output positive/negative but positive/inconclusive, and hence False Non Positive seems a better choice for the cases in which our verification technique outputs \emph{inconclusive} for a known true optimal solution.}. 
Hence, the classification metrics take the form:
\begin{equation}
    \text{precision} = \frac{\mathtt{TP}}{\mathtt{TP} + \mathtt{FP}}, \quad \text{recall} = \frac{\mathtt{TP}}{\mathtt{TP} + \mathtt{FNP}}.
\end{equation}
Figure~\eqref{fig:precisionrecall} 
depicts the metrics for the four sets of experiments 
as a function of the number of points. 
As expected, the precision is always 1 while the recall decreases 
with the number of correspondences when the noise level is high 
$ [1.0, 2.5]\  \pixels$, 
but remains stable (above $95$ \%) with the typical level of noise $[0.1, 0.5]\  \pixels$. 
Note that the set of experiments with noise $0.1$ \pixels 
attains also a recall metric of one for all the tests regardless the number of correspondences considered.

\textbf{Further Experiments on Effectiveness: Focal Length}
With the default set of parameters, 
we let the image size constant 
(approx. $1900$~\pixels) and  
modify the focal length of both cameras,  
selecting values from $f \in \{300, 400, 500, 700\}$ 
\pixels; 
the $\FOV$ of the camera is changed accordingly 
to (approx.) $\FOV \in \{  145, 134, 124, 107 \} $~\degrees, 
respectively. 
For each focal length, 
we generate random problem instances 
with the same range of correspondences 
employed before. 
We then compute the precision and recall metrics 
for these experiments 
\wrt the solution obtained from the SDP solver.  
Figure~\eqref{fig:prec-rec-focal} depicts the results,  
where we can observe that varying the 
focal length in this way  
leads to similar results than changing the noise level.  
For the smallest focal length 
($f = 300~\pixels$) 
we obtain the worst 
results, 
but we are still able to certify $90\%$ of the optimal solutions. 
For all the cases, we obtain a precision of $100\%$, 
which is not shown in the graphic for clarity. 
We perform similar experiments with 
constant $\FOV$ and varying focal length, 
hence changing the image size.  
For the same set of values for the focal length,  
we obtain similar results that 
are not included here due to space limits. 
This similarity between focal length 
and noise level 
agrees on how we are introducing the noise 
in our correspondences: 
we scale the (unit norm) vectors by the focal length, 
and then add noise sampled from a uniform distribution, 
which is independent of the focal length. 
Hence, the same amount of noise 
has a greater effect when the focal length is smaller.

\begin{figure}[t]
    \centering
    \includegraphics[width=0.9\linewidth]{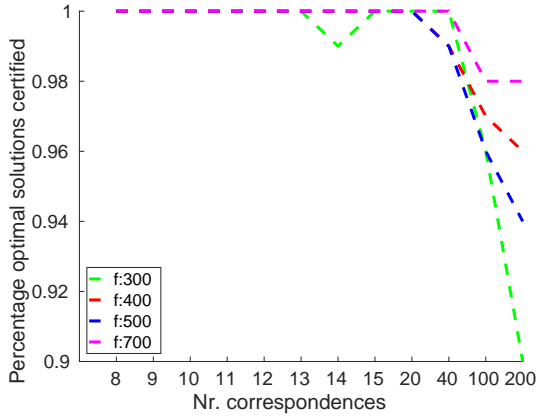}
    \caption{Recall metrics for 
    the set of experiments with 
    varying focal length 
    from $f \in \{300, 400, 500, 700\}$ 
    and default parameters. 
    The precision metrics are $100\%$ for all cases 
    and are not shown for clarity.}
    \label{fig:prec-rec-focal}
\end{figure}

\smallskip
\textbf{Performance of the Proposed Certifiable Pipeline} 

\begin{figure*}[h]
    \begin{subfigure}[t]{0.32\textwidth}
        \centering
        \includegraphics[width=\figureSize]{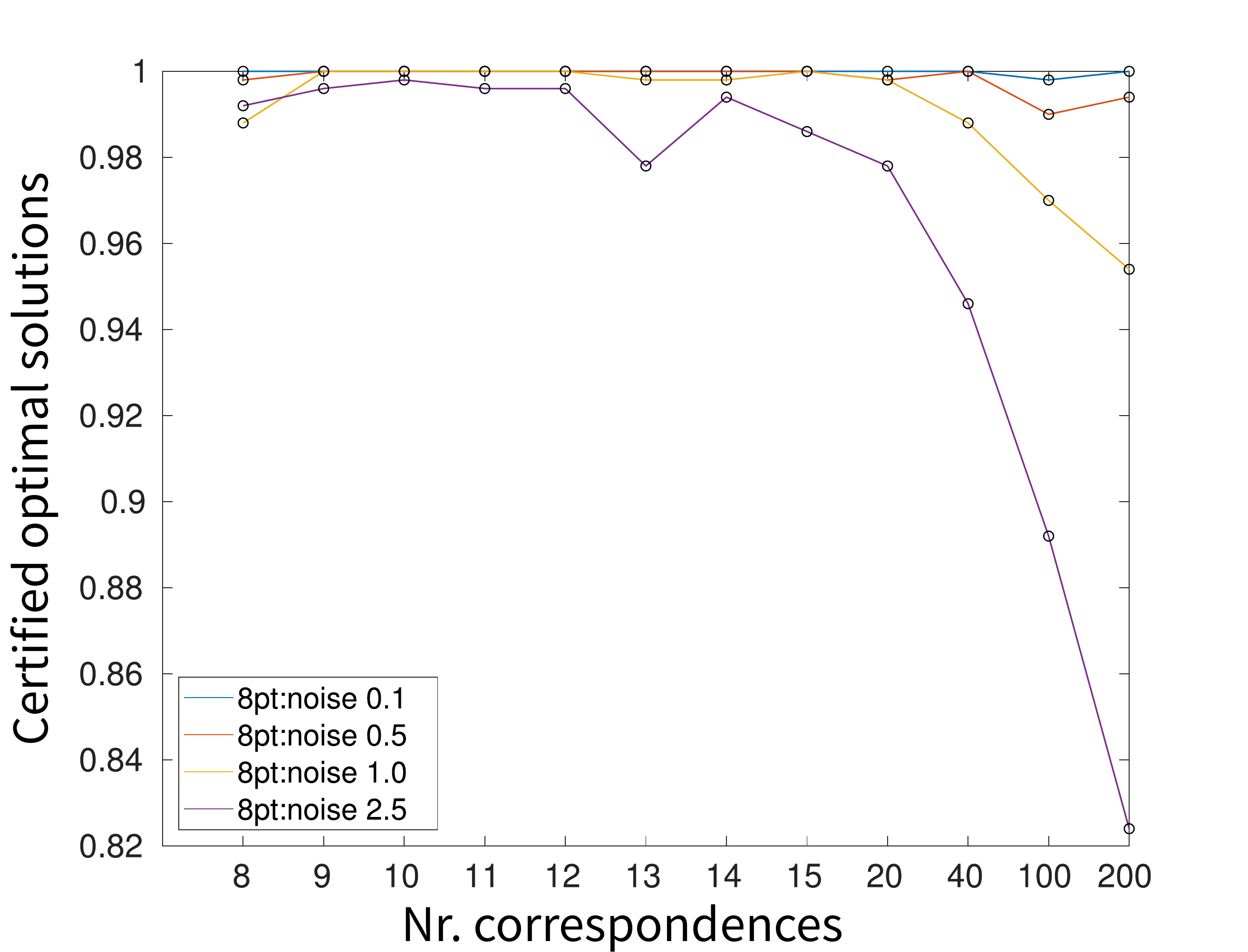}
        \caption{}
        \label{fig:cases-opt-8pts}
    \end{subfigure}
    \begin{subfigure}[t]{0.32\textwidth}
        \centering
        \includegraphics[width=\figureSize]{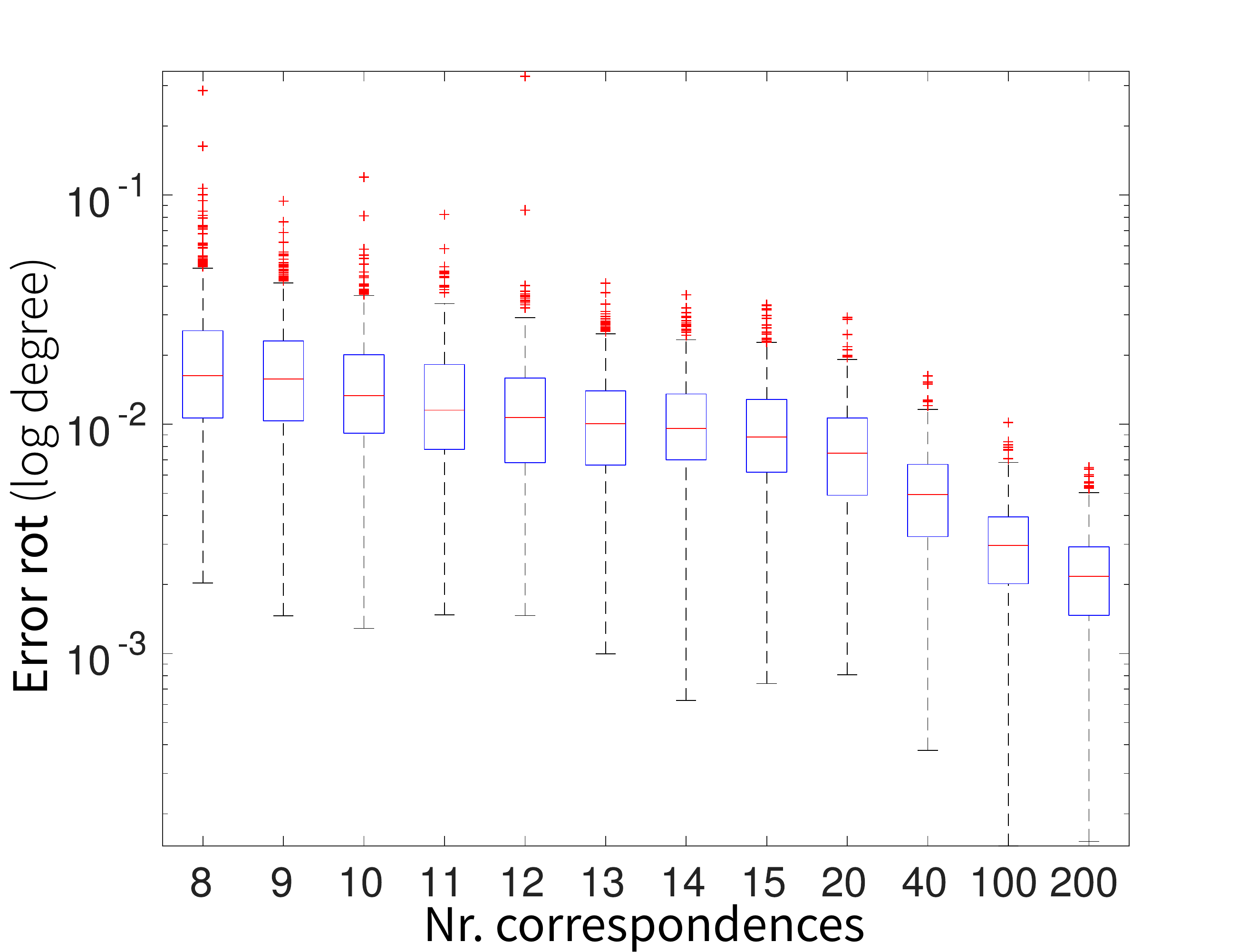}
        \caption{}
        \label{fig:error-rot-noise-01}
    \end{subfigure}
    \begin{subfigure}[t]{0.32\textwidth}
        \centering
        \includegraphics[width=\figureSize]{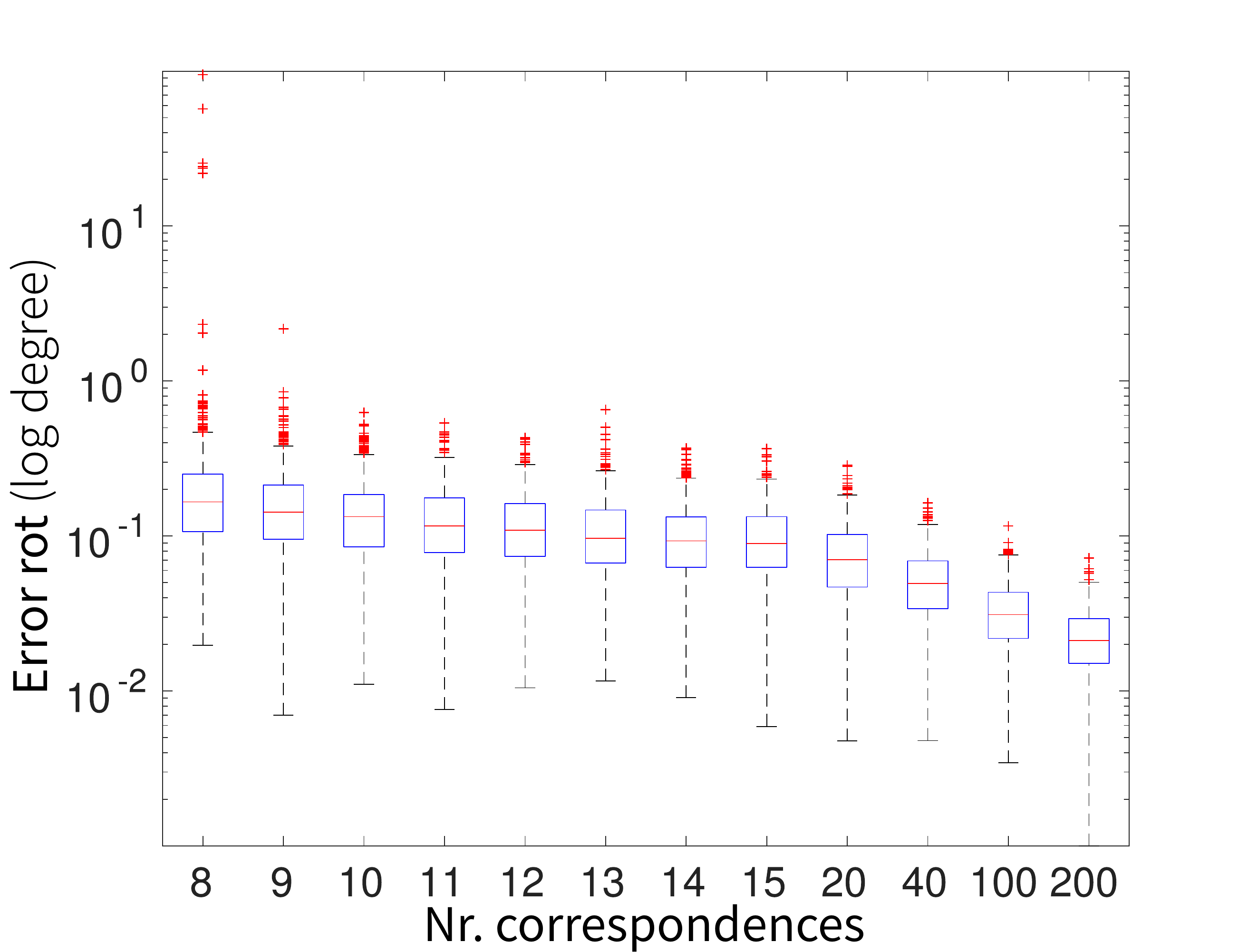}
        \caption{}
        \label{fig:error-rot-noise-10}
    \end{subfigure}
    \caption{(a) Percentage of cases in which the algorithm could certify optimality as a function of the number of correspondences and noise level. Error in rotation for the instances of the problem with noise 0.1 \pixels (b) and 1.0 \pixels (c). Please, note the logarithmic scale in the Y axis in figures (b) and (c).}
    \label{fig:syntheticnoise}
\end{figure*}

We consider the same four sets of experiments with the different combinations of noise/number of correspondences.
We feed the proposed pipeline with the initial guess estimated by the $\eightpt$ algorithm~\cite{hartley2003multiple} 
and apply the verification technique to the solution returned by the iterative method to certify its optimality \textit{a posteriori}.
Due to space limits, we only show the results for the cases with noise 0.1 and 1.0 but include the remainders in the \suppl. 
Figure \eqref{fig:cases-opt-8pts} depicts the percentage of cases in which the verification algorithm could certify optimality as a function of the number of points. One may note how the number of cases certified as optimal decreases with high levels of noise and large number of correspondences, following the tendency of the \textsc{recall} metric in Figure \eqref{fig:precisionrecall}. 
As a more intuitive measurement, Figure \eqref{fig:error-rot-noise-01} and \eqref{fig:error-rot-noise-10} plot the error in rotation (in degrees) for the set of experiments. The error is measured in terms of the geodesic distance between the estimated rotation  $\hat{\rot}$ and ground truth $\rot_{gt}$:
    \begin{equation}
        \epsilon_{\text{rot}} = \arccos \Big ( \frac{\tr(\hat{\rot}^T \rot_{\text{gt}}) - 1}{2}\Big) \frac{180}{\pi} [\degrees].
    \end{equation}

For completeness, under the same setup, we generate additional instances of the Relative Pose problem by fixing the level of noise to 0.5 \pixels and varying the FoV, parallax and number of correspondences, one at a time. While the statistics are given in the \suppl 
due to space limits, we can sketch the following conclusions:
(1) the number of (outer) iterations remains stable under five steps for all the conducted experiments when the iterative method is initialized with the \eightpt; and (2) while the number of certified optimal solutions does not seem affected by the varying FoV, the variation of the parallax produces a similar behavior to the change of noise. In \cite{ma2001optimization} it was remarked that, for a fixed level of noise, a variation on the parallax is equivalent to varying the signal-to-noise ratio, which agrees with our results.

Further, we observe in all these cases that instances of the relative pose problem whose numbers of correspondences are closer to the minimum (8-9) are more sensible to the noise level, FoV and parallax (see \eg Figure \eqref{fig:error-rot-noise-10}).

\begin{figure*}[ht]
    \begin{subfigure}[t]{0.32\textwidth}
        \centering
        \includegraphics[width=\figureSize]{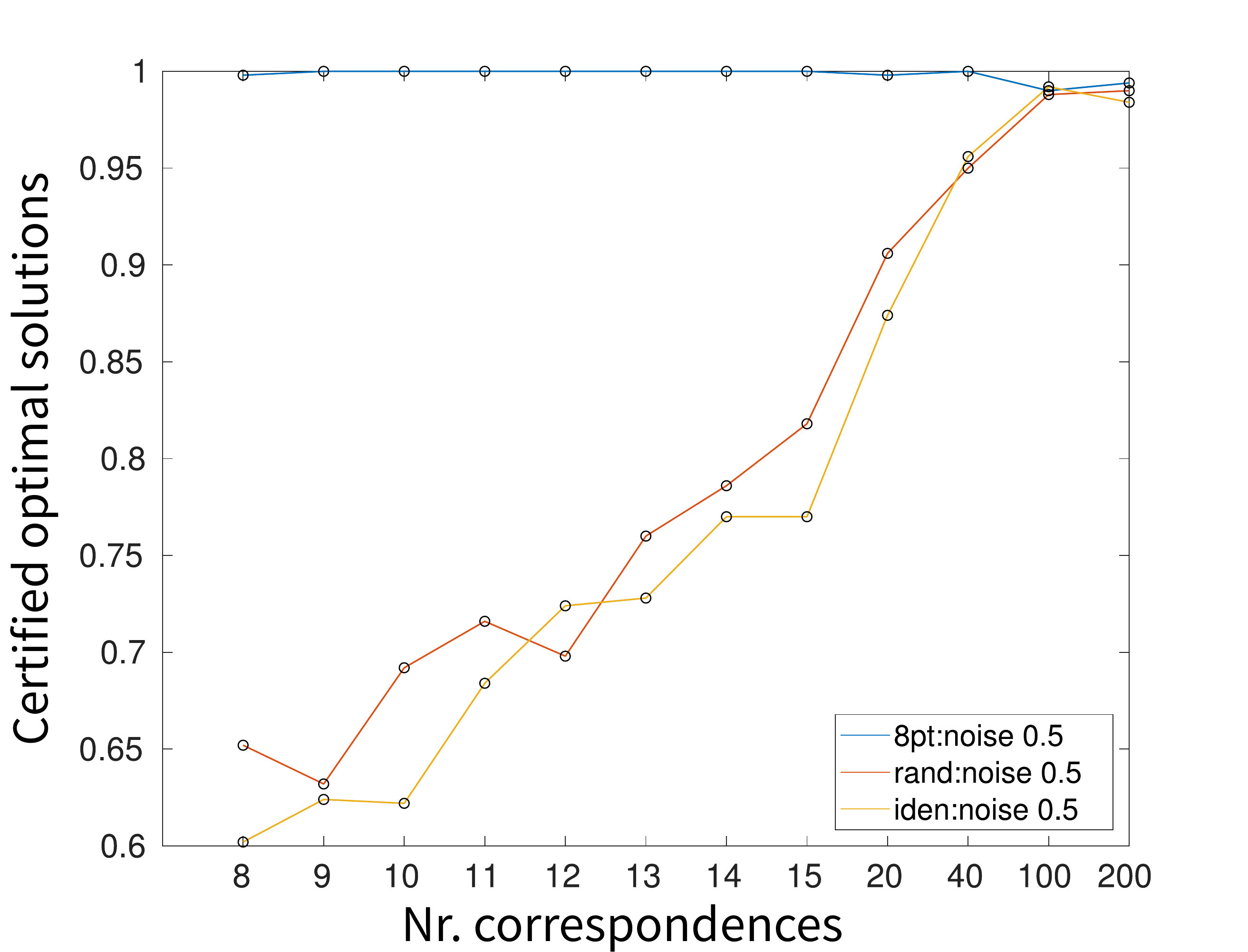}
        \caption{}
        \label{fig:cases-opt-init}
    \end{subfigure}
    \begin{subfigure}[t]{0.32\textwidth}
        \centering
        \includegraphics[width=\figureSize]{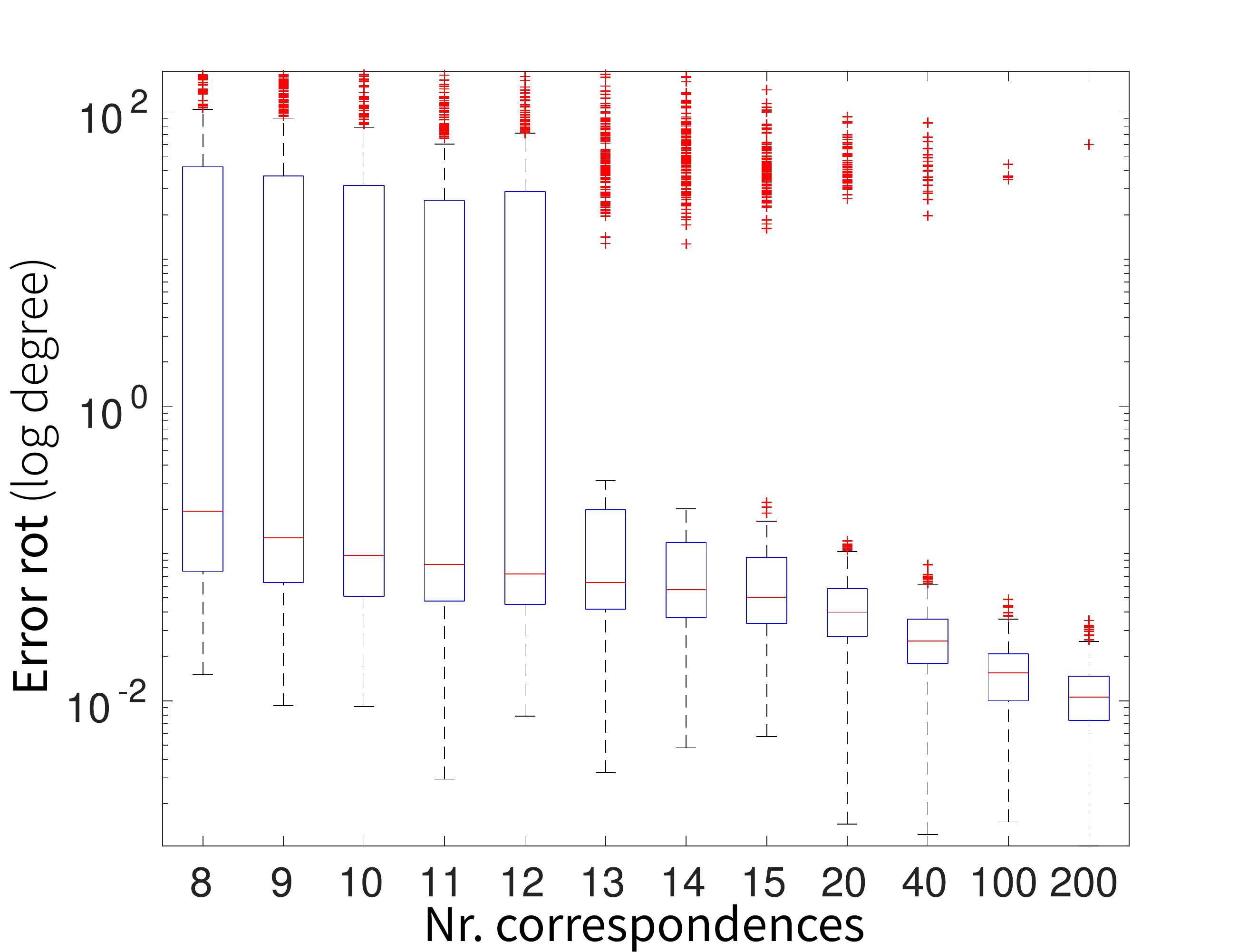}
        \caption{}
        \label{fig:error-rot-rand}
    \end{subfigure}
    \begin{subfigure}[t]{0.32\textwidth}
        \centering
        \includegraphics[width=\figureSize]{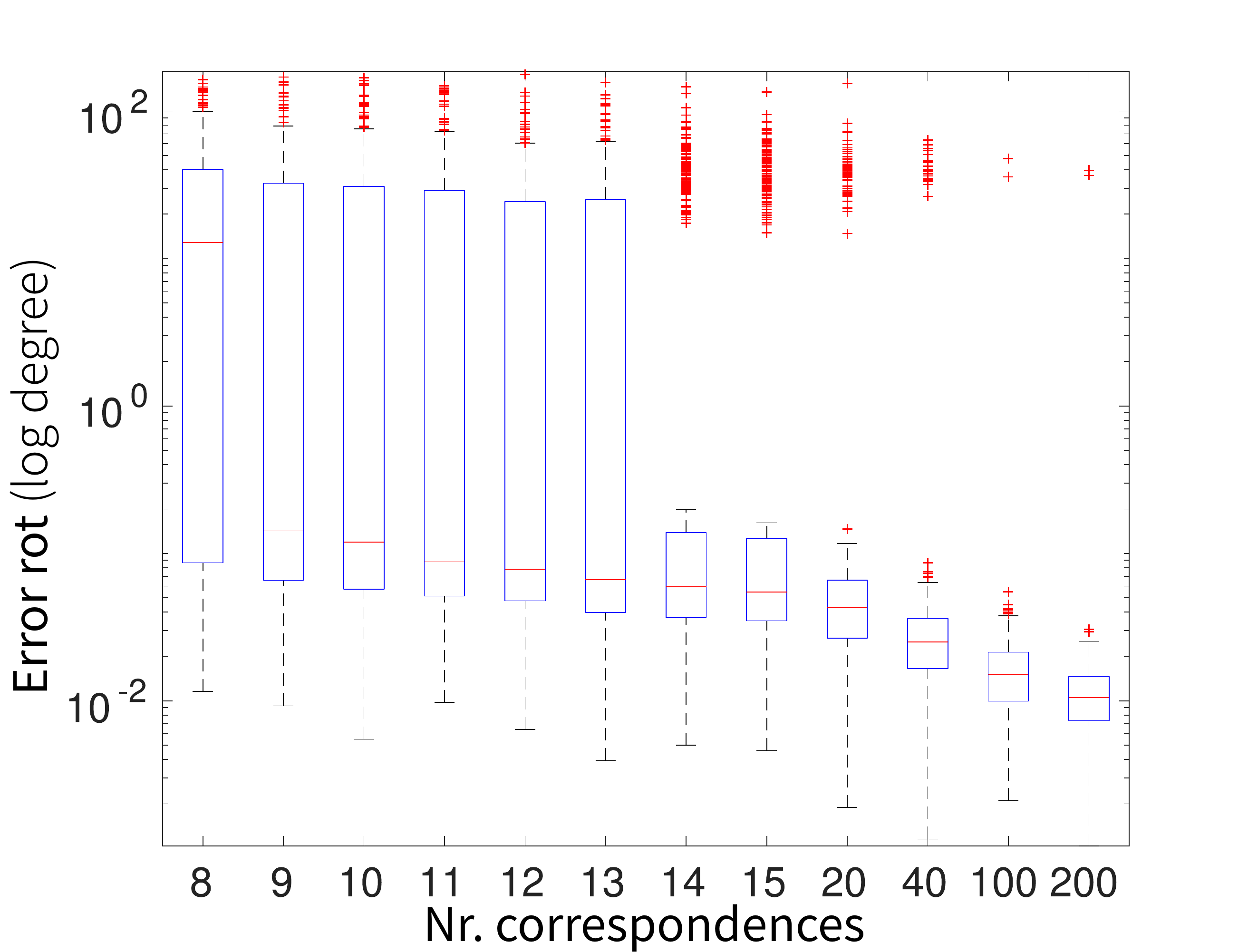}
        \caption{}
        \label{fig:error-rot-eye}
    \end{subfigure}
    \caption{(a) We plot the percentage of cases in which the algorithm certifies optimality for instances of the Relative Pose problem with fixed noise 0.5 \pixels  and different initialization. Error in rotation [degrees] for the final estimations whose refinement stages were initialized with a random guess (b) and identity matrix (c). Please, note the logarithmic scale in the Y axis in figures (b) and (c).}
    \label{fig:diffinitialguess}
\end{figure*}

\begin{figure*}[h]
\centering
    \begin{subfigure}[t]{0.49\textwidth}
        \centering
        \includegraphics[width=\figureSize]{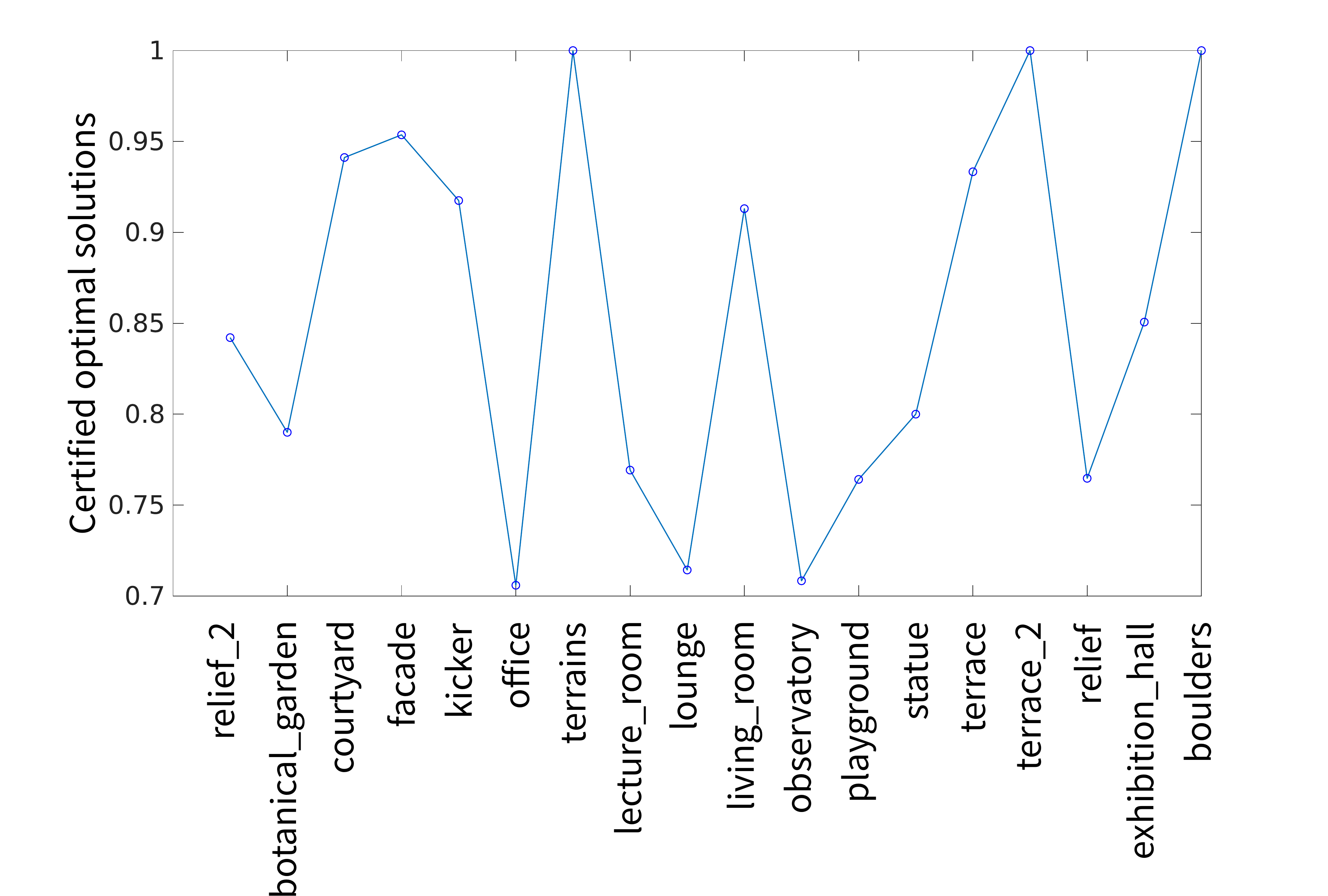}
        \caption{}
        \label{fig:real-exp-clean-opt}
    \end{subfigure}
    \begin{subfigure}[t]{0.49\textwidth}
        \centering
        \includegraphics[width=\figureSize]{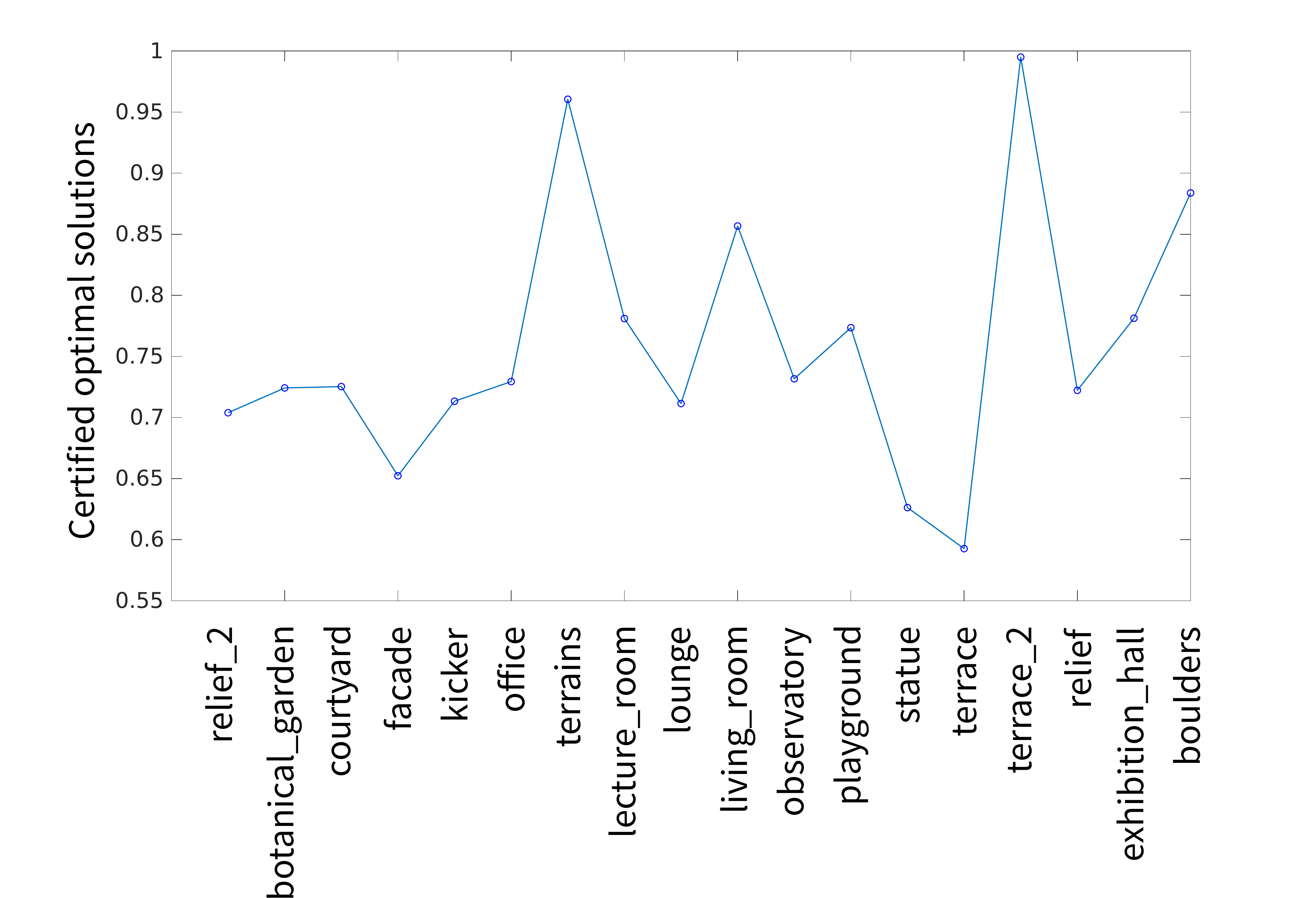}
        \caption{}
        \label{fig:real-exp-opt}
    \end{subfigure}
    \caption{(a) Averaged percentage of cases in which the algorithm certified optimality for the eighteen different sequences with pre-filtered outlier-free correspondences and (b) embedded in a RANSAC scheme.}
    \label{fig:real-exp-clean}
\end{figure*}
    
\begin{figure*}[h]
    \begin{subfigure}[t]{0.49\textwidth}
        \centering
        \hspace{-0.3cm}
        \includegraphics[width=0.9\textwidth]{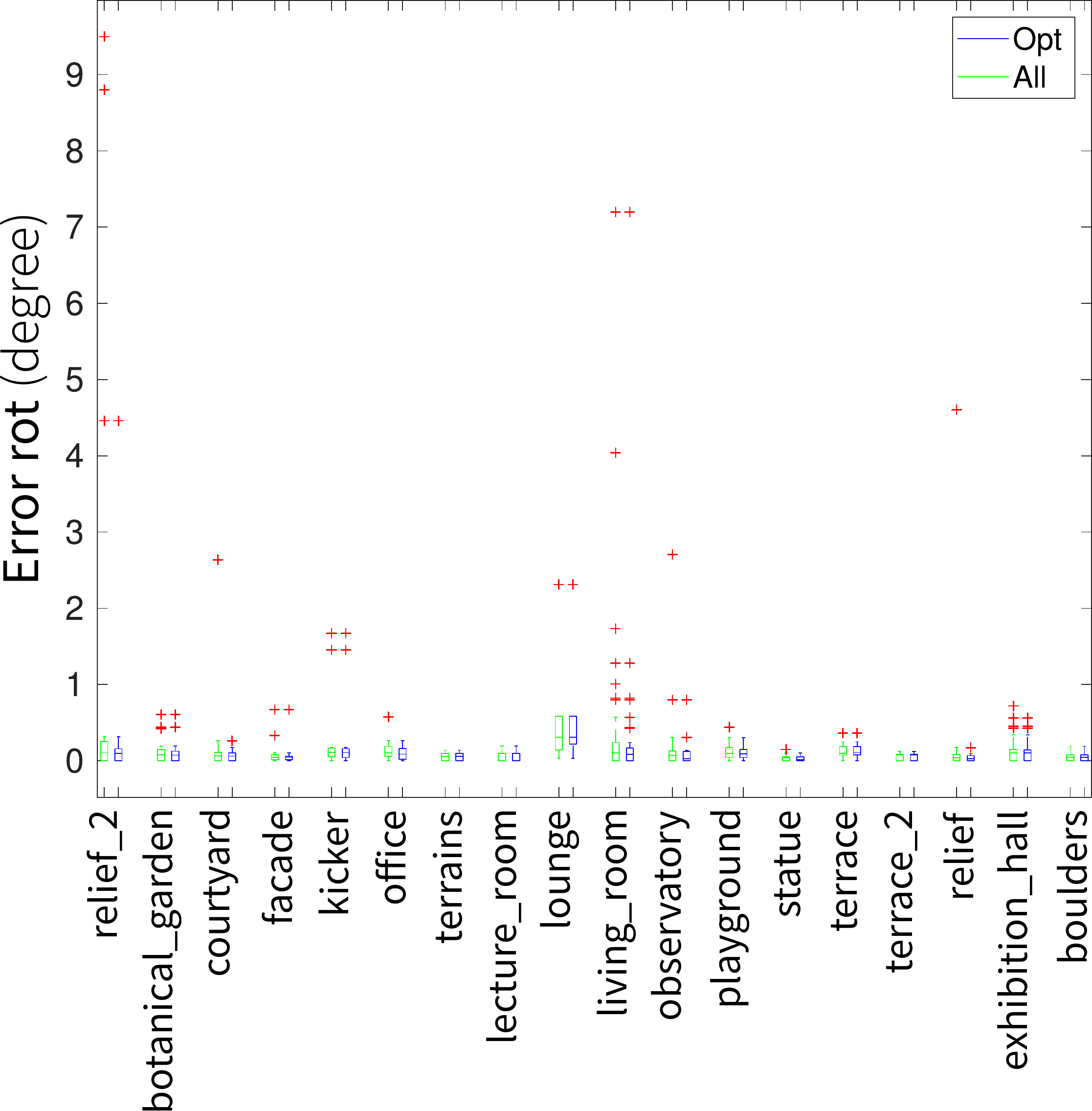}
        \caption{}
        \label{fig:real-exp-clean-error}
    \end{subfigure}
    \begin{subfigure}[t]{0.49\textwidth}
        \centering
        \hspace{-0.3cm}
        \includegraphics[width=0.91\textwidth]{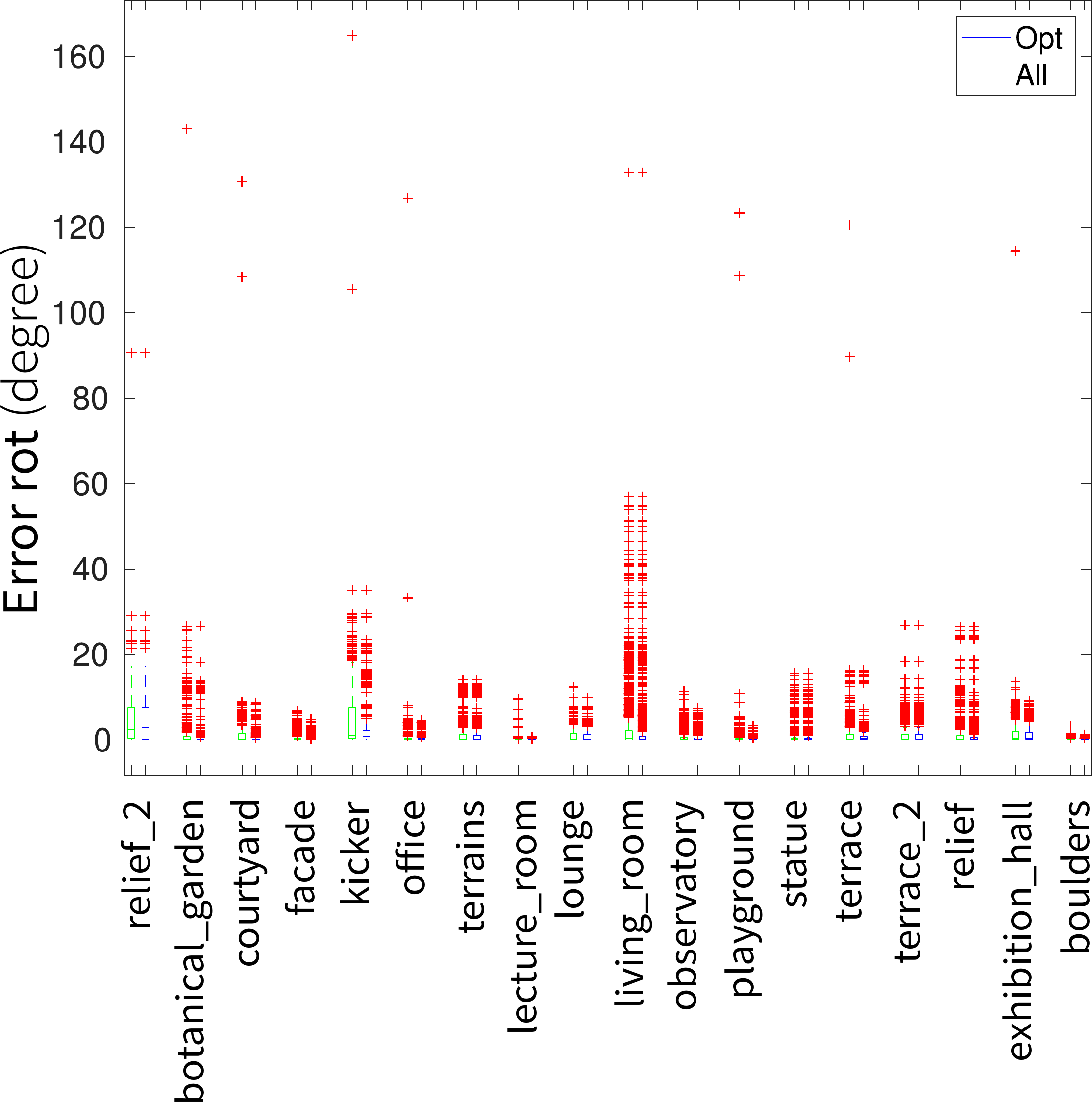}
        \caption{}
        \label{fig:real-exp-error}
    \end{subfigure}
    \caption{(a) Error in rotation (degrees) for the eighteen different sequences with pre-filtered outlier-free correspondences and directly (b) embedded in a RANSAC scheme. We plot all the results from the optimization (\textsc{All}) and the errors for those cases detected as optimal (\textsc{Opt}).}
    \label{fig:real-exp}
\end{figure*}

\smallskip
\textbf{Further Experiments}: 

A good initial guess is crucial in many problems which rely on iterative solvers in order to avoid local minima (for example, Pose Graph Optimization \cite{carlone2015duality}). To analyze the sensibility of the proposed pipeline to initialization quality, we generate 500 random instances of the Relative Pose problem following the above-mentioned procedure, with fixed parameters $\text{FoV} = 100\ (\degrees)$, $||\trans||_2 \in [0.5, 2.0] (\text{m})$, $\sigma = 0.5 \pixels$ and varying the number of correspondences, as it was previously done. In this case, however, we feed the iterative method with three different initial guesses: the trivial identity matrix, a random essential matrix and the resulting estimate from the \eightpt algorithm.  
Then, we apply our verification technique to each instance. 

We plot the percentage of cases in which the verification technique certified the solution as optimal as a function of the number for each initialization considered in Figure \eqref{fig:cases-opt-init} and the error in rotation (degrees) in \eqref{fig:error-rot-eye}, \eqref{fig:error-rot-rand} for the instances initialized with the identity matrix and a random guess, respectively. Figure \eqref{fig:nItersCases} shows the number of (outer) iterations required by the RTR solver to converge.
For the cases with the estimate from \eightpt algorithm as initial guess, 
the iterative method required less than five iterations to converge, 
while both the random and identity cases increase the iterations up to sixteen. 
We want to point out the (almost linear) decreasing tendency in the cases detected as suboptimal and
the error in rotation with the number of correspondences 
for the set of experiments with identity and random initial guesses. 
Further, with a large number of correspondences, 
the iterative method tends to return the optimal solution and 
the initial guess only affects the convergence rate.

\medskip 

\textbf{Analysis of Computational Cost}: 
The greatest advantage of our proposal against 
the state-of-the-art solvers
~\cite{ briales2018certifiably, Zhao2019} 
is the low computational cost, 
both for the actual solver 
(initialization and optimization on manifold) 
and for the certificate of optimality 
(Algorithm~\eqref{alg:naiveverification}). 
Note however that we implement our proposal in matlab; 
a faster, more efficient implementation 
is considered as future work. 
Nevertheless we can still draw some intuition 
for the computational cost of our proposal 
and its potential. 
The following times are the average values 
for all our experiments, 
which were performance 
with a standard PC: CPU i7-4702MQ,
2.2GHz and 8 GB RAM. 
The candidate computation 
(solving Equation~\eqref{eq:systeminlambdahat}) 
takes $0.23723$ milliseconds, 
while defining and 
computing the minimum eigenvalue of the Hessian 
goes to $0.13332$ msecs. 
In total, considering 
initialization 
(here we employ the \eightpt as standard initialization), 
refinement on manifold 
($3-4$ iterations till convergence, 
see Figure~\eqref{fig:evCostFcnIters})  
and certification, 
together with some other operations, 
\eg projection on manifold or definition of problem, 
our certifiable pipeline takes only $41$ milliseconds. 
Since the faster certifiable solver
\footnote{Date of this document} 
is the one proposed by Zhao in~\cite{Zhao2019}, 
we only measure the computational cost 
of this proposal. 
The interested reader is referred to 
that work for a comparison 
against other state-of-the-art solvers. 
To perform a fair comparison, 
this SDP solver was implemented in \textsc{Matlab}, 
modeled with \textsc{cvx} and the IPM was solved with \textsc{SDPT3}. 
As expected, the computational cost is larger 
and takes around $150$ milliseconds, 
without including modeling 
or other operations, 
such as data matrix creation, 
low rank decomposition 
or relative pose recovery. 


\subsection{Experimental Validation with Real Data}
We conclude this Section with the evaluation of the proposed certifiable pipeline on real data. We sample pairs of images 
from 18 different (multi-view) sequences in the ETH3D dataset~\cite{schops2017multi}, which covers both indoor and outdoor scenes and provides with ground-truth camera poses and intrinsic camera calibration. 

To generate the correspondences, we proceed as follows. First, we extract and match 100 SURF~\cite{bay2006surf} features per pair of images. Next, we obtain the corresponding bearing vectors by using the pin-hole camera model~\cite{hartley2003multiple} with the intrinsic parameters provided for each frame. From here, we conduct two types of experiments with all the image pairs. These sets of experiments are aimed to mainly reflect: (first type) the performance of the proposed pipeline under real noise; and (second type) the importance of our novel certification technique when dealing explicitly with outliers. Further, for each image pair we execute 100 times each type of experiment to provide statistically meaningful results.

\smallskip
\textbf{Experiments on real data with pre-filtered outliers}: 
We filter outliers by removing all the correspondences whose associated algebraic error \wrt the ground truth essential matrix is greater than a given fixed threshold $\epsilon_{\text{error}}$, \ie we consider as inliers all the correspondences ($\obsip, \obsi$) such that $(\obsi^T\Ess_{\text{gt}}\obsip) ^2 < \epsilon_{\text{error}}$. We run the proposed certifiable algorithm with this set of outlier-free correspondences. Figure \eqref{fig:real-exp-clean-opt} shows the averaged percentage of cases in which we could certify an optimal solution and, as in the previous section, we also plot the error in rotation measured in degrees \wrt the provided ground-truth in Figure \eqref{fig:real-exp-clean-error}. We depict the rotation error for all the returned solutions regardless their optimality (\textsc{All}) and only the values for those cases detected as optimal (\textsc{Opt}). We certify more than 70\% of image pairs as optimal in all the sequences, which is reflected as low rotation errors in Figure \eqref{fig:real-exp-clean-error}. We want to point out that those cases with optimal solutions tend to attain lower errors \wrt the ground truth rotation.

\smallskip
\textbf{Experiments on real data embedded in a RANSAC scheme}: 

In this second type of experiments, we filter outliers by embedding the initialization stage into a RANSAC~\cite{fischler1981random} scheme in order to both detect the set of inliers and obtain the initial guess. Note that in this case, both the initialization and the set of employed points during the subsequent Riemannian optimization depend on the RANSAC solution, which may not be accurate enough. Bad RANSAC solutions may lead to an optimization problem with corrupted data (outliers remain in the inlier set) and/or missing inliers~\cite{botterill2011refining}.
However, even in these cases, 
one can still find the optimal solution to the problem given the provided (corrupted) data. Nevertheless, in these cases it is expected that the optimal solutions may attain large rotation errors \wrt the ground truth. 
We want to remark that one can employ other paradigms to select the set of inliers; here we choose RANSAC as example for being widely employed in the literature~\cite{fischler1981random}.

Figure \eqref{fig:real-exp-opt} shows the averaged number of cases in which our algorithm could certify optimality given the same set of image pairs and points that in the previous type of experiments, while Figure \eqref{fig:real-exp-error} shows the error in rotation. We again depict the error for all the cases after the optimization \textsc{All} and the error for only those cases with certified optimal solution \textsc{Opt}. 
As expected, the rate of certified optimal solutions decreases, which is reflected as larger rotation errors (a similar behavior is depicted in~\cite{botterill2011refining}). Note, however, how we obtain lower errors when only considering the optimal solutions; we also note that some instances with optimal solutions still attain large errors, behavior which is expected if the estimate set of inliers by RANSAC contain outliers and/or present missing inliers, as it was outlined above.


\section{Conclusions and Future Work}
In this work we have proposed a formulation of the non-minimal Relative Pose problem whose dual problem admits a closed-form solution given the primal solution. This allows us to build a very fast \textit{a posteriori} certifier for candidate primal solutions.
We have provided the first certifiable pipeline for the Essential matrix estimation which, given a set of $N$ correspondence pairs: (1) generates the initial guess (\eg with the \eightpt algorithm or simply with the identity matrix); (2) refines the initialization with a local, iterative method operating directly on the Essential Matrix manifold; and last, (3) certifies the optimality of the returned solution with our novel certification procedure, which employs the proposed closed-form expression for dual candidates. Extensive experiments on both synthetic and real data under a wide variety of conditions support our claims.
The preliminary results obtained with our Matlab implementation show highly promising for hardening into a much faster C++ implementation, that we intend to explore and benchmark in future work. 
Further, we contemplate the exploration of fast dual solvers leveraging tighter relaxations in order to tackle the failure cases the current certifier may present. 

\section*{Declaration of Competing Interest}
The authors declare no conflict of interest. The funding entities had no role in the design of
the study; in the collection, analyses, or interpretation of data; in the writing of the manuscript, or in the decision
to publish the results.

\section*{Acknowledgements}
This work was supported by the research project WISER (DPI2017-84827-R), 
as well as by the Spanish grant program FPU18/01526. 
The publication of this paper has been funded by the University of Malaga.


\begin{thebibliography}{10}

\bibitem{triggs1999bundle}
Bill Triggs, Philip~F McLauchlan, Richard~I Hartley, and Andrew~W Fitzgibbon.
\newblock Bundle adjustment—a modern synthesis.
\newblock In {\em International workshop on vision algorithms}, pages 298--372.
  Springer, 1999.

\bibitem{mur2015orb}
Raul Mur-Artal, Jose Maria~Martinez Montiel, and Juan~D Tardos.
\newblock Orb-slam: a versatile and accurate monocular slam system.
\newblock {\em IEEE transactions on robotics}, 31(5):1147--1163, 2015.

\bibitem{gomez2019pl}
Ruben Gomez-Ojeda, Francisco-Angel Moreno, David Zu{\~n}iga-No{\"e}l, Davide
  Scaramuzza, and Javier Gonzalez-Jimenez.
\newblock Pl-slam: A stereo slam system through the combination of points and
  line segments.
\newblock {\em IEEE Transactions on Robotics}, 35(3):734--746, 2019.

\bibitem{ozyecsil2017survey}
Onur {\"O}zye{\c{s}}il, Vladislav Voroninski, Ronen Basri, and Amit Singer.
\newblock A survey of structure from motion*.
\newblock {\em Acta Numerica}, 26:305--364, 2017.

\bibitem{kneip2013direct}
Laurent Kneip and Simon Lynen.
\newblock Direct optimization of frame-to-frame rotation.
\newblock In {\em Proceedings of the IEEE International Conference on Computer
  Vision}, pages 2352--2359, 2013.

\bibitem{hartley2003multiple}
Richard Hartley and Andrew Zisserman.
\newblock {\em Multiple view geometry in computer vision}.
\newblock Cambridge university press, 2003.

\bibitem{bandeira2016note}
Afonso~S Bandeira.
\newblock A note on probably certifiably correct algorithms.
\newblock {\em Comptes Rendus Mathematique}, 354(3):329--333, 2016.

\bibitem{boyd2004convex}
Stephen Boyd and Lieven Vandenberghe.
\newblock {\em Convex optimization}.
\newblock Cambridge university press, 2004.

\bibitem{briales2018certifiably}
Jesus Briales, Laurent Kneip, and Javier Gonzalez-Jimenez.
\newblock A certifiably globally optimal solution to the non-minimal relative
  pose problem.
\newblock In {\em Proceedings of the IEEE Conference on Computer Vision and
  Pattern Recognition}, pages 145--154, 2018.

\bibitem{Zhao2019}
Ji~Zhao.
\newblock An efficient solution to non-minimal case essential matrix
  estimation.
\newblock {\em IEEE Transactions on Pattern Analysis and Machine Intelligence},
  2020.

\bibitem{briales2016fast}
Jesus Briales and Javier Gonzalez-Jimenez.
\newblock Fast global optimality verification in 3d slam.
\newblock In {\em 2016 IEEE/RSJ International Conference on Intelligent Robots
  and Systems (IROS)}, pages 4630--4636. IEEE, 2016.

\bibitem{carlone2015duality}
Luca Carlone and Frank Dellaert.
\newblock Duality-based verification techniques for 2d slam.
\newblock In {\em 2015 IEEE international conference on robotics and automation
  (ICRA)}, pages 4589--4596. IEEE, 2015.

\bibitem{manopt}
N.~Boumal, B.~Mishra, P.-A. Absil, and R.~Sepulchre.
\newblock {M}anopt, a {M}atlab toolbox for optimization on manifolds.
\newblock {\em Journal of Machine Learning Research}, 15:1455--1459, 2014.

\bibitem{botterill2011refining}
Tom Botterill, Steven Mills, and Richard Green.
\newblock Refining essential matrix estimates from ransac.
\newblock In {\em Proceedings Image and Vision Computing New Zealand}, pages
  1--6, 2011.

\bibitem{stewenius2006recent}
Henrik Stewenius, Christopher Engels, and David Nist{\'e}r.
\newblock Recent developments on direct relative orientation.
\newblock {\em ISPRS Journal of Photogrammetry and Remote Sensing},
  60(4):284--294, 2006.

\bibitem{nister2004efficient}
David Nist{\'e}r.
\newblock An efficient solution to the five-point relative pose problem.
\newblock {\em IEEE transactions on pattern analysis and machine intelligence},
  26(6):756--770, 2004.

\bibitem{kukelova2007two}
Zuzana Kukelova and Tomas Pajdla.
\newblock Two minimal problems for cameras with radial distortion.
\newblock In {\em 2007 IEEE 11th International Conference on Computer Vision},
  pages 1--8. IEEE, 2007.

\bibitem{kukelova2008polynomial}
Zuzana Kukelova, Martin Bujnak, and Tomas Pajdla.
\newblock Polynomial eigenvalue solutions to the 5-pt and 6-pt relative pose
  problems.
\newblock In {\em BMVC}, volume~2, page 2008, 2008.

\bibitem{longuet1981computer}
H~Christopher Longuet-Higgins.
\newblock A computer algorithm for reconstructing a scene from two projections.
\newblock {\em Nature}, 293(5828):133--135, 1981.

\bibitem{hartley1997defense}
Richard~I Hartley.
\newblock In defense of the eight-point algorithm.
\newblock {\em IEEE Transactions on pattern analysis and machine intelligence},
  19(6):580--593, 1997.

\bibitem{helmke2007essential}
Uwe Helmke, Knut H{\"u}per, Pei~Yean Lee, and John Moore.
\newblock Essential matrix estimation using gauss-newton iterations on a
  manifold.
\newblock {\em International Journal of Computer Vision}, 74(2):117--136, 2007.

\bibitem{ma2001optimization}
Yi~Ma, Jana Ko{\v{s}}eck{\'a}, and Shankar Sastry.
\newblock Optimization criteria and geometric algorithms for motion and
  structure estimation.
\newblock {\em International Journal of Computer Vision}, 44(3):219--249, 2001.

\bibitem{lui2013iterative}
Vincent Lui and Tom Drummond.
\newblock An iterative 5-pt algorithm for fast and robust essential matrix
  estimation.
\newblock In {\em BMVC}, 2013.

\bibitem{tron2017space}
Roberto Tron and Kostas Daniilidis.
\newblock The space of essential matrices as a riemannian quotient manifold.
\newblock {\em SIAM Journal on Imaging Sciences}, 10(3):1416--1445, 2017.

\bibitem{yang2014optimal}
Jiaolong Yang, Hongdong Li, and Yunde Jia.
\newblock Optimal essential matrix estimation via inlier-set maximization.
\newblock In {\em European Conference on Computer Vision}, pages 111--126.
  Springer, 2014.

\bibitem{hartley2007global}
Richard~I Hartley and Fredrik Kahl.
\newblock Global optimization through searching rotation space and optimal
  estimation of the essential matrix.
\newblock In {\em 2007 IEEE 11th International Conference on Computer Vision},
  pages 1--8. IEEE, 2007.

\bibitem{ding2007efficient}
Yichuan Ding.
\newblock On efficient semidefinite relaxations for quadratically constrained
  quadratic programming.
\newblock Master's thesis, University of Waterloo, 2007.

\bibitem{toh1999sdpt3}
Kim-Chuan Toh, Michael~J Todd, and Reha~H T{\"u}t{\"u}nc{\"u}.
\newblock Sdpt3—a matlab software package for semidefinite programming,
  version 1.3.
\newblock {\em Optimization methods and software}, 11(1-4):545--581, 1999.

\bibitem{yamashita2003implementation}
Makoto Yamashita, Katsuki Fujisawa, and Masakazu Kojima.
\newblock Implementation and evaluation of sdpa 6.0 (semidefinite programming
  algorithm 6.0).
\newblock {\em Optimization Methods and Software}, 18(4):491--505, 2003.

\bibitem{song2013parallel}
Shiyu Song, Manmohan Chandraker, and Clark~C Guest.
\newblock Parallel, real-time monocular visual odometry.
\newblock In {\em 2013 ieee international conference on robotics and
  automation}, pages 4698--4705. IEEE, 2013.

\bibitem{yang2020teaser}
Heng Yang, Jingnan Shi, and Luca Carlone.
\newblock Teaser: Fast and certifiable point cloud registration.
\newblock {\em arXiv preprint arXiv:2001.07715}, 2020.

\bibitem{carlone2015lagrangian}
Luca Carlone, David~M Rosen, Giuseppe Calafiore, John~J Leonard, and Frank
  Dellaert.
\newblock Lagrangian duality in 3d slam: Verification techniques and optimal
  solutions.
\newblock In {\em 2015 IEEE/RSJ International Conference on Intelligent Robots
  and Systems (IROS)}, pages 125--132. IEEE, 2015.

\bibitem{briales2017cartan}
Jesus Briales and Javier Gonzalez-Jimenez.
\newblock Cartan-sync: Fast and global se (d)-synchronization.
\newblock {\em IEEE Robotics and Automation Letters}, 2(4):2127--2134, 2017.

\bibitem{rosen2019se}
David~M Rosen, Luca Carlone, Afonso~S Bandeira, and John~J Leonard.
\newblock Se-sync: A certifiably correct algorithm for synchronization over the
  special euclidean group.
\newblock {\em The International Journal of Robotics Research},
  38(2-3):95--125, 2019.

\bibitem{faugeras1990motion}
Olivier~D Faugeras and Steve Maybank.
\newblock Motion from point matches: multiplicity of solutions.
\newblock {\em International Journal of Computer Vision}, 4(3):225--246, 1990.

\bibitem{sturm1999using}
Jos~F Sturm.
\newblock Using sedumi 1.02, a matlab toolbox for optimization over symmetric
  cones.
\newblock {\em Optimization methods and software}, 11(1-4):625--653, 1999.

\bibitem{nocedal2006numerical}
Jorge Nocedal and Stephen Wright.
\newblock {\em Numerical optimization}.
\newblock Springer Science \& Business Media, 2006.

\bibitem{wachsmuth2013licq}
Gerd Wachsmuth.
\newblock On licq and the uniqueness of lagrange multipliers.
\newblock {\em Operations Research Letters}, 41(1):78--80, 2013.

\bibitem{absil2009optimization}
P-A Absil, Robert Mahony, and Rodolphe Sepulchre.
\newblock {\em Optimization algorithms on matrix manifolds}.
\newblock Princeton University Press, 2009.

\bibitem{gb08}
Michael Grant and Stephen Boyd.
\newblock Graph implementations for nonsmooth convex programs.
\newblock In V.~Blondel, S.~Boyd, and H.~Kimura, editors, {\em Recent Advances
  in Learning and Control}, Lecture Notes in Control and Information Sciences,
  pages 95--110. Springer-Verlag Limited, 2008.

\bibitem{schops2017multi}
Thomas Schops, Johannes~L Schonberger, Silvano Galliani, Torsten Sattler,
  Konrad Schindler, Marc Pollefeys, and Andreas Geiger.
\newblock A multi-view stereo benchmark with high-resolution images and
  multi-camera videos.
\newblock In {\em Proceedings of the IEEE Conference on Computer Vision and
  Pattern Recognition}, pages 3260--3269, 2017.

\bibitem{bay2006surf}
Herbert Bay, Tinne Tuytelaars, and Luc Van~Gool.
\newblock Surf: Speeded up robust features.
\newblock In {\em European conference on computer vision}, pages 404--417.
  Springer, 2006.

\bibitem{fischler1981random}
Martin~A Fischler and Robert~C Bolles.
\newblock Random sample consensus: a paradigm for model fitting with
  applications to image analysis and automated cartography.
\newblock {\em Communications of the ACM}, 24(6):381--395, 1981.

\end{thebibliography}

\newpage
\clearpage
\appendices
\onecolumn
\setcounter{MaxMatrixCols}{20}

\section{Equivalence between the objectives in Problem \eqref{eq:originalproblem}} \label{app:equivalence-objectives-QCQP}
Following we prove the equality in the objective function for Problem \eqref{eq:originalproblem}.

Let us express the original cost function as,
\begin{equation}
    \fE = \sum_{i=1}^N \big (f_i(\Ess) \big )^2,
\end{equation}
where each element in the objective function is defined as,
\begin{equation}
    f_i(\Ess) = \obsi^T \Ess\obsi'.
\end{equation}
The identities
\begin{align}
    & \vec{\matA \matX \matB} = (\boldsymbol{B}^T \kron \boldsymbol{A})\vec{\matX} \\
    & (a ^T \kron b ^T) = (a \kron b ) ^T,
\end{align}
allow us to reformulate each term as 
\begin{equation}
    f_i(\Ess) = (\obsip^T \kron \obsi^T) \vec{\matE} = (\obsip \kron \obsi) ^T \vec{\matE}.
\end{equation}
The contribution of each element $\big (f_i(\Ess) \big)^2$ is then written as
\begin{align}
    \big (f_i(\Ess) \big)^2 &= \big((\obsip \kron \obsi) ^T \vec{\matE} \big) ^2 = \\
    &= \big((\obsip \kron \obsi) ^T \vec{\matE} \big)^T(\obsip \kron \obsi) ^T \vec{\matE} = \\
    &= \vec{\matE} ^T (\obsip \kron \obsi) (\boldsymbol{f}'_i \kron \obsi) ^T \vec{\matE} = \\
    &= \vec{\matE}^T \boldsymbol{C}_i \vec{\matE}, 
\end{align}
where we have defined the PSD matrix $ \boldsymbol{C}_i$ as
\begin{equation}
    \boldsymbol{C}_i = (\obsip\kron \obsi) (\obsip \kron \obsi) ^T  \in \symmPlus{9}, \label{eq:definitionC}
\end{equation}
with $\kron$ being the Kronecker product.

The cost function can be therefore written as
\begin{align}
    \fE &= \sum_{i=1} ^N f_i(\Ess) = \sum_{i=1}^N \vec{\matE}^T \boldsymbol{C}_i \vec{\matE} = \\
    & = \vec{\matE}^T \big(\sum_{i=1}^N \boldsymbol{C}_i \big) \vec{\matE} = \vec{\matE}^T \boldsymbol{C} \vec{\matE}, \label{eq:objectivee}
\end{align}
with $\boldsymbol{C} = \sum_{i=1}^N \boldsymbol{C}_i$.

The expression \eqref{eq:objectivee} can be re-formulated in terms of $\boldsymbol{x} = [\vec{\matE}^T, \trans]^T \in \mathbb{R}^{12}$ by defining the extended matrix \textbf{Q} and padding with zeros:
\begin{equation}
    \boldsymbol{Q} = 
    \begin{bmatrix}
    \sum_{i = 1} ^N \boldsymbol{C}_i  & \boldsymbol{0}_{9\times 3} \\
    \boldsymbol{0}_{3 \times 9} & \boldsymbol{0}_{3 \times 3} 
    \end{bmatrix}. 
\end{equation}
The objective function is therefore expressed as
\begin{equation}
    \fE = \boldsymbol{x}^T \boldsymbol{Qx}, \label{eq:objquad}
\end{equation}
which is the expression that appears in the primal problem \eqref{eq:primalproblem}.

\section{The Relaxation Set $\SetRelaxedEssentialMatrices$ in \eqref{eq:Me:min} is a strict superset of $\SetEssentialMatrices$} \label{app:Er-relaxation-E}
We prove here that the constraint set in \eqref{eq:Me:min} is indeed a relaxation of the space defined in \eqref{eq:Me:EEt} and therefore $\SetEssentialMatrices \subset \SetRelaxedEssentialMatrices$. For that, let us define the matrix $\Ess \Ess^T$ whose entries are given by the constraints in \eqref{eq:Me:min}:
\begin{equation}
    \Ess \Ess^T = 
    \begin{pmatrix}
    t_2^2 + t_3 ^2 & k & -t_1 t_3 \\
    k & t_1^2 + t_3 ^2 & -t_2 t_3 \\
    -t_1 t_3 & -t_2 t_3 & t_1^2 + t_2 ^2 
    \end{pmatrix},
    \quad\forall \Ess \in \SetRelaxedEssentialMatrices, \trans \in \sphere , k \in \Reals{}.
\end{equation}
For $\Ess$ as defined above to belong to $\SetEssentialMatrices$, its determinant must be zero and so will be the determinant of $\Ess \Ess^T$. Applying Laplace's formula and reordering:
\begin{equation}
    \det(\Ess \Ess^T) = 
   -k^2(t_1^2 + t_2^2) + 2 k t_1 t_2 t_3^2 + 2t_1^2 t_2 ^2 t_3 ^2 + t_1 ^2 t_2 ^2 (t_1 ^2 + t_2 ^2), 
\end{equation}
whose roots are given by
\begin{align}
    k_1 &= -t_1 t_2 \\
    k_2 &=  t_1 t_2 \frac{1 + t_3 ^2}{1-t_3^2}.
\end{align}
Since these conditions do not hold in general $\forall k \in \Reals{}$, the matrices defined by \eqref{eq:Me:min} do not need to be rank deficient. Therefore, the set defined by $\SetRelaxedEssentialMatrices$ is larger than $\SetEssentialMatrices$, and thus, it represents a relaxation of the latter, proving the claim of the main document.

\section{Lagrangian Dual Problem of the Primal Problem \eqref{eq:primalproblem}} \label{app:dual-problem-dev}
In order to define the dual problem \cite{boyd2004convex}, we follow the usual procedure. This development was originally given in \cite{Zhao2019} with seven constraints instead of the six employed in this work. The adaptation is therefore trivial, but we include it here for completeness. We start by writing the \textit{Lagrangian function} as
\begin{align}
    \mathcal{L}(\boldsymbol{x}, \bm{\lambda}) &= \boldsymbol{x} ^T \boldsymbol{Q} \boldsymbol{x} + \sum_{i=2} ^6 \lambda_i (- \boldsymbol{x}^T \boldsymbol{A}_ i \boldsymbol{x}) + \lambda_1 (1 - \boldsymbol{x}^T \boldsymbol{A}_ 1 \boldsymbol{x}) = \nonumber \\
    &=
    \boldsymbol{x} ^T \boldsymbol{Q} \boldsymbol{x} + \sum_{i=1} ^6 \lambda_i (- \boldsymbol{x}^T \boldsymbol{A}_ i \boldsymbol{x})  + \lambda_1 = \boldsymbol{x}^T \boldsymbol{M}(\bm{\lambda}) \boldsymbol{x} + \lambda_1, 
\end{align}
where $\bm{\lambda} = \{\lambda_i\}_{i=1} ^6$ are the \textit{Lagrange multipliers} and $\boldsymbol{M}(\bm{\lambda})$ is the \textit{Hessian of the Lagrangian} defined as
\begin{equation}
    \boldsymbol{M}(\bm{\lambda}) \doteq \boldsymbol{Q} - \sum_{i=1} ^6 \lambda_i \boldsymbol{A}_i \label{eq:M}
\end{equation}

By definition, the \textit{Lagrangian dual problem} for \eqref{eq:primalproblem} is
\begin{equation}
    \label{eq:pdualproblem}
    \dOptRlx \doteq  \max_{\bm{\lambda} } \inf_{\boldsymbol{x}} \mathcal{L} (\boldsymbol{x}, \bm{\lambda}) = \max_{\bm{\lambda} } \underbrace{\inf_{\boldsymbol{x}} \Big ( \boldsymbol{x}^T \boldsymbol{M}(\bm{\lambda}) \boldsymbol{x} + \lambda_1 \Big )}_{d (\bm{\lambda}) },
\end{equation}
where $d(\bm{\lambda})$ is known as the \textit{dual function} with $d(\bm{\lambda}) \leq f$ for any value of $\bm{\lambda}$ by definition.  

We note that $\boldsymbol{x}^T\boldsymbol{M}(\bm{\lambda})\boldsymbol{x}$ is a quadratic form whose only finite minimum value is achieved at 0 when $\boldsymbol{M}(\bm{\lambda})\succeq 0$ \ie,
\begin{equation}
    d(\bm{\lambda}) = \inf_{\boldsymbol{x}} \boldsymbol{x}^T \boldsymbol{M}(\bm{\lambda})\boldsymbol{x} + \lambda_1= 
    \begin{cases}
      \lambda_1, & \boldsymbol{M}(\bm{\lambda})\succeq 0 \\
      -\infty , & \text{otherwise}
    \end{cases}
\end{equation}
Since we are trying to find the maximum, we can restrict the problem to the finite values, \ie, the Hessian of the Lagrangian is positive semidefinite.
Hence, we can write the (unconstrained) dual problem in \eqref{eq:pdualproblem} as the constrained SDP problem,

  \begin{align}
    \dOptRlx &=  \max_{\bm{\lambda} } \lambda_1  \\
    & \text{subject to  }  \boldsymbol{M}(\bm{\lambda})\succeq 0 , \nonumber
\end{align}
which is the problem that appears in Theorem \eqref{eq:dualproblem}, proving the claim.

\section{Existence and Uniqueness of the Lagrange Multipliers for the Dual Problem \eqref{eq:dualproblem}} \label{app:unique-solution-LICQ}
We want to point out that the following statements have been adapted to our concrete problem, where only equality quadratic constraints are present. We refer the reader to \cite{boyd2004convex}, \cite{nocedal2006numerical} for a full characterization of the general case.

That being said, we proceed as follows.
Any pair of primal and dual optimal points must satisfy the KKT conditions \cite{boyd2004convex} to have strong duality (assuming both the objective and the set of constraints are differentiable). 
For the problem in \eqref{eq:primalproblem} and a pair of primal-dual optimal points ($\vStarX, \vStarLambda$), the KKT conditions (first-order necessary conditions) read:
\begin{align}
    &\text{(1) Primal feasibility } \quad \boldsymbol{A}_i\vStarX = \boldsymbol{0}, \forall i = 1, ..., m \\
    &\text{(2) Stationary} \quad \nabla (\boldsymbol{x}^{\star T}\boldsymbol{Q}\vStarX) + \sum_{i=1}^m \lambda_i^{\star}\nabla  (-\boldsymbol{x^{\star T}} \boldsymbol{A}_i\vStarX) = 0, \label{eq:KKTstationary}
\end{align}
where $m$ is the number of equality constraints.

For the relation in \eqref{eq:KKTstationary} to be a necessary optimality condition, some restrictions must be applied to the equality constraints of the primal problem. These conditions are known as \emph{constraint qualifications} (CQ)~\cite[Sec.~12.2]{nocedal2006numerical}\cite{wachsmuth2013licq}. Many CQ's have been proposed in the literature; in this work, however, we are interested in those that characterize the set of dual solutions (\ie the Lagrange multipliers) of the dual problem. The strongest CQ is the so-called Linear Independence Constraint Qualification (LICQ), which assures both the \emph{existence} and the \emph{uniqueness} of the Lagrange multipliers~\cite{wachsmuth2013licq}, \ie it implies that the set of dual solutions is a singleton.

LICQ holds if the gradients  of the \emph{active constraint} $\{\nabla  (- \boldsymbol{x}^{\star T}\boldsymbol{A}_i\vStarX)\}_{i=1}^m$ are linearly independents. If LICQ holds, the dual point $\vStarLambda$ that satisfies the KKT conditions is unique and exists, which is the cornerstone of our optimality certifier. We note that weaker constraint qualifications exist which still guarantee strong duality; however they do not in general guarantee the uniqueness of Lagrange multipliers~\cite{wachsmuth2013licq}.
Therefore, for our optimality certifier to be purposive, we must then assure that LICQ holds and list the degenerated cases. These two aspects are tackled next.

\subsection{Assuring Linear Independence Constraint Qualification}

The third KKT condition for optimality in \eqref{eq:KKTstationary} is explicitly written as the linear system in $\vStarLambda$ as:
\begin{equation}
    \boldsymbol{Qx} ^{\star} = \sum_{i=1}^m\lambda_i^{\star}\boldsymbol{A}_i\vStarX = \Jset(\boldsymbol{x^{\star}})\vStarLambda , \label{eq:jacobian}
\end{equation}
where we have defined $$\Jset(\boldsymbol{x^{\star}}) \doteq [\boldsymbol{A}_1\vStarX, \boldsymbol{A}_2\vStarX, \dots, \boldsymbol{A}_m\vStarX] \in \mathbb{R}^{12 \times m}$$ the Jacobian (\uts) of the constraint matrices evaluated at $\boldsymbol{x^{\star}}$. Hence, the LICQ condition assures that if the matrix $\Jset(\boldsymbol{x^{\star}})$ is full rank, then the Lagrange multipliers $\vStarLambda$ are unique, \ie, the linear system has either one or zero solutions. Note that if the system has no solution, one can always find the "closest" point in the least-squares sense.

We prove now that the original set of constraints given in \cite{Zhao2019} is not suitable for our optimality certifier as is. Consider this original set $\{\boldsymbol{A}_i\}_{i=1}^7$ and its associated Jacobian $\Jset(\vHatX)$ evaluated at a feasible primal point 
$\vHatX = [\vec{\hatE}^T, \hat{\trans}^T] ^T $, where $\vec{\hatE} = [\hat{e_1}, \hat{e_4},\hat{e_7}, \hat{e_2}, \hat{e_5}, \hat{e_8}, \hat{e_3}, \hat{e_6}, \hat{e_9}]^T$. $\Jset(\vHatX)$ has the following structure:
\begin{align}
    &\Jset(\vHatX) = \nonumber\\
    &\begin{pmatrix}
    0 & \hat{e_4}/ 2 & \hat{e_1}  &\hat{e_7}/2 & 0 & 0 & 0 \\
    0 & \hat{e_1}/2  &  0 & 0 & \hat{e_4} & \hat{e_7}/2 & 0 \\
    0 & 0 & 0 & \hat{e_1}/2 & 0 & \hat{e_4} / 2 & \hat{e_7} \\
    0 & \hat{e_5}/ 2 & \hat{e_2} &   \hat{e_8}/2 & 0 & 0 & 0 \\
    0 &  \hat{e_2}/2 & 0  & 0 & \hat{e_5} &  \hat{e_8}/2 & 0 \\
    0 & 0 & 0   & \hat{e_2}/2 & 0 & \hat{e_5} / 2 & \hat{e_8} \\
    0 & \hat{e_6} / 2 & \hat{e_3}  &\hat{e_9}/2 & 0 & 0 & 0 \\
    0 & \hat{e_3}/2 & 0 &  0 & \hat{e_6} & \hat{e_9}/2 & 0 \\
    0 & 0 &  0 &  \hat{e_3}/2 &  0 & \hat{e_6} / 2 & \hat{e_9} \\
    \hat{t}_{1} & \hat{t}_{2}/ 2 & 0 &   \hat{t}_{3}/ 2 & -\hat{t}_{1} & 0 & -\hat{t}_{1} \\
     \hat{t}_{2} & \hat{t}_{1}/ 2 & - \hat{t}_{2}  & 0 & 0 & \hat{t}_{3} /2 & - \hat{t}_{2} \\
    \hat{t}_{3} & 0 &  -\hat{t}_{3} & \hat{t}_{1}/ 2 & -\hat{t}_{3} &  \hat{t}_{2}/2 & 0 \\
    \end{pmatrix}
\end{align}

Next, we show how $\Jset(\vHatX)$ is indeed (column) rank deficient for all primal feasible $\vHatX$. For that, we need to find a non-null subspace $\Phi(\vHatX) \in \mathbb{R}^{7 \times r}$, where $7 - r$ is the rank of $\Jset(\vHatX)$, such that $ \Jset(\vHatX)  \Phi(\vHatX) = \boldsymbol{0}_{12 \times \text{r}}$ for all primal feasible point $\vHatX$. After that, we will show that this nullspace is indeed one-dimensional ($r = 1$).

\bigskip
\textbf{Proof that the matrix $\Jset(\vHatX)$ is rank deficient}:
Let us assume that the matrix $\Jset(\vHatX)$  is rank deficient with nullspace given by the 7D vector $\bm{\Phi} = [\Phi_1, \dots, \Phi_{7}]^T $ (possible dependent on the feasible point $\vHatX$, although in what follows we will avoid this dependence on the formulation for the sake of clarity), such that $\Jset(\vHatX)\bm{\Phi} = \boldsymbol{0}_{12}$.
First, recall that any essential matrix has as left nullspace the translation vector $\trans$ \cite{hartley2003multiple}:
\begin{equation}
    \Ess = \cross{t} \rot \implies \trans^T\Ess = \trans^T \cross{t} \rot = - \big (\cross{t} ^T \trans\big) ^T \rot = \boldsymbol{0}_3.  \label{eq:leftnullspaceE}
\end{equation}

From \eqref{eq:leftnullspaceE} one can obtain the following three equalities:
\begin{align}
     & \hat{t}_{1}  \hat{e}_{1} + \hat{t}_{2}  \hat{e}_{4} +  \hat{t}_{3}  \hat{e}_{7} = 0 \\
     & \hat{t}_{1}  \hat{e}_{2} + \hat{t}_{2}  \hat{e}_{5} +  \hat{t}_{3}  \hat{e}_{8} = 0 \\
     & \hat{t}_{1}  \hat{e}_{3} + \hat{t}_{2}  \hat{e}_{6} +  \hat{t}_{3}  \hat{e}_{9} = 0 
\end{align}

Let us denote the rows of the matrix  $\Jset(\vHatX)$ by $\boldsymbol{j}_i, i = 1, ..., 12$. Then, the explicit expressions for $\Jset(\vHatX)\bm{\phi} = \boldsymbol{0}_{12}$ are:

\begin{align}
    & \boldsymbol{j}_1 \boldsymbol{\phi} = \Phi_3 \hat{e}_1 + \Phi_2 \hat{e}_4/ 2 +\Phi_4 \hat{e}_7/ 2  = 0 \label{eq:Jfull1} \\
    & \boldsymbol{j}_4 \boldsymbol{\phi} = \Phi_3 \hat{e}_2 + \Phi_2 \hat{e}_5/ 2 +\Phi_4 \hat{e}_8/ 2  = 0 \label{eq:Jfull2} \\
    & \boldsymbol{j}_7 \boldsymbol{\phi} = \Phi_3 \hat{e}_3 + \Phi_2 \hat{e}_6/ 2 +\Phi_4 \hat{e}_9/ 2  = 0 \label{eq:Jfull3} \\
     & \boldsymbol{j}_2 \boldsymbol{\phi} = \Phi_4 \hat{e}_1 / 2 + \Phi_6 \hat{e}_4/ 2 +\Phi_7 \hat{e}_7  = 0 \label{eq:Jfull4} \\
    & \boldsymbol{j}_5 \boldsymbol{\phi} = \Phi_4 \hat{e}_2 / 2+ \Phi_6 \hat{e}_5/ 2 +\Phi_7 \hat{e}_8  = 0 \label{eq:Jfull5} \\
    & \boldsymbol{j}_8 \boldsymbol{\phi} = \Phi_4 \hat{e}_3 / 2 + \Phi_6 \hat{e}_6/ 2 +\Phi_7 \hat{e}_9 = 0 \label{eq:Jfull6} \\
     & \boldsymbol{j}_3 \boldsymbol{\phi} = \Phi_2 \hat{e}_1/2 + \Phi_5 \hat{e}_4 +\Phi_6 \hat{e}_7/ 2  = 0 \label{eq:Jfull7} \\
    & \boldsymbol{j}_6 \boldsymbol{\phi} = \Phi_2 \hat{e}_2/2 + \Phi_5 \hat{e}_5 +\Phi_6 \hat{e}_8/ 2  = 0 \label{eq:Jfull8} \\
    & \boldsymbol{j}_9 \boldsymbol{\phi} = \Phi_2 \hat{e}_3/2 + \Phi_5 \hat{e}_6 +\Phi_6 \hat{e}_9/ 2  = 0 \label{eq:Jfull9} \\
    & \boldsymbol{j}_{10} \boldsymbol{\phi} = \phi_1 \hat{t}_1 + \phi_2 \hat{t}_{2}/2 + \phi_4 \hat{t}_{3}/2 - \phi_5\hat{t}_{1} - \phi_7 \hat{t}_{1} = 0  \label{eq:Jfull10}\\
    & \boldsymbol{j}_{11} \boldsymbol{\phi} = \phi_1 \hat{t}_2  + \phi_2 \hat{t}_{1}/2 - \phi_3\hat{t}_{2} + \phi_6 \hat{t}_{3}/2 - \phi_7 \hat{t}_{2} = 0 \label{eq:Jfull11} \\
    & \boldsymbol{j}_{12} \boldsymbol{\phi} = \phi_1 \hat{t}_3  - \phi_3 \hat{t}_{3} + \phi_4 \hat{t}_1 / 2 - \phi_5\hat{t}_{3} + \phi_6\hat{t}_{2}/2 = 0 \label{eq:Jfull12}
\end{align}

From Equations \eqref{eq:Jfull1},  \eqref{eq:Jfull2},  \eqref{eq:Jfull3}, one can see that a feasible parameterization for $\phi_2, \phi_3, \phi_4$ is given by: 
\begin{equation}
    \phi_2 = 2 \hat{t}_2 \alpha_2 \quad \phi_3 = \hat{t}_1 \alpha_2 \quad \phi_4 = 2 \hat{t}_3 \alpha_2,
\end{equation} 
by the relation given in \eqref{eq:leftnullspaceE}, being $\alpha_2 \in \Reals{}$ an unknown scalar. Following the same argument with \eqref{eq:Jfull4},  \eqref{eq:Jfull5},  \eqref{eq:Jfull6}, one obtain that:  
\begin{equation}
    \phi_4 = 2 \hat{t}_1 \alpha_3 \quad \phi_6 = 2\hat{t}_2 \alpha_3 \quad \phi_7 = \hat{t}_3 \alpha_3, \quad\alpha_3 \in \Reals{}.
\end{equation} 
Further, Equations \eqref{eq:Jfull6},  \eqref{eq:Jfull7},  \eqref{eq:Jfull8} yield a similar relation
\begin{equation}
   \phi_2 = 2 \hat{t}_1 \alpha_4\quad \phi_5 = \hat{t}_2 \alpha_4 \quad \phi_6 = 2\hat{t}_3 \alpha_4, \quad \alpha_4 \in \Reals{}.
\end{equation} 
Since all the elements of the vector $\bm{\phi}$ must be compatible, all the parameterizations must attain the same value. Concretely, we have two different expressions for $\phi_2, \phi_4, \phi_6$ and hence,
\begin{align}
    \text{From } \phi_2:\ & 2 \hat{t}_2 \alpha_2 = 2 \hat{t}_1 \alpha_4 \implies \alpha_2 = \hat{t}_1 \gamma, \quad \alpha_4 = \hat{t}_2 \gamma, \quad \gamma \in \Reals{} \\
    \text{From } \phi_4:\ & 2 \hat{t}_3 \alpha_2 = 2 \hat{t}_1 \alpha_3 \implies \alpha_2 = \hat{t}_1 \beta, \quad \alpha_3 = \hat{t}_3 \beta, \quad \beta \in \Reals{} \\
    \text{From } \phi_6:\ & 2 \hat{t}_2 \alpha_3 = 2 \hat{t}_3 \alpha_4 \implies \alpha_3 = \hat{t}_3 \psi, \quad \alpha_4 = \hat{t}_2 \psi, \quad \psi \in \Reals{}.
\end{align}
Further, $\gamma = \beta = \psi$ since all the relations must hold at the same time. Let us fix this common scale to one without loss of generality. Therefore, we obtain the explicit expressions for $\phi_2, \dots, \phi_7$ as:
\begin{align}
    & \phi_2 = 2 \hat{t}_1 \hat{t}_2, \quad \phi_3 = \hat{t}_1 ^2, \quad \phi_4 = 2 \hat{t}_1 \hat{t}_3 \\
    & \phi_5 = \hat{t}_2^2, \quad \phi_6 = 2 \hat{t}_2 \hat{t}_3, \quad \phi_7 = \hat{t}_3 ^2. 
\end{align}

Introducing these explicit forms into \eqref{eq:Jfull10}, \eqref{eq:Jfull11}, \eqref{eq:Jfull12}:
\begin{align}
    &  \phi_1 \hat{t}_1 + \hat{t}_{1} \hat{t}_{2}^2 + \hat{t}_{1} \hat{t}_{3}^2 - \hat{t}_{1} \hat{t}_{2}^2 - \hat{t}_{1} \hat{t}_{3}^2 = 0 \implies  \phi_1 \hat{t}_1 = 0 \\
    & \phi_1 \hat{t}_2 + \hat{t}_{1} \hat{t}_{2}^2 - \hat{t}_{1} \hat{t}_{2}^2 + \hat{t}_{2} \hat{t}_{3}^2 - \hat{t}_{2} \hat{t}_{3}^2 = 0 \implies  \phi_1 \hat{t}_2 = 0\\
    & \phi_1 \hat{t}_3 -\hat{t}_{1} \hat{t}_{3}^2 + \hat{t}_{1} \hat{t}_{3}^2 - \hat{t}_{3} \hat{t}_{2}^2 + \hat{t}_{3} \hat{t}_{2}^2 = 0 \implies  \phi_1 \hat{t}_3 = 0
\end{align}
Since $\trans^T \trans \neq 0$, in order to fulfill all the relations it is necessary that $\phi_1 = 0$. 
The 7D vector which lies in the nullspace of $\Jset(\vHatX)$ takes the final form:
\begin{equation}
    \bm{\phi} = [0,\ 2\hat{t}_1\hat{t}_2,\ \hat{t}_1^2,\ 2\hat{t}_1 \hat{t}_3,\ \hat{t}_2^2,\  2\hat{t}_2 \hat{t}_3,\ \hat{t}_3^2] ^T.
\end{equation}
Since $\bm{\phi}$ depends on $\trans \neq \boldsymbol{0}_3$, it is not null by construction. 
We note that the same vector was given in \cite{Zhao2019} for a different purpose, although the explicit construction was not provided there. 
Moreover, we can explicitly give the dependence as:
\begin{align}
    & \boldsymbol{0} = \Jset (\vHatX)\boldsymbol{\phi}(\vHatX) = \sum_{k=2} ^7 \phi_k {\boldsymbol{A}}_k \vHatX \Leftrightarrow  \\
    & \Leftrightarrow \phi_i {\boldsymbol{A}}_i\vHatX = - \sum_{j=2, j \neq i} ^7 \phi_j {\boldsymbol{A}}_j\vHatX. \label{eq:depgradients}
\end{align}
Equation \eqref{eq:depgradients} shows that \emph{any} term $\phi_i {\boldsymbol{A}}_i \vHatX, \text{ for } i = 2, ..., 7$  can be written as a function of the other terms $\phi_j {\boldsymbol{A}}_j \vHatX, \text{ for } j \neq i, j = 2, ..., 7$ for \emph{any} feasible point. Recall that the constraint matrix $\boldsymbol{A}_1$ corresponds with the unitary constraint in $\trans$, while $\{\boldsymbol{A}_i\}_{i=2} ^7$ relate the essential matrix to its left nullspace $\boldsymbol{EE} ^T = \cross{t}\cross{t}^T$. That is, the gradients of the constraints associated with the expression $\boldsymbol{EE} ^T = \cross{t}\cross{t}^T$ are linearly dependents. 
By dropping one of the columns associated with these constraints (here we choose to drop the second column, while we postpone a full analysis about their influence for future projects), we obtain the following (reduced) Jacobian $\JsetIdx(\vHatX) \in \mathbb{R}^{12 \times 6}$. 
\begin{equation}
    \JsetIdx (\vHatX) \doteq 
    \begin{pmatrix}
    0  & \hat{e_1}  &\hat{e_7}/2 & 0 & 0 & 0 \\
    0  &  0 & 0 & \hat{e_4} & \hat{e_7}/2 & 0 \\
    0  & 0  & \hat{e_1}/2 & 0 & \hat{e_4} / 2 & \hat{e_7} \\
    0 & \hat{e_2} &   \hat{e_8}/2 & 0 & 0 & 0 \\
    0 & 0  & 0 & \hat{e_5} &  \hat{e_8}/2 & 0 \\
    0 & 0   & \hat{e_2}/2 & 0 & \hat{e_5} / 2 & \hat{e_8} \\
    0 & \hat{e_3}  &\hat{e_9}/2 & 0 & 0 & 0 \\
    0 &  0 &  0 & \hat{e_6} & \hat{e_9}/2 & 0 \\
    0 &  0  & \hat{e_3}/2 & 0 & \hat{e_6} / 2 & \hat{e_9} \\
    \hat{t}_{1}  & 0 &   \hat{t}_{3}/ 2 & -\hat{t}_{1} & 0 & -\hat{t}_{1} \\
     \hat{t}_{2} &  - \hat{t}_{2}  & 0 & 0 & \hat{t}_{3} /2 & - \hat{t}_{2} \\
    \hat{t}_{3} &  -\hat{t}_{3} & \hat{t}_{1}/ 2 & -\hat{t}_{3} &  \hat{t}_{2}/2 & 0 \\
    \end{pmatrix}
\end{equation}
We have shown that there exists a 7D vector which lies in the nullspace of $\Jset$. Next, we will prove that in fact the nullspace of $\Jset$ is one-dimensional..

\bigskip
\textbf{Proof that the nullspace of $\Jset$ is one-dimensional}:

To proof that $\JsetIdx(\vHatX)$ is full rank and thus the nullspace of the original Jacobian matrix $\Jset(\vHatX)$ is one-dimensional, let us assume that there exists a vector $ \Reals{6} \ni \xi(\vHatX) \doteq [\xi_1, \xi_2, \xi_3, \xi_4, \xi_5, \xi_6] ^T$ such that $\JsetIdx(\vHatX) \xi(\vHatX) = \boldsymbol{0}_6$. In what follows, we will omit the dependence of $\bm{\xi}(\vHatX)$ on the feasible point $\vHatX$ and simply write $\bm{\xi}$ for clarity.  Hence:
\begin{align}
    &  \xi_2\hat{e}_{1} + \xi_3 \hat{e}_{7}/ 2 = 0  \label{eq:Jredr1} \\
    & \xi_2\hat{e}_{2} + \xi_3\hat{e}_{8}/ 2 = 0    \label{eq:Jredr2} \\
    & \xi_2\hat{e}_{3} + \xi_3\hat{e}_{9}/ 2 = 0    \label{eq:set-first-eq}\\
    &  \xi_4\hat{e}_{4} + \xi_5 \hat{e}_{7}/ 2 = 0   \label{eq:Jredr4} \\
    & \xi_4\hat{e}_{5} + \xi_5 \hat{e}_{8}/ 2 = 0  \label{eq:Jredr5} \\
    & \xi_4\hat{e}_{6} + \xi_5 \hat{e}_{9}/ 2 = 0 \label{eq:set-second-eq}\\
    & \xi_3 \hat{e}_{1}/ 2 + \xi_5\hat{e}_{4}/2 + \xi_6 \hat{e}_{7} = 0   \label{eq:Jredr7} \\
    & \xi_3 \hat{e}_{2}/ 2 + \xi_5\hat{e}_{5}/2 + \xi_6 \hat{e}_{8} = 0   \label{eq:Jredr8} \\
    & \xi_3 \hat{e}_{3}/ 2 + \xi_5\hat{e}_{6}/2 + \xi_6 \hat{e}_{9} = 0   \label{eq:set-third-eq}\\
    %
    & \xi_1 \hat{t}_{1} + \xi_3 \hat{t}_{3}/2 - \xi_4 \hat{t}_{1}  - \xi_6 \hat{t}_{1} = 0  \label{eq:Jredr10} \\
    & \xi_1 \hat{t}_{2} - \xi_2 \hat{t}_{3}/2 - \xi_5 \hat{t}_{3}/ 2  - \xi_6 \hat{t}_{2} = 0   \label{eq:Jred11} \\
    & \xi_1 \hat{t}_{3} - \xi_2 \hat{t}_{3}/2 + \xi_3 \hat{t}_{1}/2  - \xi_4 \hat{t}_{3} + \xi_5 \hat{t}_{2}/ 2 = 0  \label{eq:set-fourth-eq} 
\end{align}

Four different set of equations can be discerned depending on the elements from $\bm{\xi}$ involved: Equations \eqref{eq:Jredr1},\eqref{eq:Jredr2}, \eqref{eq:set-first-eq} involving $\xi_2, \xi_3$ (the set $\Cfirst$); Equations \eqref{eq:Jredr4},\eqref{eq:Jredr5}, \eqref{eq:set-second-eq} involving $\xi_4, \xi_5$ (the set $\Csecond$); Equations \eqref{eq:Jredr7},\eqref{eq:Jredr8}, \eqref{eq:set-third-eq} involving $\xi_3, \xi_5, \xi_6$  (the set $\Cthird$); and Equations \eqref{eq:Jredr10},\eqref{eq:Jred11}, \eqref{eq:set-fourth-eq} involving $\xi_1, \xi_2, \xi_3, \xi_4, \xi_5, \xi_6$ (the set $\Call$). Further, given this block-alike structure, we can treat four different cases separately.

\smallskip
\textbf{Case I}:
Consider the set $\Cfirst$ and form the linear system in $\xi_2, \xi_3$ as:
\begin{equation}
    \underbracket{\begin{pmatrix}
    \hat{e}_{1} & \hat{e}_{7}/2  \\
    \hat{e}_{2} & \hat{e}_{8} / 2 \\
    \hat{e}_{3} & \hat{e}_{9} / 2
    \end{pmatrix}}_{\boldsymbol{G}_{13}}
    \begin{pmatrix}
    \xi_2 \\
    \xi_3
    \end{pmatrix}
    = \boldsymbol{0}_3.
\end{equation}
If the 2D vector $[\xi_2, \xi_3]^T$ is non-null, then the matrix $\boldsymbol{G}_{13}$ has one singular value equal to zero or equivantely, the matrix $\boldsymbol{G}_{13}^T\boldsymbol{G}_{13}$ has one eigenvalue equal to zero. Note that if it has 2 eigenvalues equal to zero, then the matrix is null, \ie $\hat{e}_{1} = \hat{e}_{2} = \hat{e}_{3} = \hat{e}_{7} = \hat{e}_{8} = \hat{e}_{9} = 0$ and thus, the first and third of $\Ess$ are null. This is not an essential matrix, as it will be explained in the next subsection. For now, just consider the case when $\boldsymbol{G}_{13}^T\boldsymbol{G}_{13}$ has one zero eigenvalue, \ie the first and third rows of the essential matrix $\Ess$ are linear dependent.

For simplicity, let us denote the rows of $\Ess$ by $\{\boldsymbol{e}_i \in \Reals{3}\}_{i=1}^3$. 
If $\boldsymbol{e}_1$ or $\boldsymbol{e}_3$ are null, then the essential matrix with these rows is \emph{not} an essential matrix (we refer again the reader to the next subsection for these cases).
This implies that $\boldsymbol{e}_1 = \alpha\boldsymbol{e}_3$ with $\alpha \in \Reals{} / \{0 \}$ and the essential matrix will have two identical rows up to (signed) scale. Note that $\alpha = 0$ is excluded since otherwise $\boldsymbol{e}_1 = 0 \boldsymbol{e}_3 = \boldsymbol{0}_3$. We will show in the next subsection that a $3 \times 3$ matrix with one (or two) zero rows cannot be an essential matrix.
Considering the definition of essential matrix in \eqref{eq:Me:[t]xR} and the above-mentioned condition about the first and third rows, we obtain the following relation between the elements in $\rot$ and $\trans$:
\begin{equation}
    \underbracket{\begin{pmatrix}
    r_1 & r_4 & r_7 \\
    r_2 & r_5 & r_8 \\
    r_3 & r_6 & r_9 
    \end{pmatrix}}_{\rot^T}
    \begin{pmatrix}
    \alpha t_2 \\
    -\alpha t_1 - t_3 \\
    t_2
    \end{pmatrix}
    = \boldsymbol{0}_3.
\end{equation}
Since $\rot$ is a rotation matrix, it is invertible\footnote{
Recall that any rotation matrix fulfills $\rot\rot^T = \boldsymbol{I}_3$, and hence, it's invertible. A null eigenvalue prevents this.}, which implies that the 3D vector $[-t_2,  -t_1 \pm t_3 ,    \pm t_2]^T$ must be zero for the system to hold, \ie $t_2 = 0, t_3 = - \alpha t_1$. The translation vector $\trans$ takes the form $ \trans = [t_1, 0, -\alpha t_1]^T / ||\trans||_2$.
%

We are left to show how this result yields to the trivial solution for the nullspace of $\JsetIdx$.

Recall the original set of Equations. First we can compute from the set $\Cfirst$: $\xi_2 = - \frac{1}{2}\xi_3\pseudo{\boldsymbol{e}_1}\boldsymbol{e}_3 = -\frac{1}{2\alpha}\xi_3$. Let us assume that the second and third rows of $\Ess$ are not dependent (otherwise the rank of the essential matrix will be one and hence, not a essential matrix by definition). Therefore, $\xi_4 = \xi_5 = 0$ (set $\Csecond$), which leads thought the set $\Cthird$ to the relation
$\xi_6 = - \frac{1}{2}\xi_3\pseudo{\boldsymbol{e}_3}\boldsymbol{e}_1 = - \frac{\alpha}{2} \xi_3 = \alpha^2\xi_2$. 
Finally, the equations in $\Call$ read:
\begin{align}
     & \xi_1 \hat{t}_{1} - \alpha \xi_ 2 \hat{t}_{3} - \alpha^2 \xi_2 \hat{t}_1= 0 \\
    & \xi_1 \hat{t}_{2} - \xi_2 \hat{t}_{3}  - \alpha ^2\xi_2 \hat{t}_{2} = 0 \label{eq:second-final-first-case} \\
    & \xi_1 \hat{t}_{3} - \xi_2  \hat{t}_{3} - \alpha \xi_2 \hat{t}_{1} = 0. 
\end{align}
Note that the left-hand side of \eqref{eq:second-final-first-case} with the specified translation $ \trans$ takes the form: $\xi_1 0 - \xi_2 \hat{t}_{3}  - \alpha ^2\xi_2 0 = - \xi_2 \hat{t}_{3}$ which is only equal to zero when $\xi_2 = \xi_6 = \xi_3 = 0$. Hence. the only vector that lies in the nullspace of $\boldsymbol{G}_{13}^T \boldsymbol{G}_{13}$ is the nullvector, \ie it is full rank and so is $\boldsymbol{G}_{13}$. Further, $\xi_1 = 0$ since $\trans \neq \boldsymbol{0}_3$, the vector $\bm{\xi} = \boldsymbol{0}_6$ and for this first case, the matrix $\JsetIdx$ is full rank.

\smallskip
\textbf{Case II}:
Similarly, consider the set of Equations $\Csecond$. We form the linear system in $\xi_4, \xi_5$ as:
\begin{equation}
    \underbracket{\begin{pmatrix}
    \hat{e}_{4} & \hat{e}_{7}/2  \\
    \hat{e}_{5} & \hat{e}_{8} / 2 \\
    \hat{e}_{6} & \hat{e}_{9} / 2
    \end{pmatrix}}_{\boldsymbol{G}_{46}}
    \begin{pmatrix}
    \xi_4 \\
    \xi_5
    \end{pmatrix}
    = \boldsymbol{0}_3 
    \Leftrightarrow
    \xi_4
    \begin{pmatrix}
    \hat{e}_{4} \\
    \hat{e}_{5} \\
    \hat{e}_{6}
    \end{pmatrix}
    = 
    -\frac{1}{2}\xi_5 
     \begin{pmatrix}
    \hat{e}_{7} \\
    \hat{e}_{8} \\
    \hat{e}_{9}
    \end{pmatrix}.
\end{equation}
Therefore, the above linear system degenerates when the second and third row of $\Ess$ agree in direction (signed scale): $\boldsymbol{e}_2 = \beta \boldsymbol{e}_3$, with $\beta \in \Reals{} / \{0\}$, where one again $\beta = 0$ has been discarded for leading to a degenerate $3 \times 3$ matrix. Following the same procedure, one obtain that this configuration corresponds with a translation vector of the form $\trans = [0, t_2, -\beta t_2] ^T / ||\trans||_2$. 
We follow a similar procedure to show that the only solution for the nullspace of $\JsetIdx$ for this case is the trivial (all zero) one.

From the original set of Equations, and since in this case the second and third rows of $\Ess$ agree in direction, one has that $\xi_4 = - \frac{1}{2}  \pseudo{\hat{\boldsymbol{e}}_2}\hat{\boldsymbol{e}}_3\xi_5 = - \frac{1}{2\beta} \xi_5$ and $\xi_2 = \xi_3 = 0$ from the second and first sets, respectively $\Csecond$, $\Cfirst$. We relate $\xi_6$ with $\xi_5$ by the expressions in $\Cthird$ as $\xi_6 = - \frac{1}{2} \pseudo{\hat{\boldsymbol{e}}_3}\hat{\boldsymbol{e}}_2 \xi_5 = - \frac{\beta}{2}\xi_5 = \beta^2 \xi_4$.

Finally, the set $\Call$ takes the form:
\begin{align}
     & \xi_1 \hat{t}_{1} - \xi_ 4 \hat{t}_{1}  - \beta^2 \xi_4 \hat{t}_{1}= 0 \\
    & \xi_1 \hat{t}_{2} + \beta \xi_4 \hat{t}_{3}  - \beta^2 \xi_4 \hat{t}_{1} = 0 \label{eq:second-final-second-case} \\
    & \xi_1 \hat{t}_3 - \xi_4 \hat{t}_3 - \beta \xi_4 \hat{t}_2 = 0
    \label{eq:second-final-second-case-2}
\end{align}
While the first equation is trivially satisfied with $\trans$, we obtain from \eqref{eq:second-final-second-case}: 
\begin{equation}
    0 = \xi_1 \hat{t}_{2} - \beta^2 \xi_4 \hat{t}_{2} \implies \xi_1 = \beta^2 \xi_4.
\end{equation}
Incorporating this relation into \eqref{eq:second-final-second-case-2}, we finally obtain:
\begin{align}
     0 = \beta^2 \xi_4\hat{t}_3 - \xi_4 \hat{t}_3 + \xi_4 \hat{t}_3 = \beta^2 \xi_4\hat{t}_3 \implies \xi_4 = 0,
\end{align}
since neither $\hat{t}_3 = 0$ (otherwise $\hat{t}_2 = 0$ and $\trans = \boldsymbol{0}_3$, which is not feasible) nor $\beta = 0$ (otherwise the third row of the essential matrix will be null, and therefore, not an essential matrix). Back-substituting, we can finally affirm that the only vector which lies in the nullspace of $\boldsymbol{G}_{46}$ is the null vector, thus the matrix is full rank. Further, for this case $\JsetIdx$ is full rank since $\bm{\xi} = \boldsymbol{0}_6$.

\smallskip
\textbf{Case III}: 
Consider now the Equations $\Cthird$. One can obtain a similar (linear) system in $\xi_3, \xi_5, \xi_6$:
\begin{equation}
\label{eq:sys-third-case}
    \underbracket{\begin{pmatrix}
    \hat{e}_{1}/2 & \hat{e}_{4}/2 & \hat{e}_{7} \\
    \hat{e}_{2}/2 & \hat{e}_{5}/2 & \hat{e}_{8} \\
    \hat{e}_{3}/2 & \hat{e}_{6}/2 & \hat{e}_{9} \\
    \end{pmatrix}
    }_{\Ess^T}
    \begin{pmatrix}
    \xi_3 \\
    \xi_5 \\
    \xi_6
    \end{pmatrix}
    = \boldsymbol{0}_3.
\end{equation}

Let us assume that the vector $[\xi_3, \xi_5, \xi_6]^T$ is non null. Since one of the rows is a linear combination of the other two, we can derive that nor the first or the second rows have the same direction than the third row. From the set $\Cfirst$, we have that $\xi_2 = \xi_3 = 0$. Further, from $\Csecond$ we obtain a similar result $\xi_4 = \xi_5 = 0$ and hence, $\xi_6$ as well. The vector $[\xi_3, \xi_5, \xi_6]^T$ is in fact null, which contradicts our previous assumption. Further, from the set of Equations in $\Call$, we see that we require $\xi_1 = 0$ since $\trans \neq \boldsymbol{0}_3$ by definition, \ie the obtained nullvector for this case that \emph{also} fulfills all the set of equations is null and $\JsetIdx$ is full rank. 

As an additional note, notice that the system in \eqref{eq:sys-third-case} has as (right) nullvector any 3D vector with the general form $[\hat{t}_1 \alpha, \hat{t}_2 \beta, \hat{t}_3\gamma]^T$, where, as before, $[\hat{t}_1, \hat{t}_2, \hat{t}_3]^T$ is the translation vector and $\alpha, \beta, \gamma \in \Reals{}$. However, in order to fulfill the rest of the equations, we require that $\alpha = \beta = \gamma = 0$, which agrees with our previous development and is nevertheless a valid selection of parameters.

\smallskip
\textbf{Case IV}:
In this last case, we know from the previous scenarios that the rows of the essential matrix $\Ess$ do not agree in direction between them (since otherwise one of the previous cases will hold). We will treat the cases in which one full row is zero in the following Section, as a degeneracy. Thus, for this case $\xi_2 = \xi_3 = \xi_4 = \xi_5 = \xi_6 = 0$ and since $\trans \neq \boldsymbol{0}_3$, we also obtain that $\xi_1 = 0$. Once again, the nullvector is the only vector that lies in the nullspace of $\JsetIdx$, therefore it is full rank.  

\bigskip
\emph{To wrap-up this Section}, we have proved that (1) the Jacobian matrix $\Jset$ is rank deficient; and (2) that its nullspace is one-dimensional, and hence $\JsetIdx$ (the Jacobian $\Jset$ without the second column) is full rank and suitable for our optimality certifier. 
We show next under which circumstances (if there exist) $\JsetIdx(\vHatX)$ is rank deficient.

\subsection{Degenerate Cases}

The matrix $\JsetIdx(\vHatX)$ has an empty (right) nullspace in general. We analyze here under which circumstances the matrix degenerates. Concretely, we seek the feasible primal points (if they exist) that drop the rank of the matrix. For this task, we will study each column of $\JsetIdx(\vHatX)$ individually (Cases I, II and III). Given the pattern in the matrix $\JsetIdx(\vHatX)$ , we only need to study three different cases. Last, Case IV tackles the degeneracy of the rows of $\JsetIdx(\vHatX)$.

\textbf{Case I}:
The first column is only null if all the entries in $\trans$ are zero, which is not a feasible primal point ($\trans = [0, 0, 0]^T \implies \trans^T\trans \neq 1$) and can be safely discarded.

\textbf{Case II}:
For the second column to be null, all the entries must be also zero. Hence, the translation takes the form $\trans = [1, 0, 0]^T $ (and similar for the other cases) and the first row of the (possible) essential matrix is $[0, 0, 0]^T$, which for any rotation matrix with the form
\begin{equation}
     SO(3) \ni \rot =
     \begin{pmatrix}
     r_1 & r_2 & r_3 \\
     r_4 & r_5 & r_6 \\
     r_7 & r_8 & r_9
     \end{pmatrix},
\end{equation}
lead to the following chain of equalities:
\begin{equation}
    \Ess = 
    \begin{pmatrix}
    0 & 0 & 0 \\
    e_4 & e_5 & e_6 \\
    e_7 & e_8 & e_9
    \end{pmatrix}
    \overset{(a)}{=} 
    \begin{pmatrix}
    -r_4 & -r_5 & -r_6 \\
    r_1 & r_2 & r_3 \\
    0 & 0 & 0
    \end{pmatrix}
\end{equation}
where the first equality is obtained by substituting the zero elements and the second, from the definition of essential matrix in \eqref{eq:Me:[t]xR}. One can see that for the equality $(a)$ to hold (both matrices are equal by construction), the corresponding elements must be equal, that is, the second row of the rotation matrix $\rot$ must be zero and hence, the rotation matrix will have a null eigenvalue, which is not possible. Columns fourth and sixth yield the same conclusion. 

\medskip
\textbf{Case III}:
Last, a similar result is obtained for the third and fifth columns. Considering the former, we work with a translation vector of the form $\trans = [0, 1, 0]^T$, while the putative essential matrix has only one non-zero row (the second one), hence it has two zero singular values (and by definition, it is not an essential matrix). Further, considering its form as a function of the rotation matrix:
\begin{equation}
    \Ess = 
    \begin{pmatrix}
    0 & 0 & 0 \\
    e_4 & e_5 & e_6 \\
    0 & 0 & 0
    \end{pmatrix}
    \overset{(a)}{=} 
    \begin{pmatrix}
    r1 & r8 & r9 \\
    0 & 0 & 0 \\
    -r1 & -r2 & -r3
    \end{pmatrix}
\end{equation}
In this case, for the equality \textit{(a)} to hold, the first and third rows of the rotation matrix $\rot$ must be zero, \ie it will have two null eigenvalues, which once again it's not possible.

\medskip
\textbf{Case IV}:
Finally, we tackle under which circumstances the rank of the matrix drops because of the rows. Note that since $\JsetIdx(\vHatX) \in \Reals{12 \times 6}$, at least 6 rows of the matrix must be zero at the same time. Given its structure, this is not possible.
\begin{itemize}
    \item If one of the last three rows is null, then $\trans = \boldsymbol{0}_3$ which is not a feasible point.
    \item If the third, sixth or/and ninth rows is zero, then the associated row in the essential matrix $\Ess$ will be also zero, which has been proved to be infeasible in the Case II.
    \item If one of row pairs $\{1, 2\}$,  $\{4, 5\}$ or  $\{7,8\}$ are zero, we have again that one of the rows of $\Ess$ null.
\end{itemize}
Note that any combination of the above-mentioned cases will yield the same result.

\bigskip
\emph{To wrap-up this Section}, the matrix $\JsetIdx(\vHatX)$ does not present degenerate cases and thus, the system in \eqref{eq:systeminlambdahat} is never under-determined, having always one or zero solutions, in which case one can always find the closer solution in the least-squares sense.

\section{Euclidean Operators for the Proposed Riemannian Optimization of Problem \eqref{eq:originalproblem}} \label{app:euclidean-operators-manifold}
Recall the Riemannian optimization problem in \eqref{eq:originalproblem}:

\begin{equation}
    \fOptRlx  = 
    \min_{\Ess \in \Me} \underbrace{\vec{\matE}^T \boldsymbol{C} \vec{\matE}}_{\fE}, \quad \Me \subset \Reals{3 \times 3}.
\end{equation}

Since $\boldsymbol{C} \in \symmPlus{9}$, the cost function $\fE$ of our problem is a positive semidefinite quadratic function when considered as a function of the ambient Euclidean space $\Reals{3 \times 3}$.
The Euclidean gradient and Hessian-vector product can be identified from a Taylor expansion of this cost function using the concept of Fréchet derivative (for a similar derivation, see \cite{briales2017cartan}). Consider the point $\Ess \in \Reals{3 \times 3}$, the scalar $t \in \Reals{}$ and the direction $\boldsymbol{U} \in \Reals{3 \times 3}$, then:

\begin{align}
    f(\Ess + t\boldsymbol{U}) &= \tr( (\vec{\Ess + t \boldsymbol{U}})^T \boldsymbol{C}(\vec{\Ess + t \boldsymbol{U}})) = \\
    & = \tr( \vec{\matE}^T \boldsymbol{C}\vec{\matE}) + 2t \tr(\vec{\matE}^T\boldsymbol{C}\vec{\matU}) + \nonumber \\
    & + t^2 \tr(\vec{\matU}^T \boldsymbol{C}\vec{\matU}) = \\
    & = \fE + t\inner{2\boldsymbol{C}\vec{\matE}, \vec{\matU}} + \frac{1}{2}t^2 \inner{2\boldsymbol{C}\vec{\matU}, \vec{\matU}} = \\
    & = \fE + t\inner{\nabla \fE, \vec{\matU}} + \frac{1}{2}t^2 \inner{\nabla^2 \fE [\boldsymbol{U}], \vec{\matU}}
\end{align}
where we note that the $\vec{\cdot}$ operator is linear: $\vec{\boldsymbol{A} + c\boldsymbol{B}} = \vec{\matA} + c \vec{\matB}, \forall c \in \Reals{}, \boldsymbol{A, B} \in \Reals{n \times m}$ and for any vector $\boldsymbol{a} \in \Reals{m}, \boldsymbol{b} \in \Reals{m}$ and a symmetric matrix $\symm{m \times m} \ni \boldsymbol{C} = \boldsymbol{C}^T$, it holds that:
\begin{align}
    \tr(\boldsymbol{a}^T C \boldsymbol{b}) = \tr\big ( (\boldsymbol{a}^T C \boldsymbol{b}) ^T \big) = \tr(\boldsymbol{b}^T C^T \boldsymbol{a}) = \tr(\boldsymbol{b}^T C \boldsymbol{a}).
\end{align}

Therefore the Euclidean operators as a function of $\Ess$ are explicitly given by:
\begin{equation}
    \nabla \fE = 2\boldsymbol{C}\vec{\matE}, \quad \nabla^2 \fE [\boldsymbol{U}] = 2\boldsymbol{C}\vec{\matU}.
\end{equation}

Despite its age, the essential matrix manifold has been usually defined by means of alternative variables, usually rotation matrices (see for example \cite{tron2017space, helmke2007essential}). This means that the characterization of the Riemannian counterparts of the gradient and Hessian-vector product is not direct (as, for example, in \cite{briales2017cartan}) and one first need to express the problem as a function of these auxiliary variables, which may not be straightforward. Luckily, Riemannian optimization suites, such as \manopt, allow to specify the problem in $\Ess$ and transform it into the suitable representation under-the-hood, avoiding the corresponding mathematical effort associated with this re-formulation.

\section{Performance of the Proposed Certifiable Pipeline for Noise Levels 0.5 and 2.5 pixels} \label{app:results-noise-extra}
In this Section we provide the error in rotation for the proposed pipeline initialized with the \eightpt algorithm for the noise levels 0.5 and 2.5 \pixels in Figure \eqref{fig:error-rot-05} and \eqref{fig:error-rot-25} respectively.
\begin{figure*}
    \centering
    \begin{subfigure}[t]{0.49\textwidth}
        \centering
        \includegraphics[width=\figureSize]{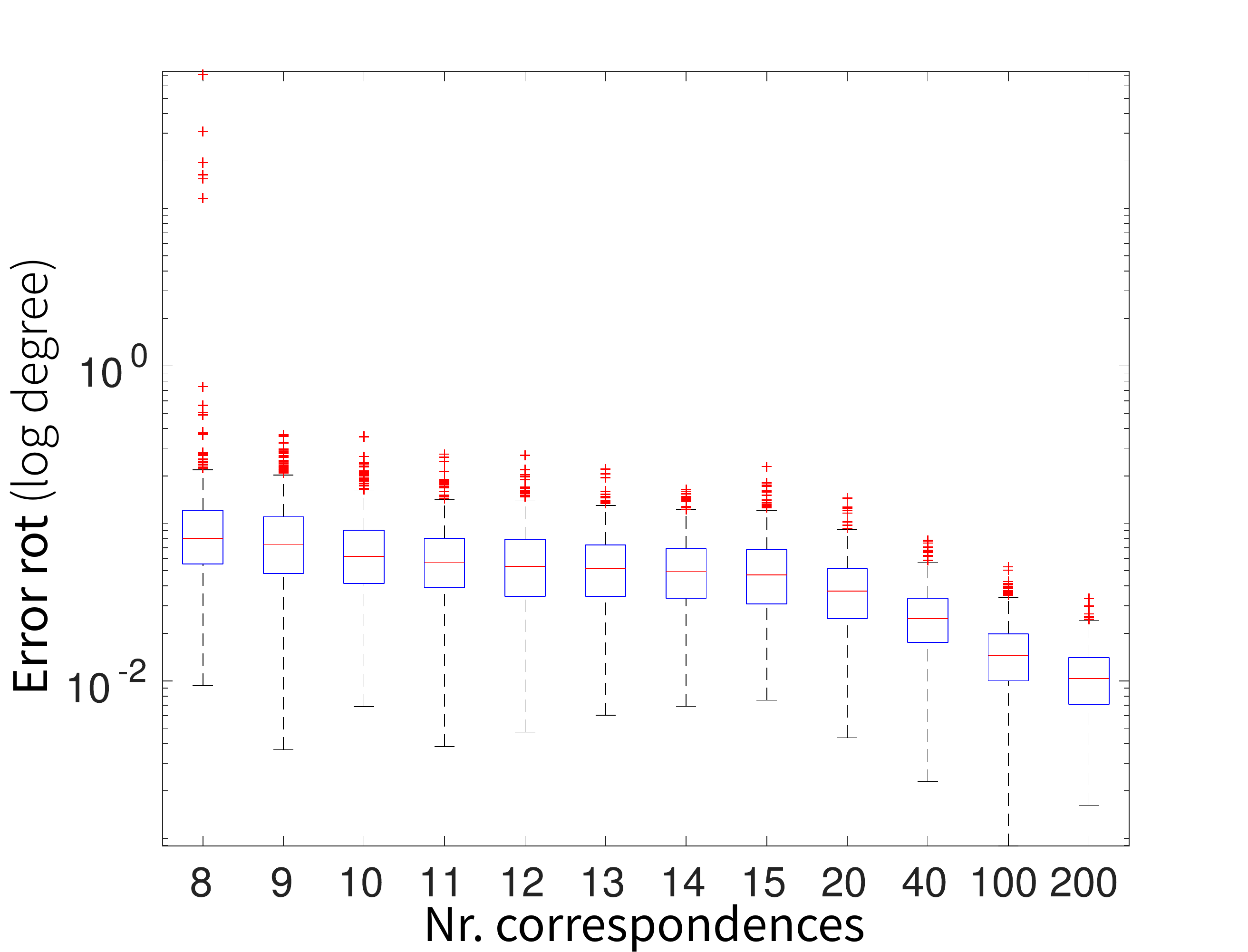}
        \caption{}
        \label{fig:error-rot-05}
    \end{subfigure}
    \begin{subfigure}[t]{0.49\textwidth}
        \centering
        \includegraphics[width=\figureSize]{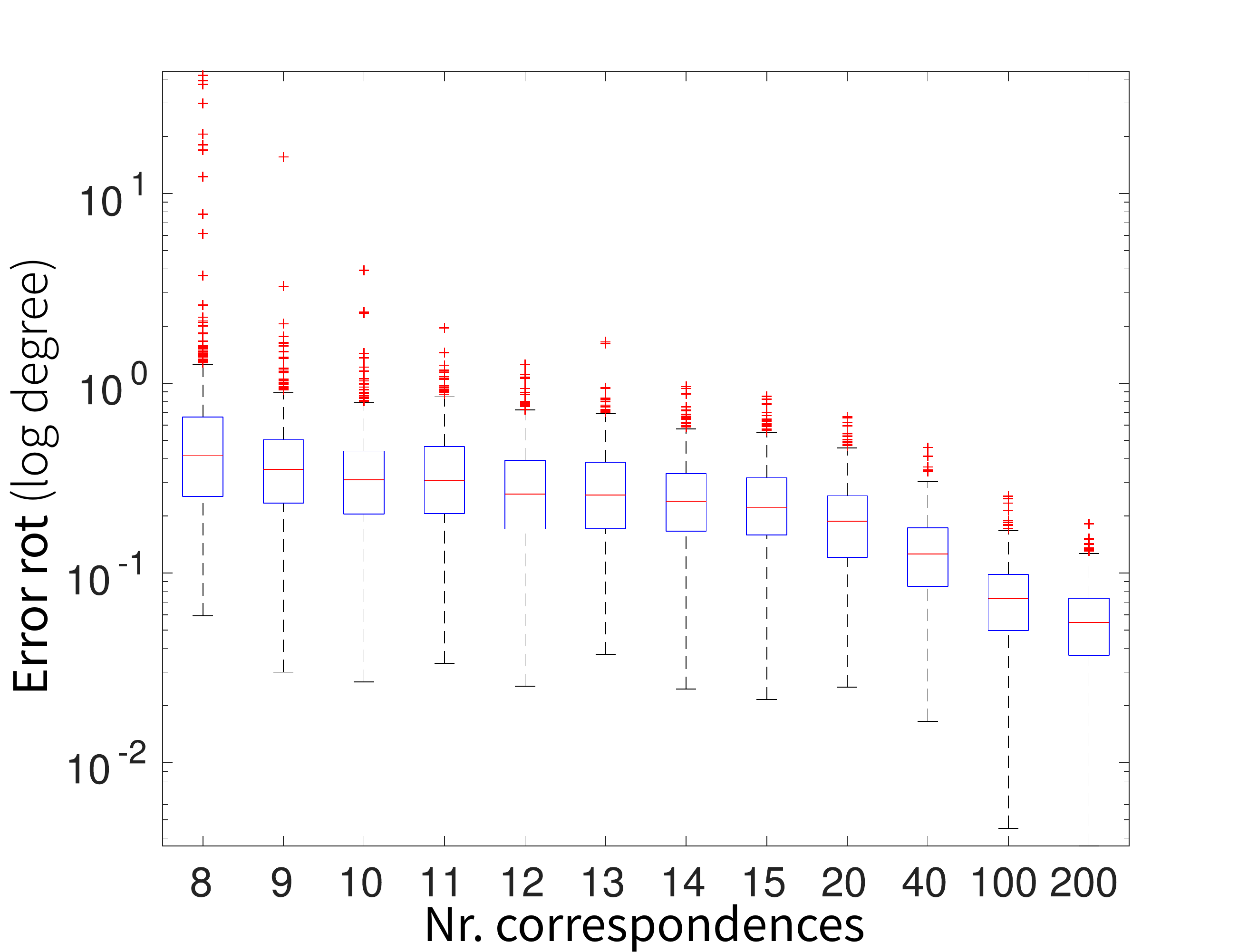}
        \caption{}
        \label{fig:error-rot-25}
    \end{subfigure}
    \caption{(a) We plot the error in rotation for the instances of the relative pose problem with noise 0.5 \pixels and (b) 2.5 \pixels. Note the logarithmic scale in the \textsc{Y} axis.}
\end{figure*}

\section{Performance of the Proposed Certifiable Pipeline for Different FoV and Maximum Parallax Values} \label{app:results-FOV-parallax-extra}

In this Section, we provide the results for the proposed pipeline for the experiments with fixed noise 0.5 \pixels and 
varying FoV and maximum parallax. 
We show the percentage of cases in which 
our algorithm could certify optimality (Figure \eqref{fig:FOVPARALLAXopt}. Figure \eqref{fig:FOVerror} depicts the error in rotation (degrees) for each FoV considered 70, 90, 120 and 150 degrees. 
In these cases, the images have size $1120$, $1600$, $2770$, and $5971$, respectively. 
The principal point for each case is placed at the center of the image plane.

On the other hand, Figure \eqref{fig:PARALLAXerror} 
depicts the error in rotation (degrees) for each parallax considered: 
1.0, 1.4, 2.5 and 4.0 meters.

\begin{figure*}
    \centering
    \begin{subfigure}[t]{0.49\textwidth}
        \centering
        \includegraphics[width=\figureSize]{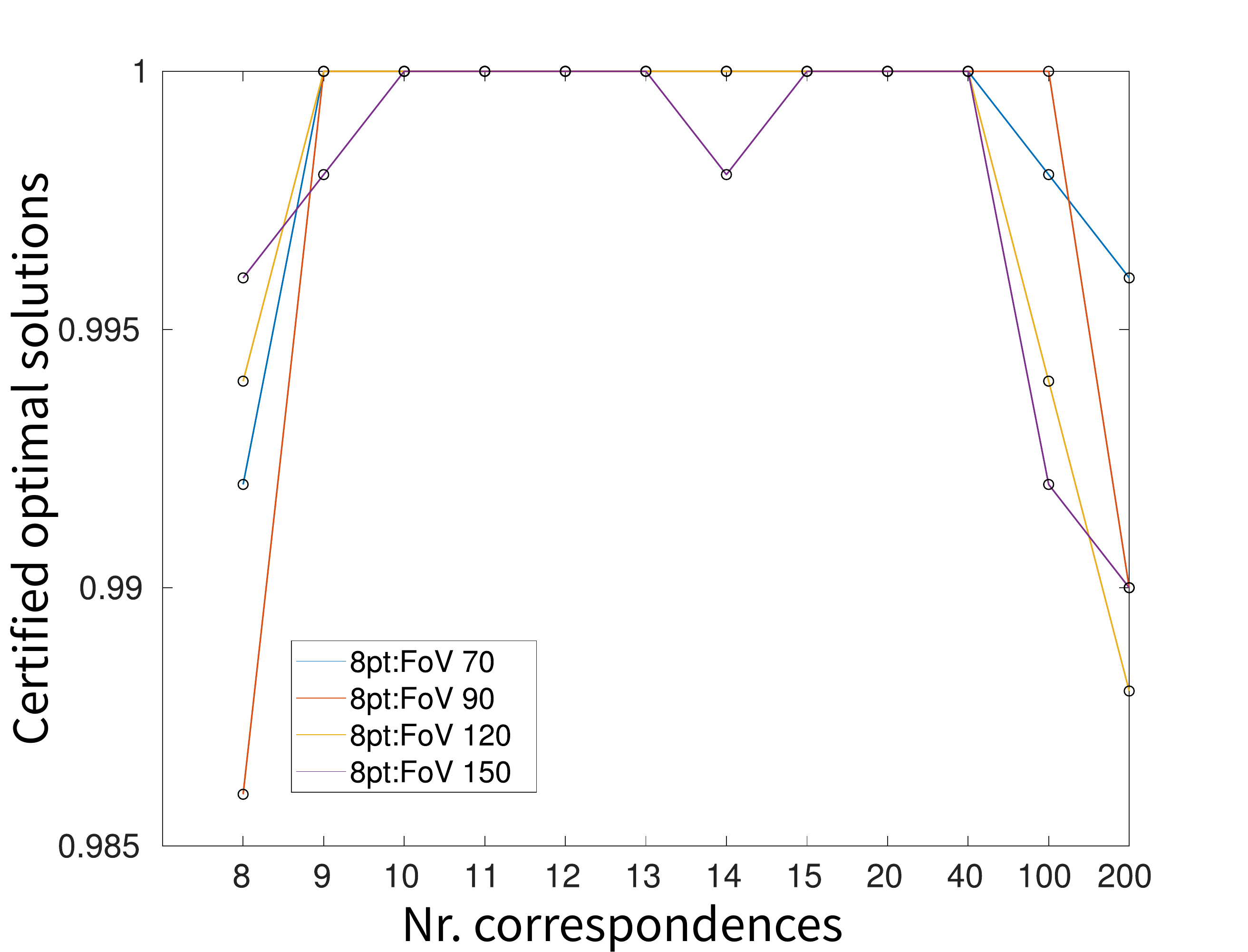}
        \caption{}
    \end{subfigure}
    \begin{subfigure}[t]{0.49\textwidth}
        \centering
        \includegraphics[width=\figureSize]{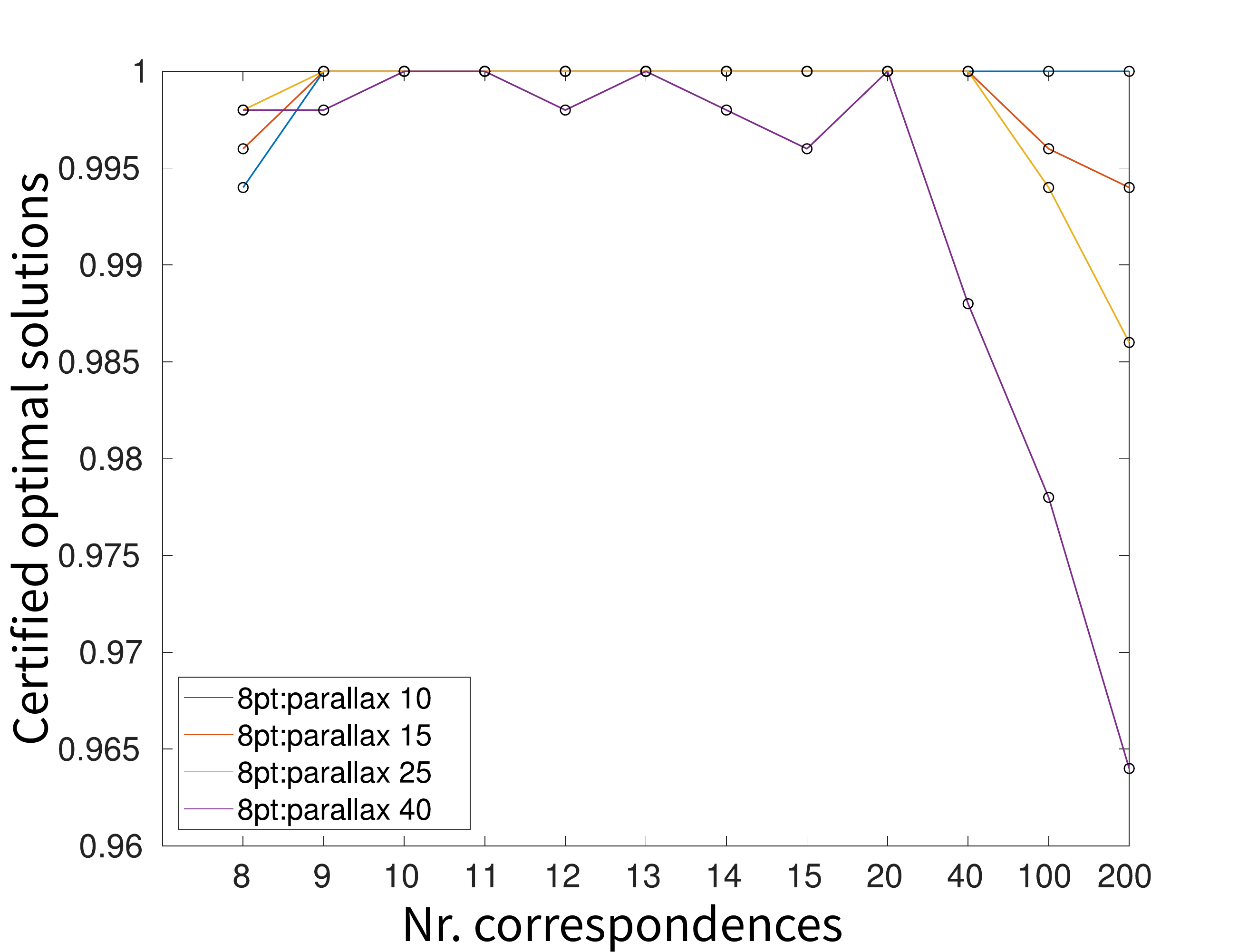}
        \caption{}
    \end{subfigure}
    \caption{(a) We plot the percentage of cases in which the algorithm could certify optimality for instances of the problem with fixed noise 0.5 \pixels  and varying FoV and (b) maximum parallax.}
    \label{fig:FOVPARALLAXopt}
    \centering
    \begin{subfigure}[t]{0.24\textwidth}
        \centering
        \includegraphics[width=\figureSize]{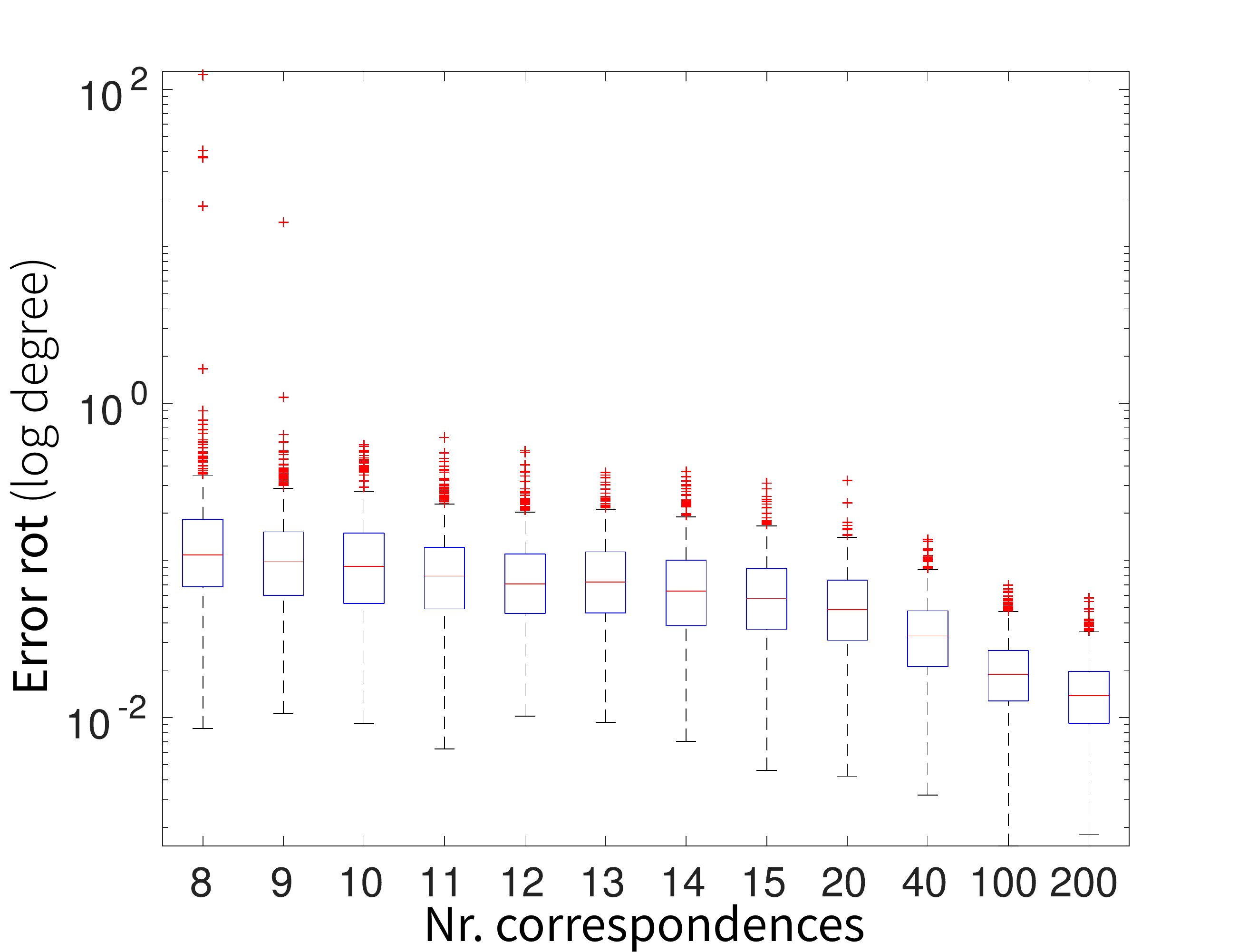}
        \caption{}
    \end{subfigure}
    \begin{subfigure}[t]{0.24\textwidth}
        \centering
        \includegraphics[width=\figureSize]{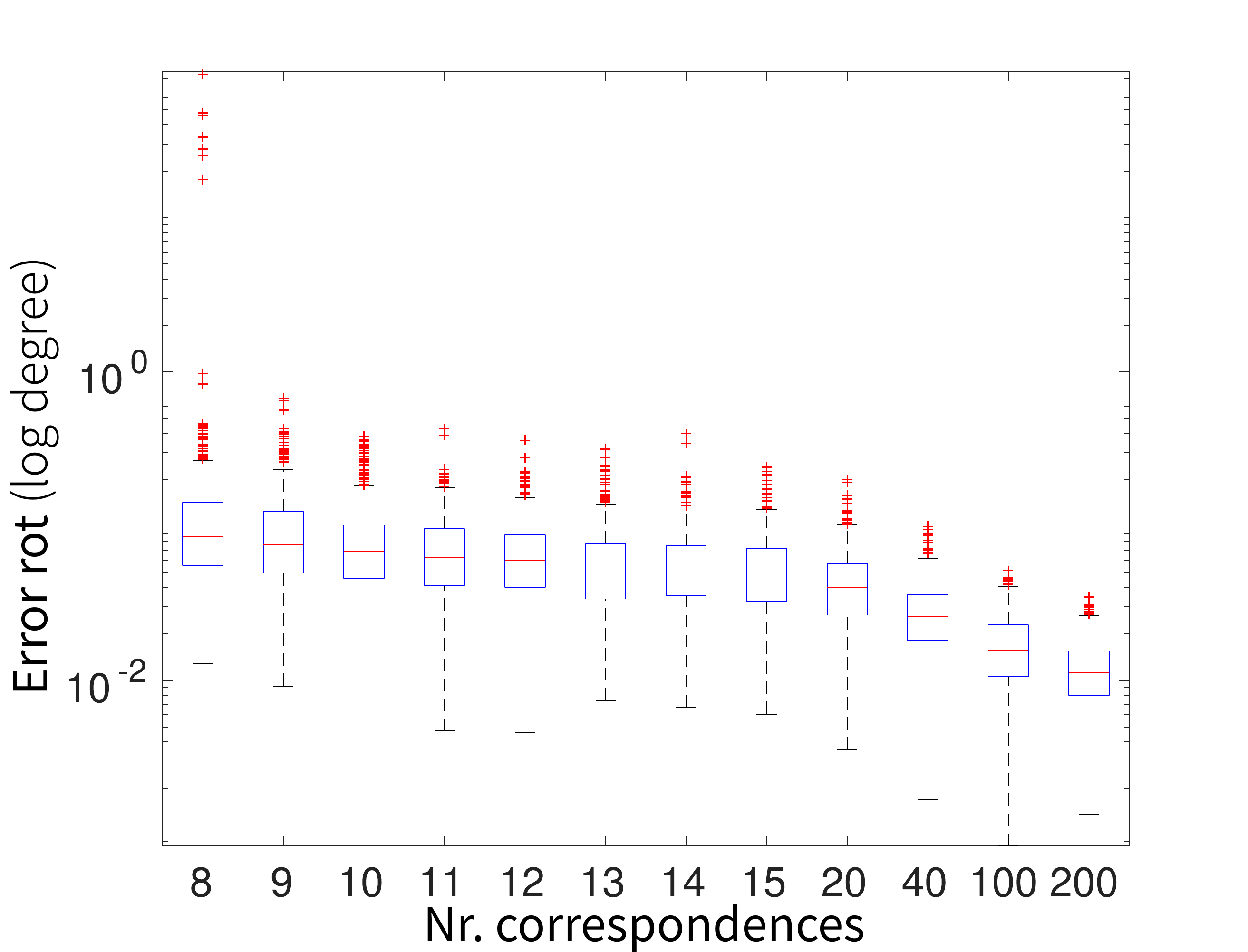}
        \caption{}
    \end{subfigure}
    \centering
    \begin{subfigure}[t]{0.24\textwidth}
        \centering
        \includegraphics[width=\figureSize]{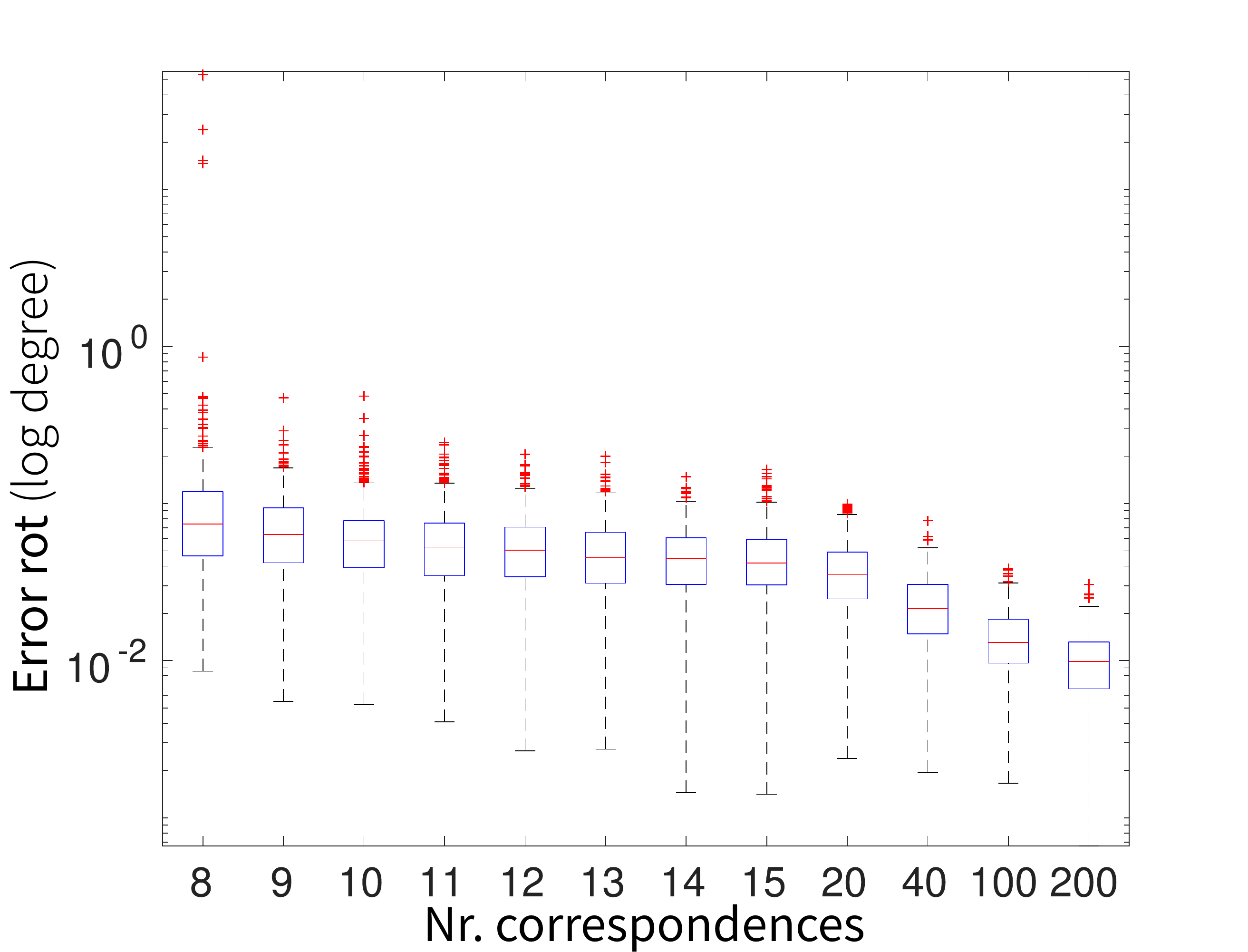}
        \caption{}
    \end{subfigure}
    \begin{subfigure}[t]{0.24\textwidth}
        \centering
        \includegraphics[width=\figureSize]{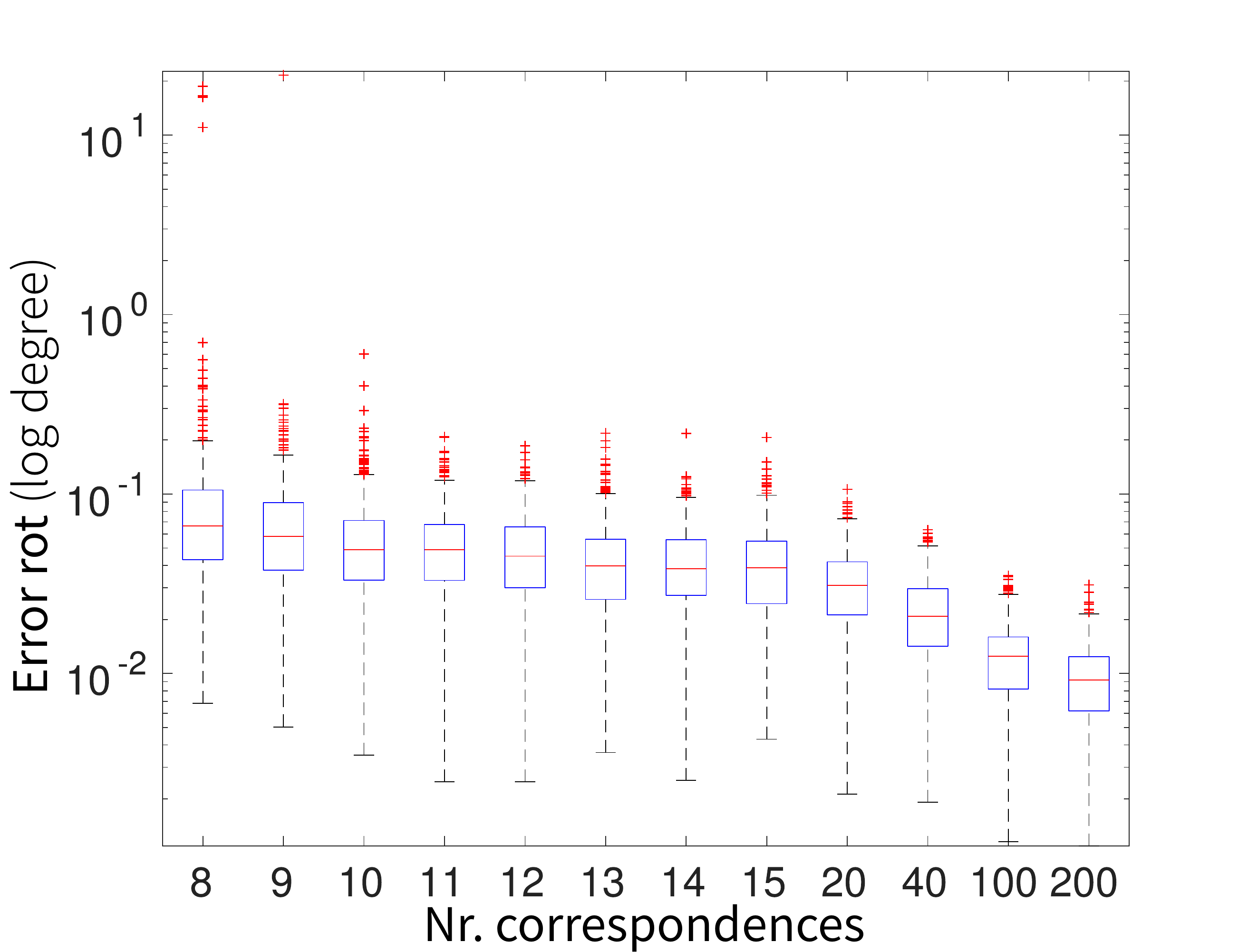}
        \caption{}
    \end{subfigure}
    \caption{Error in rotation for instances of the problem with fixed level of noise 0.5 \pixels and varying FoV (in degrees): 70 (a); 90 (b); 120 (c); and 150 (d).}
    \label{fig:FOVerror}

    \centering
    \begin{subfigure}[t]{0.24\textwidth}
        \centering
        \includegraphics[width=\figureSize]{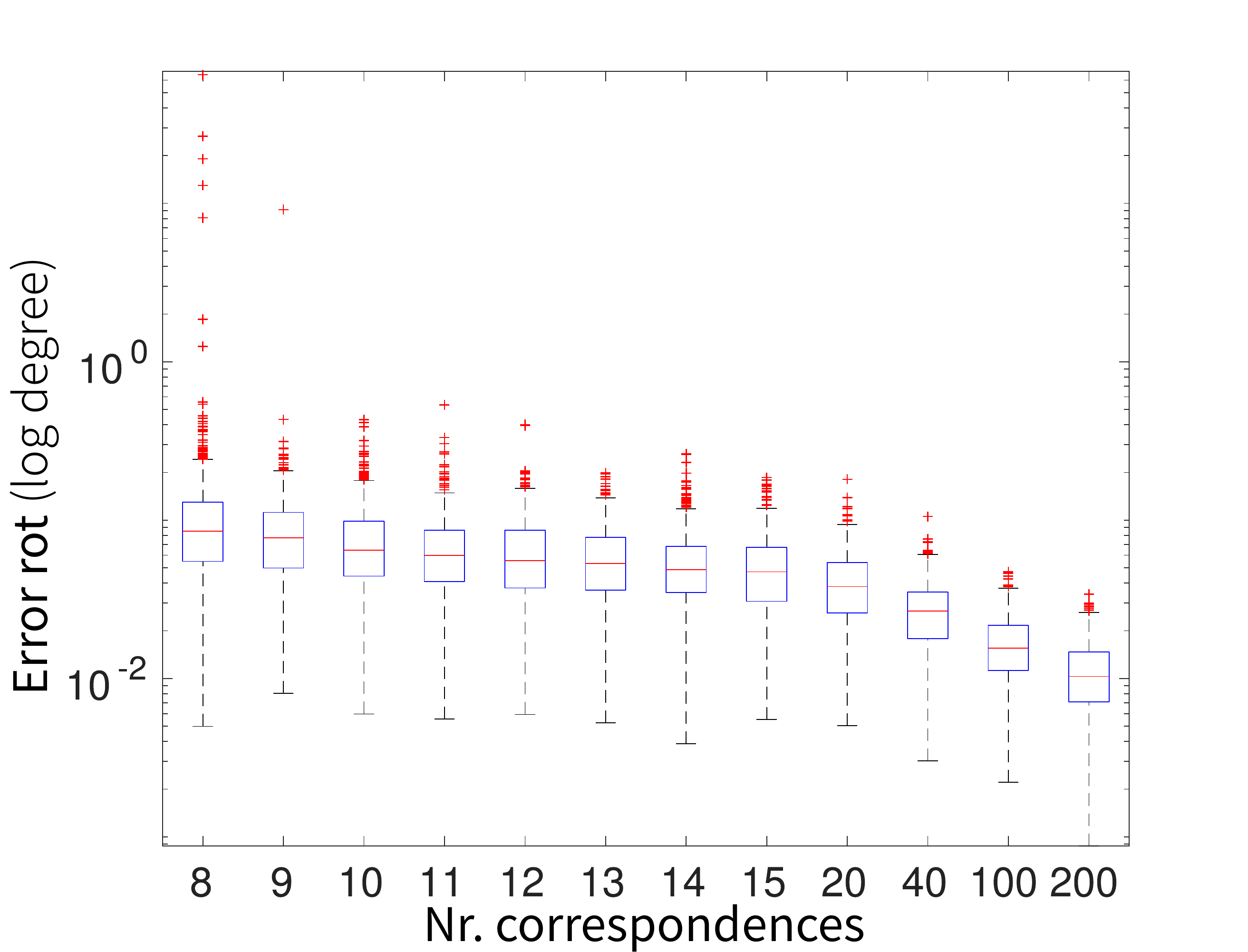}
        \caption{}
    \end{subfigure}
    \begin{subfigure}[t]{0.24\textwidth}
        \centering
        \includegraphics[width=\figureSize]{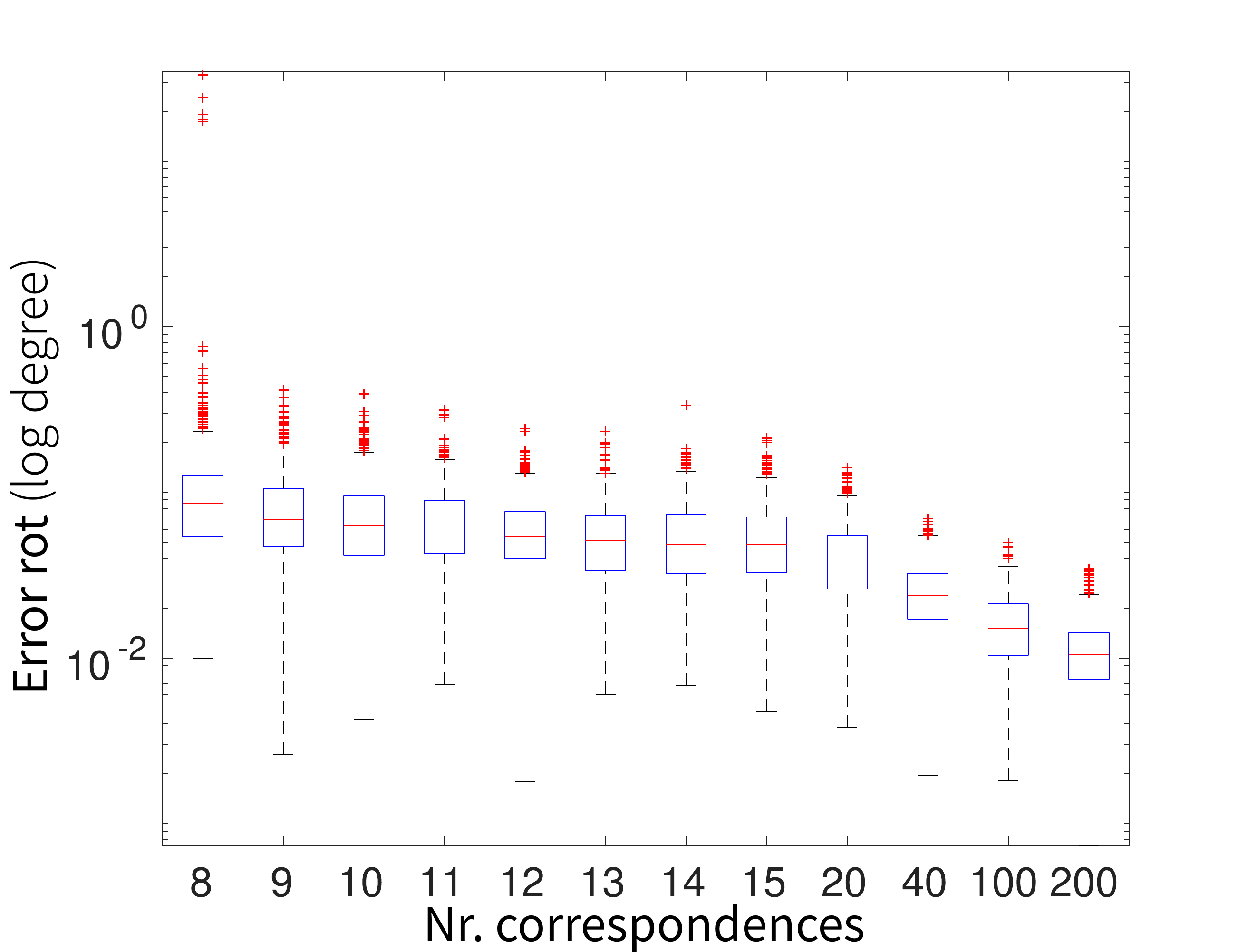}
        \caption{}
    \end{subfigure}
    \centering
    \begin{subfigure}[t]{0.24\textwidth}
        \centering
        \includegraphics[width=\figureSize]{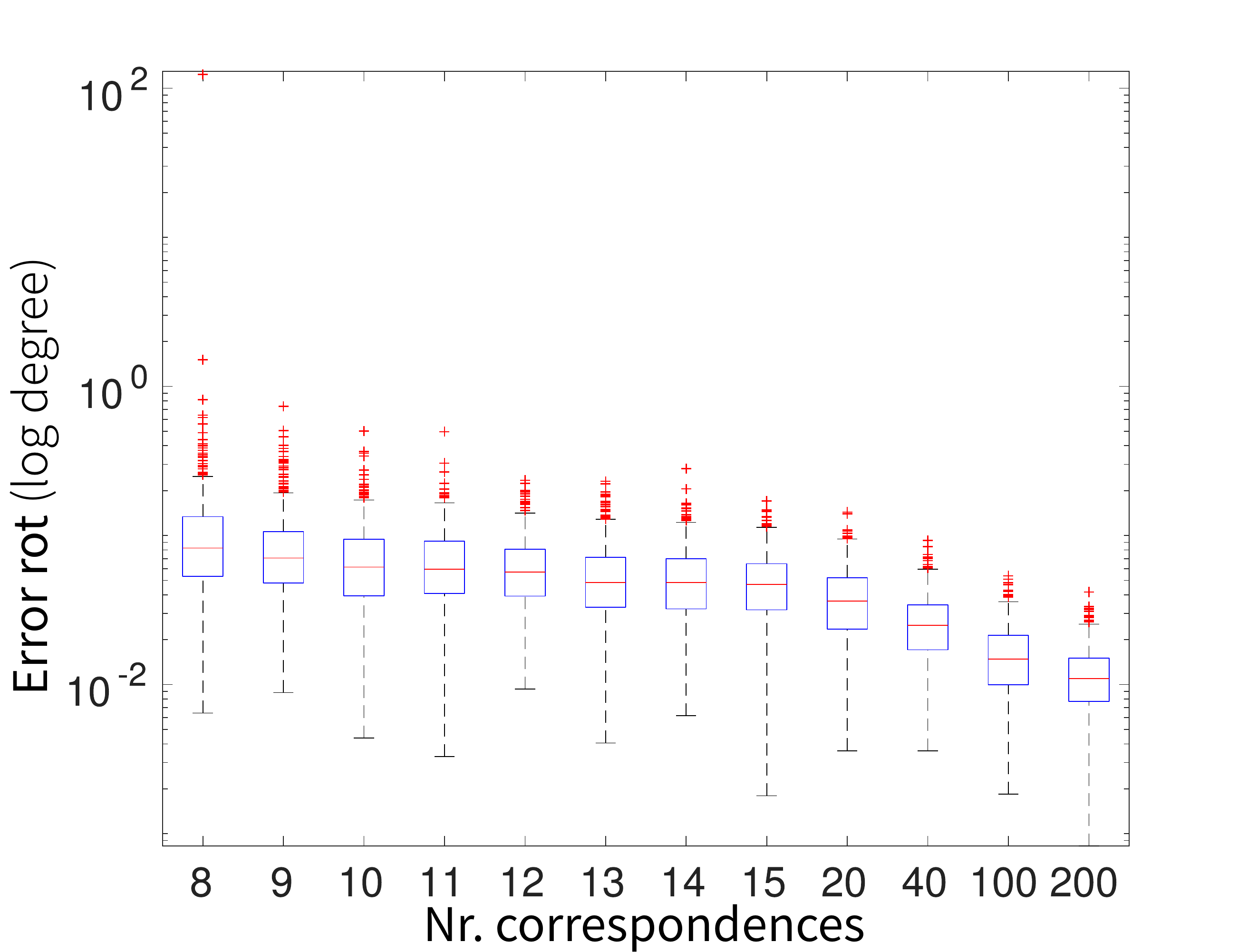}
        \caption{}
    \end{subfigure}
    \begin{subfigure}[t]{0.24\textwidth}
        \centering
        \includegraphics[width=\figureSize]{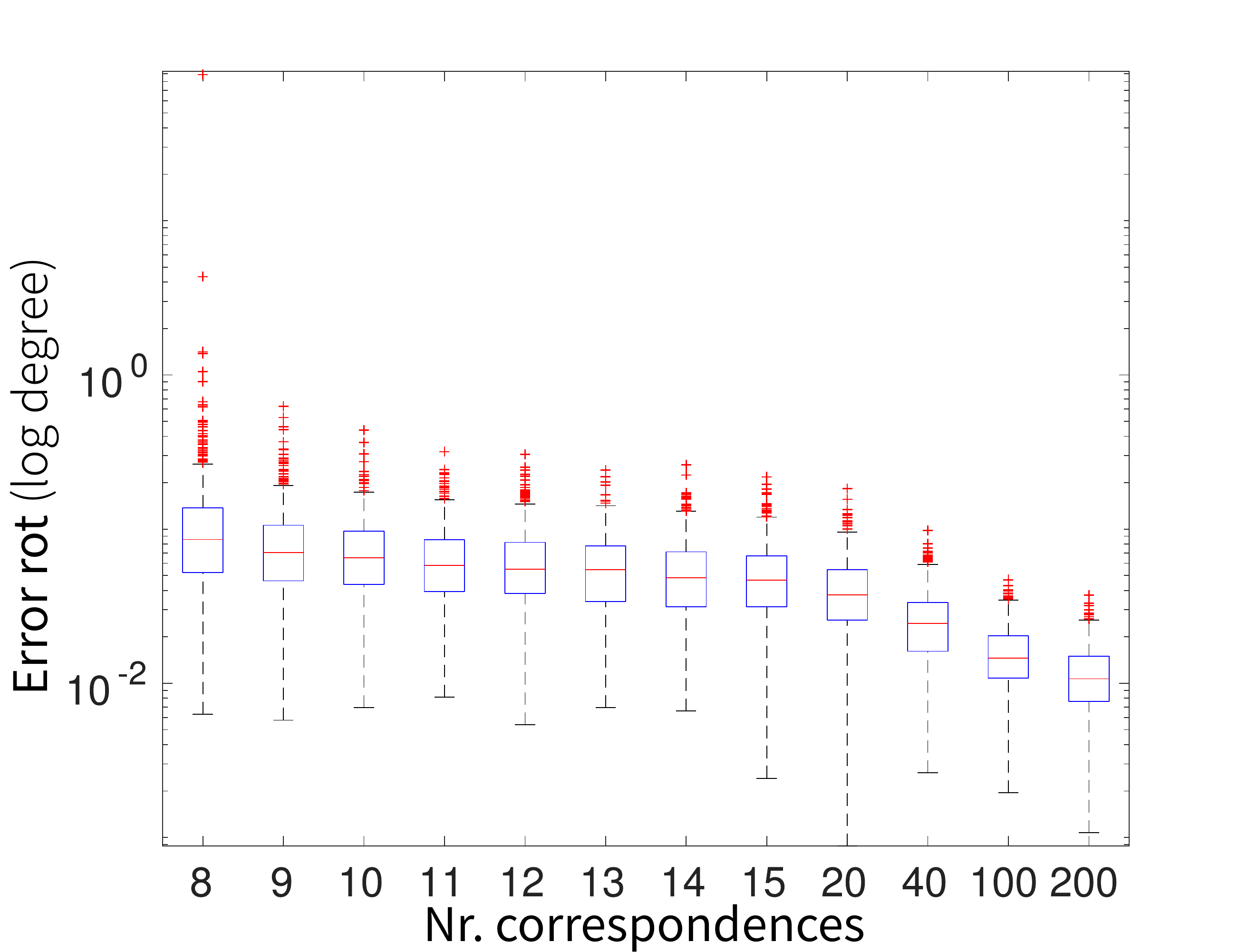}
        \caption{}
    \end{subfigure}
    \caption{Error in rotation for instances of the problem with fixed level of noise 0.5 \pixels and varying maximum parallax (in meters): 1.0 (a); 1.5 (b); 2.5 (c); and 4.0 (d).}
    \label{fig:PARALLAXerror}
\end{figure*}

\end{document}